\newtheorem{theorem}{Theorem}[section]
\newtheorem{proposition}[theorem]{Proposition}
\newtheorem{lemma}[theorem]{Lemma}
\newtheorem{assumption}[theorem]{Assumption}
\newtheorem{remark}[theorem]{Remark}
\begin{document}

\title{Feature Statistics with Uncertainty Help Adversarial Robustness}

\author{Ran Wang,~\IEEEmembership{Senior Member,~IEEE,} Xinlei Zhou, Meng Hu, Rihao Li, Wenhui Wu, and Yuheng Jia
\thanks{R. Wang, X. Zhou and M. Hu are with the School of Artificial Intelligence, Shenzhen University, Shenzhen 518060, China. (e-mail: wangran@szu.edu.cn, zhouxnli@szu.edu.cn, humeng@szu.edu.cn).}
\thanks{R. Li is with the School of Mathematical Sciences, Shenzhen University, Shenzhen 518060, China. (e-mail: 2450191005@mails.szu.edu.cn).}
\thanks{W. Wu is with the College of Electronic and Information Engineering, Shenzhen University, Shenzhen 518060, China. (e-mail: wuwenhui@szu.edu.cn).}
\thanks{Y. Jia is with the School of Computer Science and Engineering, Southeast University, Nanjing 210096, China (e-mail: yhjia@seu.edu.cn).}
\thanks{All source code will be made publicly available at \url{https://github.com/TechTrekkerZ/FSU} upon publication of the paper. The well-established models have already been uploaded to this link.}
}

\maketitle


\begin{abstract}
Despite the remarkable success of deep neural networks (DNNs), the security threat of adversarial attacks poses a significant challenge to the reliability of DNNs. In this paper, both theoretically and empirically, we discover a universal phenomenon that has been neglected in previous works, i.e., adversarial attacks tend to shift the distributions of feature statistics. Motivated by this finding, and by leveraging the advantages of uncertainty-aware stochastic methods in building robust models efficiently, we propose an uncertainty-driven feature statistics adjustment module for robustness enhancement, named {\it \textbf{F}eature \textbf{S}tatistics with \textbf{U}ncertainty} (FSU). It randomly resamples channel-wise feature means and standard deviations of examples from multivariate Gaussian distributions, which helps to reconstruct the perturbed examples and calibrate the shifted distributions. The calibration recovers some domain characteristics of the data for classification, thereby mitigating the influence of perturbations and weakening the ability of attacks to deceive models. The proposed FSU module has universal applicability in training, attacking, predicting, and fine-tuning, demonstrating impressive robustness enhancement ability at a trivial additional time cost. For example, by fine-tuning the well-established models with FSU, the state-of-the-art methods achieve up to 17.13\% and 34.82\% robustness improvement against powerful AA and CW attacks on benchmark datasets.
\end{abstract}

\begin{IEEEkeywords}
Adversarial robustness, uncertainty, feature statistics, distribution shift.
\end{IEEEkeywords}

\section{Introduction}

\IEEEPARstart{D}{eep} neural networks (DNNs) are vulnerable to adversarial attacks~\cite{adv_define}, which fool the deployed models into making false predictions by adding subtle perturbations to clean examples. Adversarial training~\cite{pgd,advtrain1} and its variants (e.g., TRADES~\cite{trades}, MART~\cite{mart}, MLCAT~\cite{yu2022Understanding}, F{\small IRE}~\cite{Picot2023adversarial}, AT+RiFT~\cite{Zhu2023Improving}, DKL~\cite{cui2024decoupled}, etc.) are the most popular defense methods to endow a DNN model with robustness. However, the overhead of adversarial training is usually several times or even dozens of times more expensive than natural training. Meanwhile, mitigating accuracy-robustness trade-off is a challenging task in adversarial training~\cite{Zhao2024Mitigating,Wei2024Revisiting}.

Recent studies have revealed that stochastic methods can enable the model to learn some uncertainty by introducing randomness into the networks, thereby improving model robustness efficiently. A group of works focuses on {\it \textbf{feature uncertainty}}. E.g., smoothed classifiers are obtained for certified robustness by applying a random smoothing technique \cite{Cohen2019Certified}. The input data is augmented by random transformation~\cite{random_xie} or Gaussian noise injection~\cite{add_noise_to_input} to resist perturbations. Random noises are injected into the hidden layer features with a hyperparameter diagonal variance matrix~\cite{rse_liu} or a fully parameterized variance matrix~\cite{wca,yang2023simple,yang2023weight}. The deep variational information bottleneck (VIB) method~\cite{Alemi2017Deep} is adopted in~\cite{Wang2024Adversarially} to learn uncertain feature distributions, together with a label embedding module, etc. Another group of works focuses on {\it \textbf{weight uncertainty}}. E.g., Adv-BNN~\cite{advbnn} treats the network parameters as random variables and learns the parameter distributions as in Bayesian neural networks (BNNs)~\cite{Carbone2020Robustness}. Parametric noise injection (PNI) method~\cite{pni_he} designs a learnable noise injection strategy for the weights of each hidden layer. Learn2Perturb (L2P) method~\cite{l2p} adopts alternating back-propagation to consecutively train the noise parameters. Adversarial weight perturbation (AWP) method~\cite{wu2020Adversarial} proposes a double-perturbation mechanism that adversarially perturbs both inputs and weights, etc.

The above stochastic methods, whether focusing on feature uncertainty or weight uncertainty, introduce randomness to individual examples or nodes. Many of them have difficulties capturing class-level or domain-level characteristics, and subsequently fail against powerful attacks, e.g., \cite{cw,Croce2020Reliable}. 

\begin{figure}[t]
\centering
\vspace{-0.2cm}
\subfloat[\small Natural]{\includegraphics[width=0.45\linewidth]{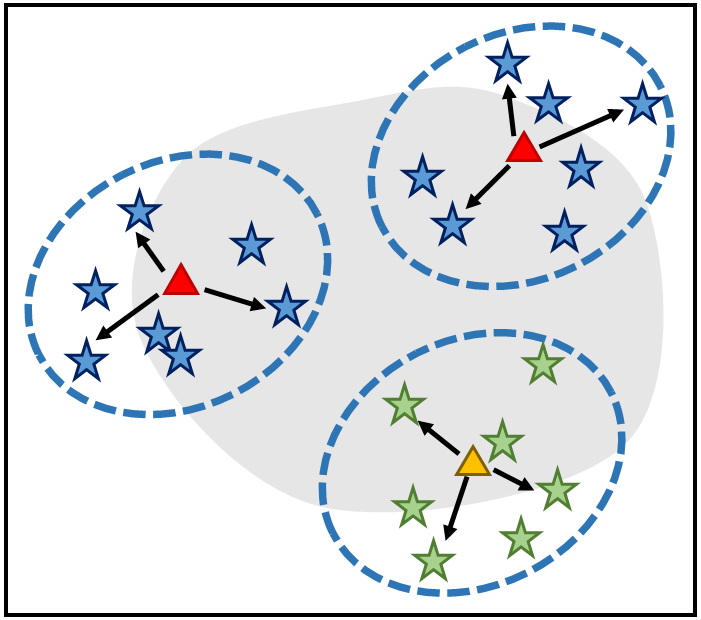}}\quad
\subfloat[\small Adversarial]{\includegraphics[width=0.45\linewidth]{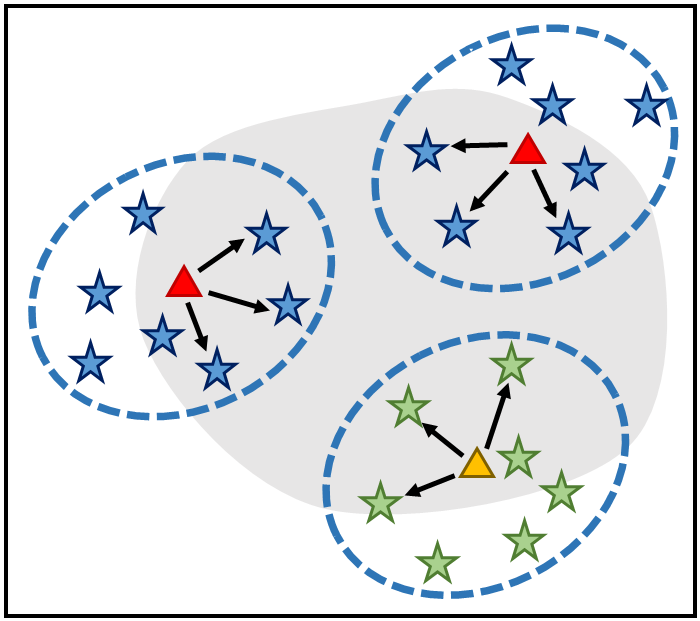}}\\
\vspace{-0.2cm}
\subfloat{\includegraphics[width=0.95\linewidth]{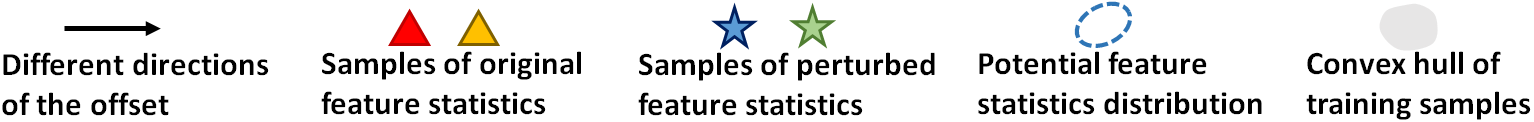}}
\vspace{-0.1cm}
\caption{Influences of natural perturbations and adversarial perturbations on feature statistics. Natural perturbation can be in any direction, while adversarial perturbation usually towards the misclassification direction.}
\vspace{-0.3cm}
\label{fig.motivation}
\end{figure}

It is argued in~\cite{dsu_li} that {\it \textbf{feature statistics (such as mean and variance)}} carry domain characteristics of training data, which can help improve out-of-distribution generalization. Under the assumption that the feature mean and standard deviation follow a multivariate Gaussian distribution, random noises are sampled to model their uncertainty, in order to alleviate the perturbations against potential domain shifts. However, this work only focuses on the domain shifts caused by natural perturbations, such as style shift, synthetic-to-real gap, scenes change, etc., while it pays no attention to adversarial perturbations. As shown in Fig.~\ref{fig.motivation}, the offset of feature statistics caused by natural perturbations can be in any directions within the distribution. In contrast, the offset caused by adversarial perturbations is typically directed toward the misclassification boundary to deceive the model. Inspired by this idea, we are encouraged to explore the distribution shifts of feature statistics caused by adversarial attacks, and promote model robustness by using {\it \textbf{F}eature \textbf{S}tatistics with \textbf{U}ncertainty} ({\bf FSU}). Ideally, the perturbed examples can be reconstructed by resampling their feature means and standard deviations from Gaussian distributions. Batch-level statistical information is used to estimate the distribution parameters, such that some domain characteristics for classification can be recovered. To summarize, our proposed method has the following merits:
\begin{itemize}
\item Both theoretically and empirically, we discover an important phenomenon that adversarial attacks tend to change the variance of feature statistics' distribution, thereby causing some potential distribution shifts. 
\item We propose an FSU module, which calibrates the distribution shifts and reconstructs perturbed examples in latent space by resampling their feature means and standard deviations based on batch-level statistical information. Since FSU has no parameters to learn, it can be employed in both training and testing phases.
\item Since the FSU module does not rely on adversarial information, it can be applied in both natural training and adversarial training. By introducing some directional information at the feature statistics-level, it can also be used for fine-tuning any well-established models.
\item Extensive experiments under various settings demonstrate the universal applicability of FSU for robustness enhancement, e.g., under powerful AA and CW attacks, FSU enables state-of-the-art methods to achieve up to {\bf 17.13}\% and {\bf 34.82}\% robustness improvement on benchmark datasets.
\end{itemize}

The remainder of this paper is organized as follows: Section~\ref{sec.related} provides a brief review of the related work. Section~\ref{sec.theory} makes some theoretical and empirical analyses on the distribution shifts of feature statistics caused by adversarial perturbations. Section~\ref{sec.FSU} proposes the FSU method in detail with some theoretical analyses. Section~\ref{sec.exp} presents extensive experiments to demonstrate the general effectiveness of the proposed method. Finally, Section~\ref{sec.conclusion} concludes this paper.

\section{Related Work}\label{sec.related}
In this section, we will present some related work on adversarial attack, adversarial robustness, and uncertainty-aware defense mechanism.

\subsection{Adversarial Attack}\label{subsec.attack}
Given a dataset $\mathbb{D}=\{(\mathbf{x}_i,y_i)\}_{i=1}^{N}\subset\mathbf{X}\times\mathbf{Y}$, where $\mathbf{x}_i=(x_{i1},\ldots,x_{id})\in\mathbf{X}=\mathbf{R}^d$ is the $i$-th example and $y_i\in\mathbf{Y}=\{1,2,.\ldots,m\}$ is the ground truth label of $\mathbf{x}_i$. Considering a classification model $F_{\theta}:\mathbf{X}\rightarrow\mathbf{Y}$, the essence of generating adversarial example for any $(\mathbf{x},y)\in\mathbb{D}$ through $F_{\theta}$ is to solve the following optimization problem:
\begin{equation}
\max\limits_{\delta} \ \mathcal{L}(F_{\theta}(\mathbf{x}+\delta),y),\ \mathrm{s.t.}\ \|\delta\|<\epsilon,
\end{equation}
such that $F_\theta(\mathbf{x}+\delta) \neq F_\theta(\mathbf{x})$, where $\delta$ is the perturbation generated for $\mathbf{x}$, $\mathbf{x}'=\mathbf{x}+\delta$ is the adversarial example, $\theta$ is the network parameter, $\mathcal{L}(\cdot)$ is the loss function, $\epsilon$ is the upper bound limit of $\delta$, and $\|\cdot\|$ is usually taken as $l_0$, $l_2$ or $l_{\infty}$ norm. 

According to whether the attackers can access the model structure and parameters, adversarial attacks can be categorized into white-box ones and black-box ones. More technically, according to the attacking strategy, adversarial attacks can be summarized into gradient-based attacks, optimization-based attacks, and adaptive ensemble attacks.

\subsubsection{Gradient-Based Attacks} This type of method uses the gradient information of the model to generate adversarial perturbations. The fast gradient sign method (FGSM)~\cite{goodfellow2014explaining} is a foundational work that calculates the gradient sign of the loss function with respect to the input sample in a single step. FGSM is efficient, but its success rate is limited. Subsequent studies improved FGSM through iterative optimization. E.g., the basic iterative method (BIM)~\cite{bim} extends FGSM by applying multi-step small perturbations; the momentum iterative method (MIM)~\cite{mim} introduces a momentum term to accelerate convergence and avoid local optimality; the projected gradient descent (PGD) method~\cite{pgd} applies random initialization and iterative projection, which is widely used as a strong baseline attack. Gradient-based methods rely on first-order gradient information and usually perform well in white-box settings with high computational efficiency.

\subsubsection{Optimization-Based Attacks} This type of method treats the perturbation generation as a constrained optimization problem and finds the optimal perturbation through a numerical algorithm. A typical method is the CW attack~\cite{cw}, i.e., by designing an alternative loss function (such as DLR loss) and using optimizers such as Adam, the perturbation is minimized under $l_2$-norm constraint while ensuring classification errors. CW attack can generate visually imperceptible perturbations and has very strong penetration against existing defense methods, but it requires a lot of forward propagations and gradient calculations, which is very time-consuming. Besides, evolutionary algorithms have been widely used as optimization-based black-box attacks due to their independence of the gradient information, such as the single-pixel attack (SPA)~\cite{Narodytska2016simple}, one-pixel attack (OPA)~\cite{onepixel} and sparse attack~\cite{Williams2023black}. 

\subsubsection{Adaptive Ensemble Attacks} In order to comprehensively evaluate the robustness of models, adaptive ensemble attack has become the mainstream method. The autoattack (AA) proposed by Croce et al.~\cite{Croce2020Reliable} is a representative, which integrates three white-box attacks (i.e., adaptive PGD-CE, adaptive PGD-DLR, fast adaptive boundary (FAB) attack~\cite{Croce2020Minimally}) and one black-box attack (i.e., square attack~\cite{Andriushchenko2020Square}). Due to its ability to automatically schedule different attacking strategies, AA has become a standard tool for robustness evaluation.

\subsection{Adversarial Robustness}\label{subsec.robustness}
For defending against adversarial attacks, the essence is to improve the robustness of model $F$ by optimizing $\theta$ or by reconstructing $\mathbf{x}+\delta$ such that $F_\theta(\mathbf{x}+\delta) = F_\theta(\mathbf{x})=y$. 

There exist different types of methods for improving model robustness: input preprocessing, adversarial training, structure optimization, adversarial example detection, distillation, etc. Among them, adversarial training is the most mainstream method, which adds adversarial examples into the training set and forces the model to maintain correct predictions under perturbations. The theoretical framework of adversarial training can be traced back to the FGSM attack~\cite{goodfellow2014explaining}, which verified that introducing FGSM-perturbed examples into the training set can effectively improve model robustness. Madry et al.~\cite{pgd} further established the standard adversarial training paradigm, which uses PGD attack as a benchmark to generate adversarial examples (PGD-AT), and formally defined the following bi-level optimization problem:

\begin{equation}
\min\limits_{\theta} \mathbb{E}_{\left(\mathbf{x},y\right)\sim \mathbb{D}} \max\limits_{||\delta||_p<\epsilon} \mathcal{L}(F_{\theta}(\mathbf{x}+\delta),y).
\end{equation}
By improving the loss function or the training procedures, standard PGD-AT has evolved into a series of more effective methods. E.g., TRADES~\cite{trades} introduces the KL divergence regularization to align the prediction probabilities of clean and perturbed examples. F{\small IRE}~\cite{Picot2023adversarial} designs a new Fisher-Rao regularization for the cross-entropy loss based on the geodesic distance between natural and perturbed features. MLCAT~\cite{yu2022Understanding} learns the large-loss data as usual and adopts additional measures to increase the loss of the small-loss data. AT+RiFT~\cite{Zhu2023Improving} exploits the redundant capacity for robustness by fine-tuning the adversarially trained model on its non-robust-critical module. RCAT~\cite{Angioni2025Robustness} proposes a robustness-congruent adversarial training method to obtain consistent estimators, etc.

\subsection{Uncertainty-Aware Defense Mechanisms}\label{subsec.uncertainty}
Essentially, uncertainty is closely related to noise~\cite{Nix1994Estimating}. When the noise comes from the natural world, it is reasonable to assume that it follows a Gaussian distribution $\mathcal{N}(\mu,\sigma^2)$, where the standard deviation $\sigma$ directly reflects its uncertainty. By introducing uncertainty to different modules of network, stochastic methods can enhance model robustness effectively and efficiently.

\subsubsection{Feature Uncertainty} A group of work focuses on feature uncertainty. E.g., in~\cite{random_xie} and~\cite{add_noise_to_input}, random transformation or Gaussian noise injection is applied to input data for robustness enhancement. In~\cite{rse_liu}, random noise $\varepsilon \thicksim \mathcal{N}(0,\Sigma)$ is injected into the hidden features of each convolutional layer, where $\Sigma$ is a diagonal hyperparameter matrix. In~\cite{l2p}, $\alpha_l\cdot\varepsilon_l$ is added to the output features of the $l$-th hidden layer, where $\varepsilon_l \thicksim \mathcal{N}(0,\mathbf{I})$ and $\mathbf{I}$ is an identity matrix. This work uses an alternate back-propagation training algorithm to update parameters, solving the problem that the noise variance converges to zero during training. 
Besides, in~\cite{wca}, $\varepsilon \thicksim \mathcal{N}(0,\Sigma)$ is added to the input features of the last layer, where $\Sigma$ is a fully parameterized matrix rather than a diagonal matrix. In this work, a weight-covariance alignment term is added to the cross-entropy loss, which encourages the alignment of the weight vector of the classification layer and the eigenvector of the covariance matrix of injected noise. It has been further improved in \cite{yang2023simple} and \cite{yang2023weight} by some regularization techniques.  Besides, smoothed classifiers are obtained for certified robustness by applying a random smoothing technique \cite{Cohen2019Certified}; the deep VIB technique~\cite{Alemi2017Deep} is adopted in~\cite{Wang2024Adversarially} to learn uncertain feature representations, etc. 

\subsubsection{Weight Uncertainty} Another group of work focuses on weight uncertainty. E.g., Adv-BNN~\cite{advbnn} assumes that network parameters are random variables rather than fixed values, and learns the parameter distributions as in BNNs~\cite{Carbone2020Robustness}. GradDiv~\cite{Lee2023GradDiv} further adds a gradient diversity regularization term to Adv-BNN against attacks using proxy gradients. In PNI method~\cite{pni_he}, $\alpha_l\cdot\varepsilon_l$ is added to the weights of the $l$-th hidden layer, where $\alpha_l$ is a noise intensity parameter, $\varepsilon_l\thicksim\mathcal{N}(0,\Sigma_l)$ is the random noise and $\Sigma_l$ is a diagonal matrix with the diagonal elements being the variance of the weights. In L2P method~\cite{l2p}, an alternating back-propagation process is proposed to consecutively train the noise parameters. In AWP method~\cite{wu2020Adversarial}, the weights are perturbed along the direction that the adversarial loss increases dramatically, etc.

\subsubsection{Out-of-Distribution Generalization} The above stochastic methods, whether focusing on feature uncertainty or weight uncertainty, try to introduce noises into individual samples or nodes. It is difficult for them to capture class-level or domain-level characteristics. They usually show effectiveness against traditional gradient-based attacks, but fail against more powerful ones such as CW and AA. In~\cite{dsu_li}, it is argued that statistical information carries domain characteristics of training data. Modeling the uncertainty of data statistics (such as mean and variance) can improve out-of-distribution generalization and thus alleviate the perturbations caused by potential domain shifts. However, this work only considers the distribution shifts caused by natural perturbations between training and testing data, while it pays no attention to the influence of adversarial perturbations. 

\section{Theoretical and Empirical Findings}\label{sec.theory}
In this section, we will present some theoretical and empirical findings on the distribution shifts of feature statistics caused by adversarial attacks.

\subsection{Basic Assumption}\label{subsec.assumption}

Following the notations in Section~\ref{subsec.attack}, we assume $F_{\theta}:\mathbf{X}\rightarrow\mathbf{Y}$ to be a classification model with certain generalization capability, i.e., it has already established prediction abilities on clean examples. Suppose the set of clean examples $\mathbf{x}_1,\mathbf{x}_2,\ldots,\mathbf{x}_N$ are perturbed into adversarial examples $\mathbf{x}_1^{\prime},\mathbf{x}_2^{\prime},\ldots,\mathbf{x}_N^{\prime}$ through $F_{\theta}$. We denote $\mu_i$ and $\mu_i^{\prime}$ as the feature means of $\mathbf{x}_i$ and $\mathbf{x}_i^{\prime}$, and denote $\sigma_i$ and $\sigma_i^{\prime}$ as the feature standard deviations of $\mathbf{x}_i$ and $\mathbf{x}_i^{\prime}$, i.e.,
\begin{equation}
\left\{
\begin{array}{lll}
\mu_i&=\frac{1}{d}\sum\nolimits_{j=1}^{d}x_{ij},\\
\mu_i^{\prime}&=\frac{1}{d}\sum\nolimits_{j=1}^{d}x_{ij}^{\prime},\\
\sigma_i&=\sqrt{\frac{1}{d}\sum\nolimits_{j=1}^{d}(x_{ij}-\mu_i)^2},\\
\sigma_i^{\prime}&=\sqrt{\frac{1}{d}\sum\nolimits_{j=1}^{d}(x_{ij}^{\prime}-\mu_i^{\prime})^2}.
\end{array}\right.
\end{equation}

\begin{lemma}\label{lamma1}
Let $\Omega$ be the convex hull of examples $\mathbf{x}_1,\mathbf{x}_2,\ldots,\mathbf{x}_N$, then $\Omega$ can be covered by the weighted average of $\mathbf{x}_1,\mathbf{x}_2,\ldots,\mathbf{x}_N$. That is, $\forall\mathbf{x}\in\Omega$, we can find a weight vector $\mathbf{w}=[w_1,w_2,\ldots,w_N],\ w_i\in[0,1],\ \sum_{i=1}^Nw_i=1$ such that $\mathbf{x}=\sum_{i=1}^N w_i\mathbf{x}_i$.
\end{lemma}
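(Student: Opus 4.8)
The plan is to recognize the statement as the standard characterization of the convex hull of a finite point set in terms of convex combinations. Writing $C = \{\sum_{i=1}^N w_i \mathbf{x}_i : w_i \in [0,1],\ \sum_{i=1}^N w_i = 1\}$ for the set of all such convex combinations, the lemma asserts precisely that $\Omega \subseteq C$, so that is the inclusion I would establish. Taking the convex hull $\Omega$ to be the intersection of all convex sets containing $\mathbf{x}_1,\ldots,\mathbf{x}_N$ (equivalently, the smallest such set), it suffices to show that $C$ is itself a convex set containing every $\mathbf{x}_i$.

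Membership of each $\mathbf{x}_i$ in $C$ is immediate by choosing $w_i = 1$ and all other weights zero. To see that $C$ is convex, I would take two arbitrary points $\mathbf{p} = \sum_i a_i \mathbf{x}_i$ and $\mathbf{q} = \sum_i b_i \mathbf{x}_i$ of $C$ together with a scalar $\lambda \in [0,1]$, then verify that $\lambda\mathbf{p} + (1-\lambda)\mathbf{q} = \sum_i (\lambda a_i + (1-\lambda) b_i)\mathbf{x}_i$ has coefficients that are again nonnegative and sum to one, hence lies in $C$. Since $\Omega$ is contained in every convex set that contains the points, and $C$ is one such set, we conclude $\Omega \subseteq C$, which is exactly the claim.

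For completeness I would also record the reverse inclusion $C \subseteq \Omega$, which upgrades the result to the full identity $\Omega = C$. This follows by induction on the number of points carrying nonzero weight: a single point lies in $\Omega$ trivially, and any combination $\sum_{i=1}^{k+1} w_i \mathbf{x}_i$ with $w_{k+1} \neq 1$ can be rewritten as $s\,\bigl(\sum_{i=1}^{k} (w_i/s)\,\mathbf{x}_i\bigr) + w_{k+1}\mathbf{x}_{k+1}$ with $s = 1 - w_{k+1} > 0$, exhibiting it as a convex combination of $\mathbf{x}_{k+1}$ and a $k$-point combination that lies in $\Omega$ by the inductive hypothesis; convexity of $\Omega$ then closes the step.

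Since this is nothing more than the textbook equivalence between the two descriptions of a convex hull, I anticipate no genuine obstacle. The only technical care needed is to treat the degenerate case $w_{k+1} = 1$ separately in the induction so that the normalization by $s$ never divides by zero; everything else is routine bookkeeping with the weights.
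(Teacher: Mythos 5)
Your proof is correct, but it follows a different path from the paper's. You take the ``intersection of all convex sets'' definition of $\Omega$ and verify that the set $C$ of convex combinations is itself a convex set containing every $\mathbf{x}_i$, so the minimality of $\Omega$ gives $\Omega\subseteq C$ immediately; you then add the converse inclusion by induction on the support of the weight vector. The paper instead argues constructively and geometrically: starting from three points in the plane, it takes an arbitrary $\mathbf{x}\in\Omega(\mathbf{x}_1,\mathbf{x}_2,\mathbf{x}_3)$, draws the line through $\mathbf{x}$ and $\mathbf{x}_3$ to find its intersection $\widetilde{\mathbf{x}}=\alpha\mathbf{x}_1+(1-\alpha)\mathbf{x}_2$ with the segment $\mathbf{x}_1\mathbf{x}_2$, writes $\mathbf{x}=\beta\widetilde{\mathbf{x}}+(1-\beta)\mathbf{x}_3$, and reads off explicit weights $w_1=\beta\alpha$, $w_2=\beta(1-\alpha)$, $w_3=1-\beta$ summing to one; the general case is then asserted by induction on the number of points. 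The paper's route produces the weights explicitly for a given point (which matches how the lemma is later used, where adversarial examples are represented by concrete weight matrices $\mathbf{W}$), while yours is shorter, dimension-free, and avoids the slightly informal ``easily generalized'' step, at the cost of being nonconstructive in the forward direction. Your careful handling of the degenerate case $w_{k+1}=1$ in the reverse inclusion is a nicety the paper does not need, since it only ever uses the direction $\Omega\subseteq C$.
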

\begin{proof}
Please refer to Appendix~\ref{append.lemma1}.
\end{proof}

Lemma~\ref{lamma1} means, within the convex hull of a set of examples, each point can be represented as a linear combination of the set of examples. 

\begin{assumption}\label{assumption1}
Given a set of clean examples $\mathbf{x}_1,\mathbf{x}_2,\ldots,\mathbf{x}_N$ and a classification model $F_{\theta}$, the corresponding adversarial examples $\mathbf{x}_1^{\prime},\mathbf{x}_2^{\prime},\ldots,\mathbf{x}_N^{\prime}$ will be in the convex hull of the clean examples, i.e., $\mathbf{x}_1^{\prime},\mathbf{x}_2^{\prime},\ldots,\mathbf{x}_N^{\prime}\in\Omega(\mathbf{x}_1,\mathbf{x}_2,\ldots,\mathbf{x}_N)$.
\end{assumption}

Assumption~\ref{assumption1} means, according to Lemma~\ref{lamma1}, each adversarial example can be represented as a linear combination of the clean examples, i.e., $\mathbf{x}_i^{\prime}=\sum_{j=1}^N w_{ij}\mathbf{x}_j$, where $w_{ij}\in[0,1),\ \sum_{j=1}^Nw_{ij}=1,\ i,j=1,\ldots,N$. This assumption intuitively holds with a high probability. As shown in Fig.~\ref{fig.withinConvexHull}, the attack on an example is usually towards the decision boundary with another class. When the model $F_{\theta}$ has already established classification abilities on clean examples, such directions are generally clear. Furthermore, the perturbation $\delta$ will be constrained by the upper bound $\epsilon$ to ensure imperceptibility. In such a case, the adversarial example is unlikely to cross the range of the wrong-class and move outside the convex hull. Therefore, we can assume that it is still in the convex hull with a high probability.

\begin{figure}[t] 
\centering
\includegraphics[width=0.8\linewidth]{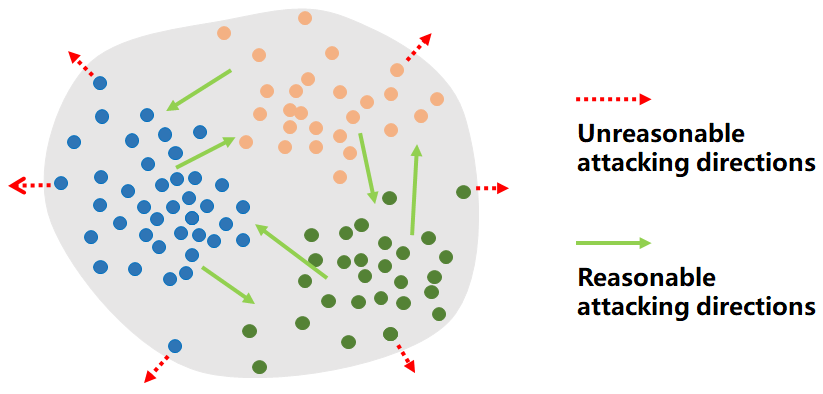}
\caption{An intuitive illustration for {\bf Assumption}~\ref{assumption1}.}
\label{fig.withinConvexHull}
\end{figure}

\subsection{Theoretical Discovery}\label{subsec.theories}

\begin{lemma}\label{lamma2}
Let $\mathbf{W}=[\mathbf{w}_{1};\mathbf{w}_{2};\ldots;\mathbf{w}_{N}]\in \mathcal{R}^{N\times N}$ be a weight matrix that satisfies
\begin{equation}\label{eq.weightCon}
\begin{array}{llll}
&\mathbf{w}_{i}=[w_{i,1},w_{i,2},\ldots,w_{i,N}],\ i=1,\ldots,N,\\
&w_{i,j}\in[0,1),\ i=1,\ldots,N,\ j=1,\ldots,N,\\
&\sum_{j=1}^{N}w_{i,j}=1,\ i=1,\ldots,N,\\
&\sum_{i=1}^{N}\sum_{j=1}^{N}w_{i,j}=N.\\
\end{array}
\end{equation} 
Denote $\mathbb{W}=\{\mathbf{W}\}$ as the set that contains all the weight matrices satisfying (\ref{eq.weightCon}), then $\mathbb{W}$ is closed with respect to multiplication.
\end{lemma}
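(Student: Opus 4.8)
The plan is to take two arbitrary matrices $\mathbf{W}=(w_{i,j})$ and $\mathbf{V}=(v_{i,j})$ from $\mathbb{W}$, form their product $\mathbf{P}=\mathbf{W}\mathbf{V}$ with entries $p_{i,k}=\sum_{j=1}^{N}w_{i,j}v_{j,k}$, and verify that $\mathbf{P}$ satisfies every defining condition in~(\ref{eq.weightCon}). Closure under multiplication then follows by definition of $\mathbb{W}$.

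First I would dispose of the two easy algebraic constraints. Non-negativity $p_{i,k}\ge 0$ is immediate, since each $p_{i,k}$ is a sum of products of non-negative numbers. For the row-sum condition I would interchange the order of summation:
\begin{equation*}
\sum_{k=1}^{N}p_{i,k}=\sum_{k=1}^{N}\sum_{j=1}^{N}w_{i,j}v_{j,k}=\sum_{j=1}^{N}w_{i,j}\Big(\sum_{k=1}^{N}v_{j,k}\Big)=\sum_{j=1}^{N}w_{i,j}=1,
\end{equation*}
using first that every row of $\mathbf{V}$ sums to $1$ and then that every row of $\mathbf{W}$ sums to $1$. The global constraint $\sum_{i=1}^{N}\sum_{k=1}^{N}p_{i,k}=N$ is then automatic, being the sum of the $N$ unit row-sums just computed.

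The only step that requires care is the strict upper bound $p_{i,k}<1$. Here I would observe that, for fixed $i$ and $k$, the value $p_{i,k}=\sum_{j=1}^{N}w_{i,j}v_{j,k}$ is a convex combination of the column entries $v_{1,k},\ldots,v_{N,k}$, since the weights $w_{i,j}$ are non-negative and sum to $1$. Because each $v_{j,k}<1$ and there are only finitely many of them, their maximum $M=\max_{j}v_{j,k}$ itself satisfies $M<1$, so that $p_{i,k}\le\sum_{j=1}^{N}w_{i,j}M=M<1$. Together with non-negativity this yields $p_{i,k}\in[0,1)$, as required.

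I expect this last inequality to be the only genuine obstacle, and it is mild: one should resist arguing termwise that $w_{i,j}v_{j,k}<w_{i,j}$ and then summing, since a sum of strict inequalities behaves delicately when some weights vanish. Bounding instead by the finite maximum $M<1$ sidesteps this issue cleanly and is exactly the place where the finiteness of $N$ is used. With all four conditions of~(\ref{eq.weightCon}) verified for $\mathbf{P}$, we conclude $\mathbf{P}=\mathbf{W}\mathbf{V}\in\mathbb{W}$, hence $\mathbb{W}$ is closed under multiplication.
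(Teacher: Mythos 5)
Your proposal is correct and follows essentially the same route as the paper: direct verification that the product matrix satisfies each condition of~(\ref{eq.weightCon}), with the row sums handled by interchanging the order of summation. The only difference is that you explicitly justify the strict bound $p_{i,k}<1$ via the convex-combination estimate $p_{i,k}\le\max_{j}v_{j,k}<1$, a detail the paper's proof simply labels as obvious, so your write-up is if anything slightly more complete.
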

\begin{proof}
Please refer to Appendix~\ref{append.lemma2}.
\end{proof}

Lemma~\ref{lamma2} states that after two attacking steps $\mathbf{A}$ and $\mathbf{B}$, the perturbed examples will remain in the convex hull of the clean examples.

\begin{lemma}\label{lamma3}
Let $\mathbf{W}^1,\mathbf{W}^2,\ldots,\mathbf{W}^t\in\mathbb{W}$ and $\mathbf{W}^{(t)}=\mathbf{W}^t\ldots\mathbf{W}^2\mathbf{W}^1$, when $t$ is large enough, $\mathbf{W}^{(t)}$ will converge to a fixed matrix, and there is $w^{(t)}_{1,j}=w^{(t)}_{2,j}=\ldots=w^{(t)}_{N,j},\ j=1,\ldots,N$, i.e., the values in each column of $\mathbf{W}^{(t)}$ become equal. 
\end{lemma}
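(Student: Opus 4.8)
The plan is to view each $\mathbf{W}^k$ as the transition matrix of a (time-inhomogeneous) Markov chain and to measure how far a partial product is from having equal columns by a single scalar, Dobrushin's coefficient of ergodicity. For a row-stochastic matrix $\mathbf{P}=[p_{i,j}]$ I would set
$$\tau(\mathbf{P})=\tfrac12\max_{i,k}\sum_{j=1}^{N}\bigl|p_{i,j}-p_{k,j}\bigr|\in[0,1],$$
and record the two facts that drive the whole argument: (i) $\tau(\mathbf{P})=0$ if and only if all rows of $\mathbf{P}$ coincide, i.e.\ $p_{1,j}=\cdots=p_{N,j}$ for every $j$, which is exactly the claimed equal-column structure; and (ii) $\tau$ is submultiplicative, $\tau(\mathbf{P}\mathbf{Q})\le\tau(\mathbf{P})\,\tau(\mathbf{Q})$. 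Fact (ii) is the crucial inequality, and I would obtain it from the equivalent oscillation form $\max_i(\mathbf{P}\mathbf{y})_i-\min_i(\mathbf{P}\mathbf{y})_i\le\tau(\mathbf{P})\bigl(\max_i y_i-\min_i y_i\bigr)$, valid for every vector $\mathbf{y}$, and then composing the two averaging steps.

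First I would invoke Lemma~\ref{lamma2} together with the constraints in (\ref{eq.weightCon}): the former guarantees that every partial product $\mathbf{W}^{(t)}=\mathbf{W}^t\cdots\mathbf{W}^1$ stays in $\mathbb{W}$, hence remains row-stochastic, so $\mathbf{1}=[1,\ldots,1]^{\top}$ is always a right eigenvector with eigenvalue $1$. Applying submultiplicativity along the product then gives
$$\tau\bigl(\mathbf{W}^{(t)}\bigr)\le\prod_{k=1}^{t}\tau\bigl(\mathbf{W}^k\bigr).$$
The entry constraint $w_{i,j}\in[0,1)$ with $\sum_j w_{i,j}=1$ forces each row to spread its mass over at least two coordinates, and in the adversarial-mixing regime any two rows share a coordinate on which both are strictly positive; this scrambling property yields $\tau(\mathbf{W}^k)\le c<1$ for a uniform constant $c$. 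Consequently $\tau(\mathbf{W}^{(t)})\le c^{\,t}\to0$, and by Fact~(i) the columns of $\mathbf{W}^{(t)}$ become asymptotically equal.

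Finally I would upgrade ``columns become asymptotically equal'' to genuine convergence to a fixed matrix. Since $\mathbf{W}^{(t+1)}=\mathbf{W}^{t+1}\mathbf{W}^{(t)}$ and left-multiplication by a row-stochastic matrix merely re-averages rows, the increment $\|\mathbf{W}^{(t+1)}-\mathbf{W}^{(t)}\|$ is controlled by the row oscillation of $\mathbf{W}^{(t)}$, which decays like $c^{\,t}$; summing this geometric tail shows that $(\mathbf{W}^{(t)})_t$ is Cauchy in the compact set of row-stochastic matrices and hence converges to a limit $\mathbf{W}^{(\infty)}$. Because $\tau(\mathbf{W}^{(\infty)})=0$, the limit is a rank-one matrix $\mathbf{1}\mathbf{v}^{\top}$ with identical rows, which is the stated conclusion.

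The step I expect to be the main obstacle is establishing the uniform contraction $\tau(\mathbf{W}^k)\le c<1$: the bare hypothesis $w_{i,j}\in[0,1)$ permits zero entries, so a single factor need not be scrambling (for instance, a block-structured matrix keeps two groups of rows permanently apart). To make the argument airtight I would either lean on the strictly positive mixing weights that arise because each adversarial example is a genuine convex combination of the clean examples, or replace the per-step bound by the weak-ergodicity statement that a sufficiently long product of such matrices becomes scrambling, so that the coefficients taken over disjoint blocks of steps are uniformly below $1$ and their product still tends to $0$.
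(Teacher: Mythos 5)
Your route is genuinely different from the paper's. You contract the whole problem to Dobrushin's coefficient of ergodicity $\tau(\cdot)$ and its submultiplicativity, whereas the paper argues entrywise: it observes that each entry of column $j$ of $\mathbf{W}^{(t)}$ is a convex combination of the entries of column $j$ of $\mathbf{W}^{(t-1)}$, so $\min_i w^{(t)}_{i,j}$ is nondecreasing and $\max_i w^{(t)}_{i,j}$ is nonincreasing, both converge by the monotone bounded theorem, and it then tries to show the two limits coincide. Your formulation is cleaner and identifies the single quantity (the row oscillation of each column, which is exactly what $\tau$ controls) that must be driven to zero; the paper's version is more elementary but tracks the same quantity implicitly.

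The gap you flag at the end is real, and you should know that it is not an artifact of your approach: the statement is false under the bare hypothesis $\mathbf{W}^k\in\mathbb{W}$. Take $N=4$ and the block matrix with $w_{1,1}=w_{1,2}=w_{2,1}=w_{2,2}=w_{3,3}=w_{3,4}=w_{4,3}=w_{4,4}=\tfrac12$ and all other entries $0$; it satisfies every constraint in (\ref{eq.weightCon}), is idempotent, and its columns never equalize. So some scrambling or positivity hypothesis (e.g.\ $w_{i,j}>0$, which is natural here since each adversarial example is an interior convex combination of the clean batch, or weak ergodicity of the product over blocks of steps, as you suggest) is genuinely needed to get $\tau(\mathbf{W}^k)\le c<1$ and cannot be dispensed with. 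The paper's own proof conceals the same gap: after establishing that $\min_i w^{(t)}_{i,j}\to\xi_1$ and $\max_i w^{(t)}_{i,j}\to\xi_2$, it writes $\min_i w^{(t+1)}_{i,j}=\lambda_1\max_i w^{(t)}_{i,j}+(1-\lambda_1)\min_i w^{(t)}_{i,j}$ using the \emph{same} $\lambda_1$ that was chosen freely in the $\varepsilon$-$N$ definition of the limit, i.e.\ it silently assumes the per-step contraction factor is bounded away from $0$ uniformly in $t$ --- which is precisely the uniform bound $\tau(\mathbf{W}^k)\le c<1$ you could not extract from the hypotheses. Your write-up is therefore incomplete at exactly the point where the paper's proof is also incomplete, but you make the missing hypothesis explicit rather than burying it; with an added positivity assumption on the weights, your argument closes and is the more rigorous of the two.
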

\begin{proof}
Please refer to Appendix~\ref{append.lemma3}.
\end{proof}

Lemma~\ref{lamma3} describes the extreme case for adversarial attack: when the perturbation bound $\epsilon$ and the number of attacking steps $t$ are large enough, all the examples will finally shrink to the same point. In practice, this extreme case will never occur, since to ensure imperceptibility, the perturbation bound $\epsilon$ must be small, and the number of attack steps $t$ is finite. However, {\bf Lemma}~\ref{lamma3} provides some guidelines for analyzing the changing trend of feature statistics.

Suppose the set of clean examples $\mathbf{x}_1,\mathbf{x}_2,\ldots,\mathbf{x}_N$ are perturbed into adversarial examples $\mathbf{x}_1^{\prime},\mathbf{x}_2^{\prime},\ldots,\mathbf{x}_N^{\prime}$ through $F_{\theta}$. With a high probability, the following propositions hold.

\newcommand{\myfont}{\fontsize{6}{7}\selectfont}
\begin{figure*}[t]
\centering
\subfloat[{\myfont Mean Shift (MNIST-Train)}]{\includegraphics[width=0.16\linewidth]{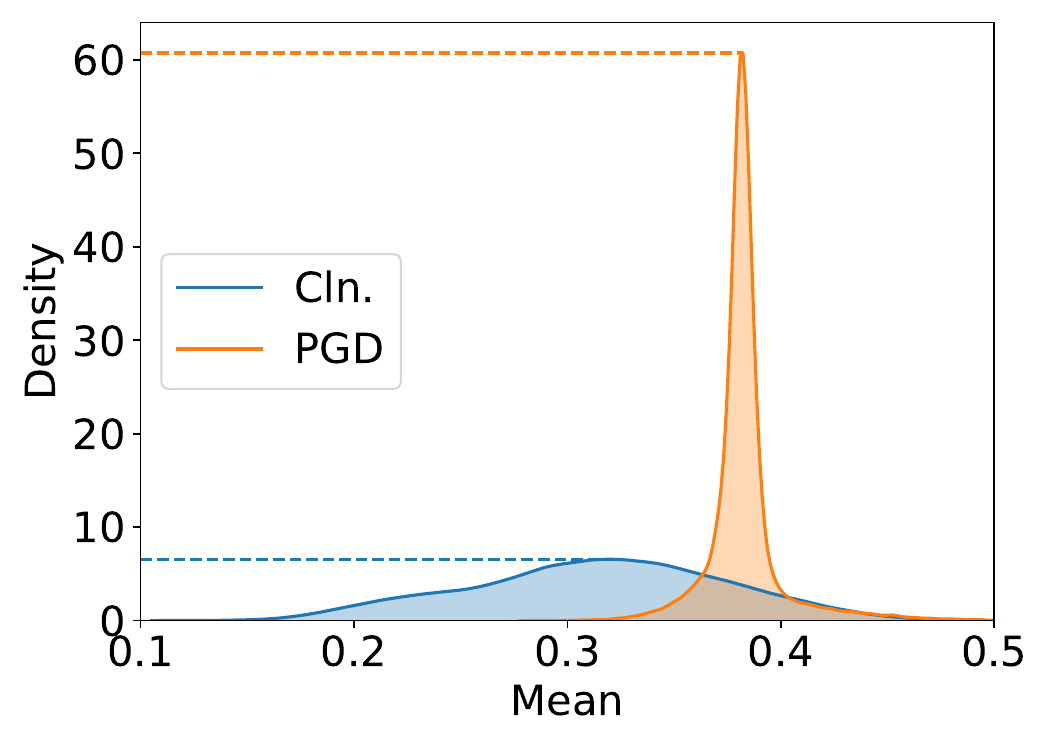}}\
\subfloat[{\myfont Mean Shift (MNIST-Test)}]{\includegraphics[width=0.16\linewidth]{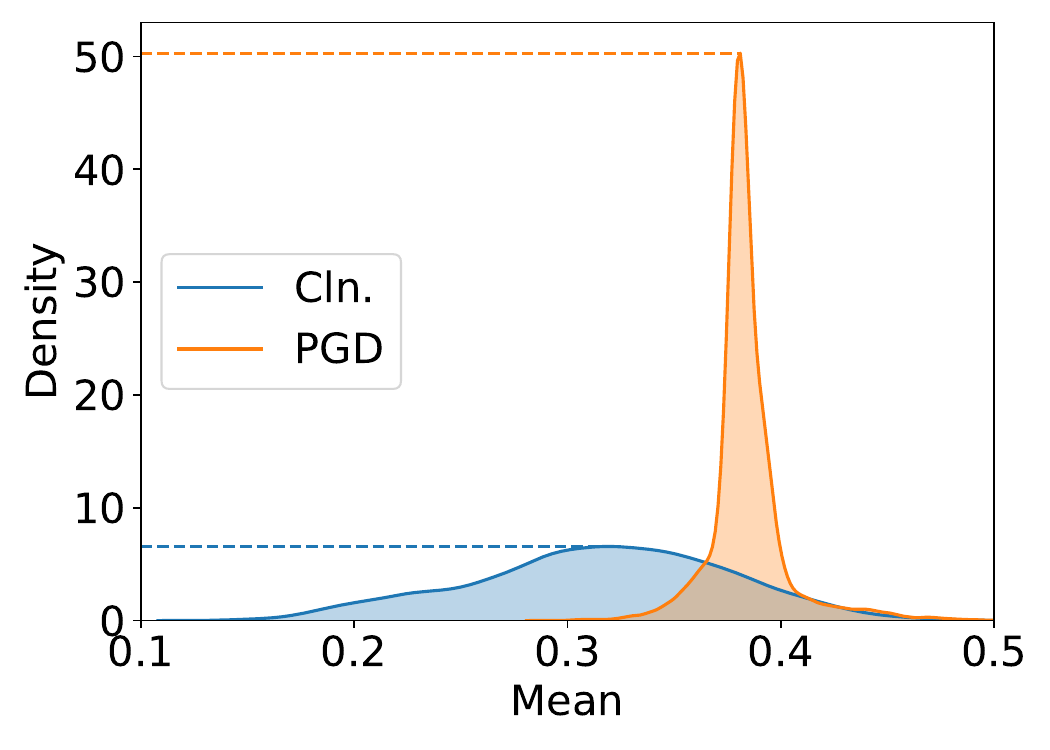}}\
\subfloat[{\tiny Mean Shift (CIFAR10-Train)}]{\includegraphics[width=0.16\linewidth]{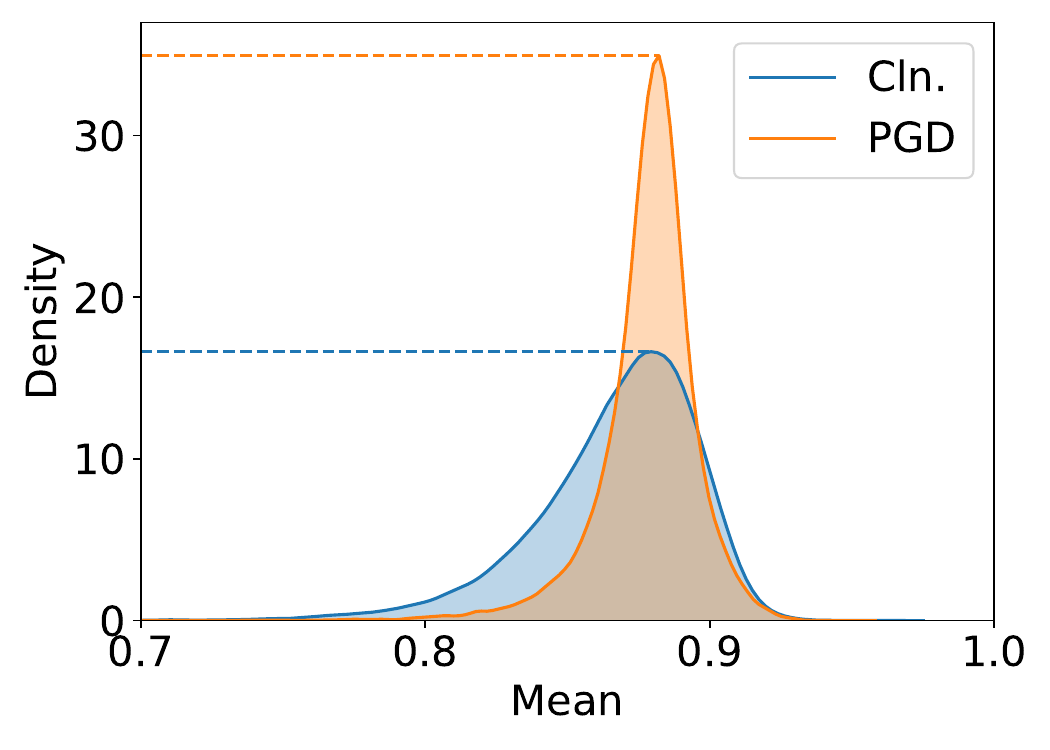}}\
\subfloat[{\tiny Mean Shift (CIFAR10-Test)}]{\includegraphics[width=0.16\linewidth]{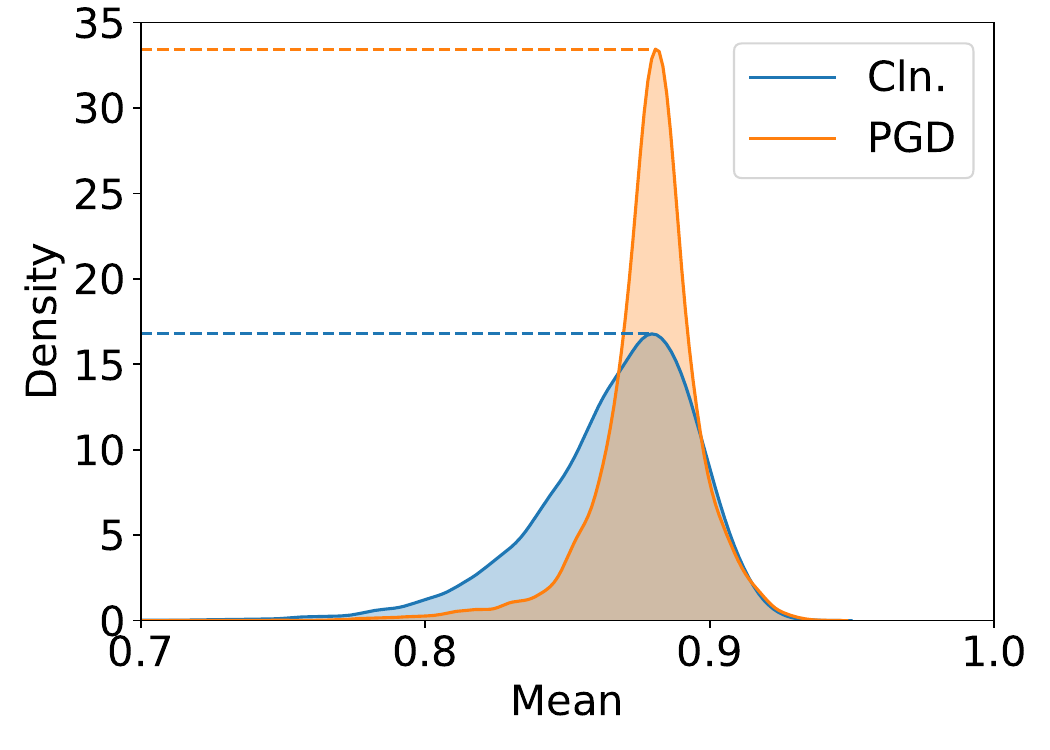}}\
\subfloat[{\myfont Mean Shift (SVHN-Train)}]{\includegraphics[width=0.16\linewidth]{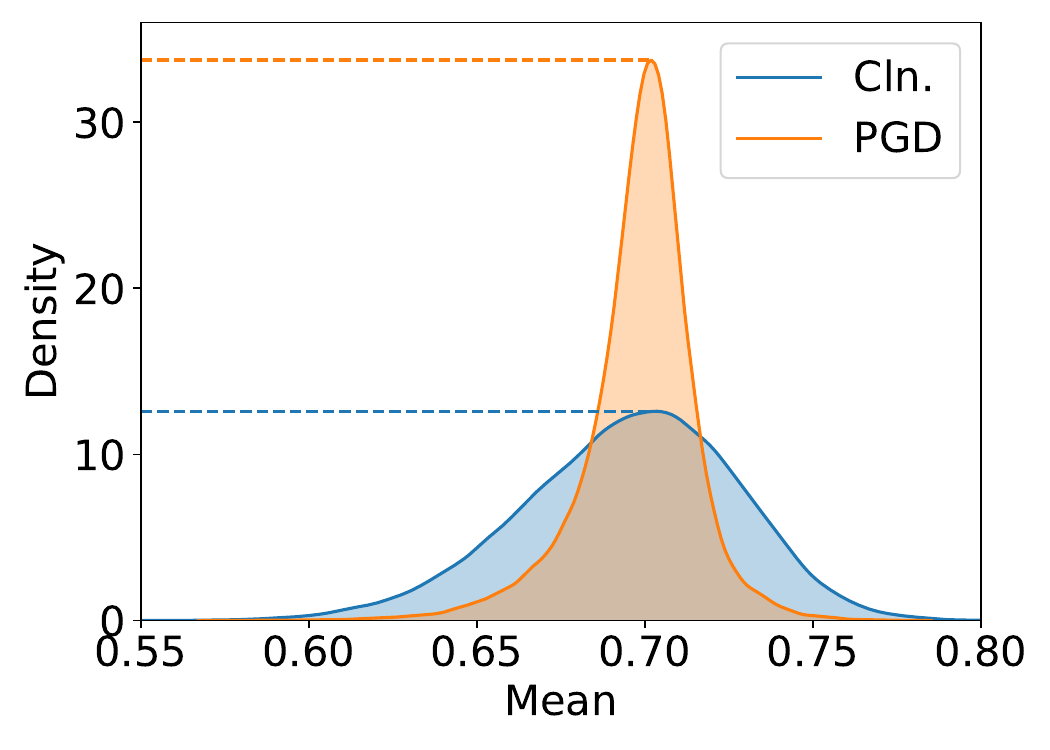}}\
\subfloat[{\myfont Mean Shift (SVHN-Test)}]{\includegraphics[width=0.16\linewidth]{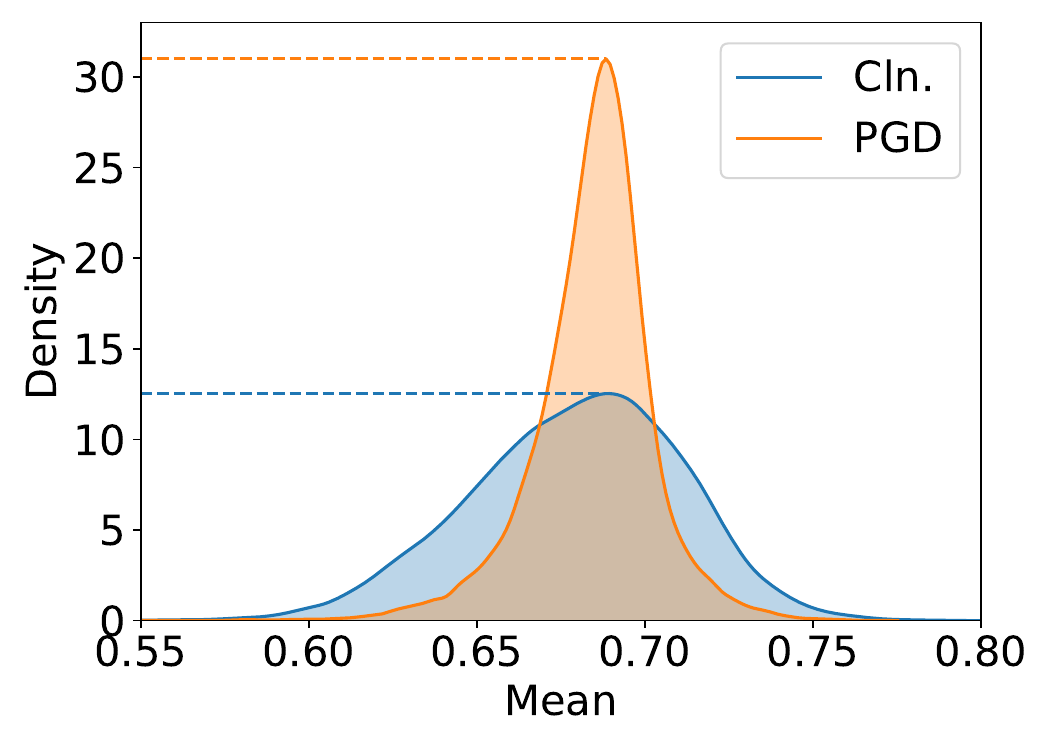}}\\
\subfloat[{\myfont Std Shift (MNIST-Train)}]{\includegraphics[width=0.16\linewidth]{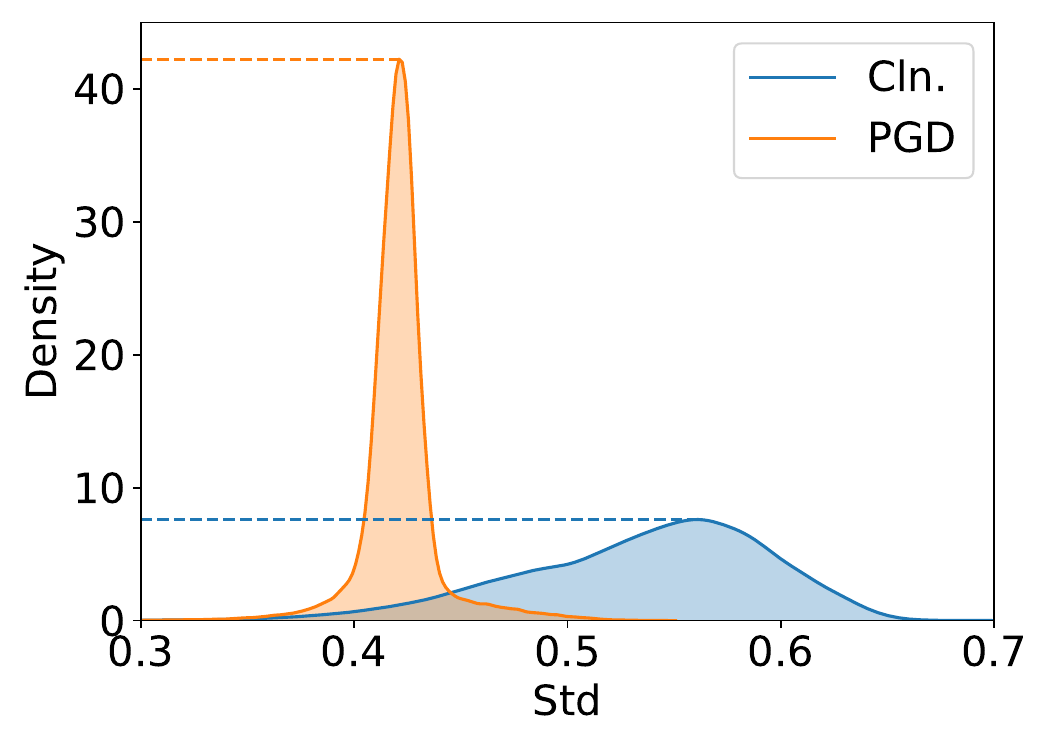}}\
\subfloat[{\myfont Std Shift (MNIST-Test)}]{\includegraphics[width=0.16\linewidth]{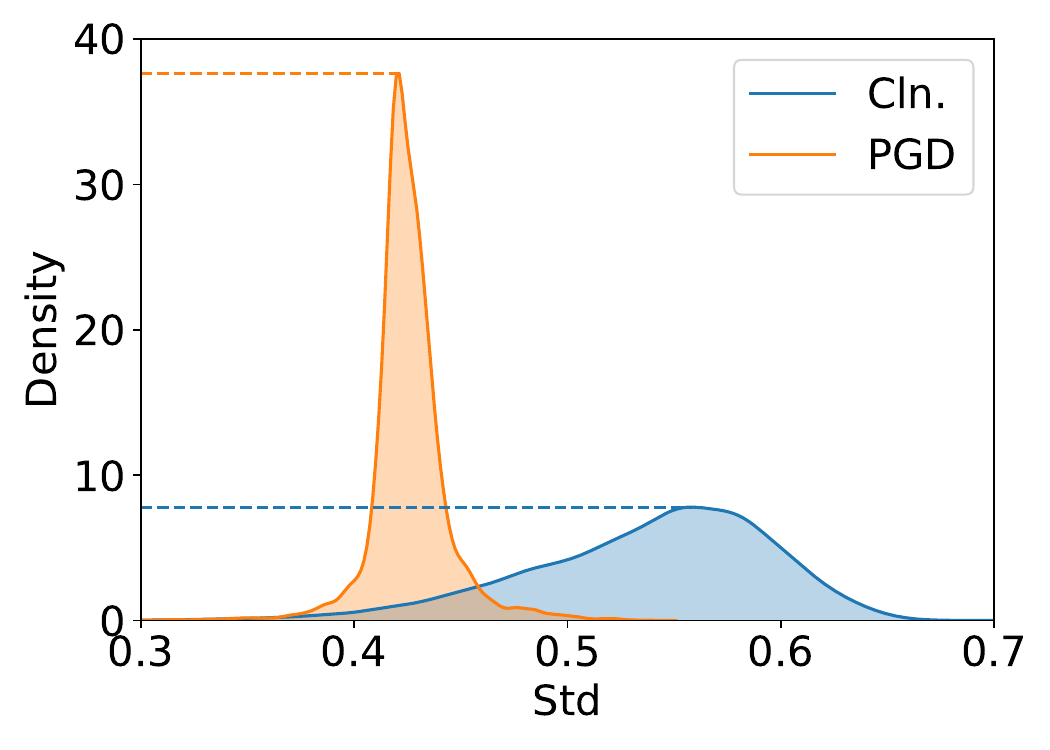}}\
\subfloat[{\myfont Std Shift (CIFAR10-Train)}]{\includegraphics[width=0.16\linewidth]{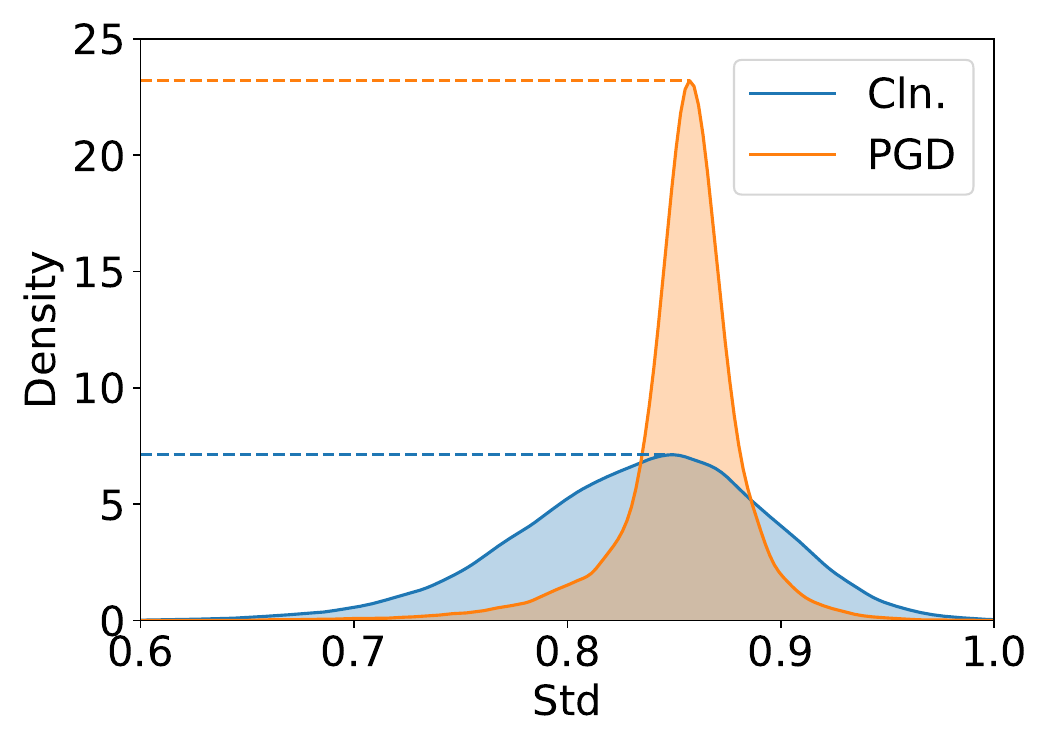}}\
\subfloat[{\myfont Std Shift (CIFAR10-Test)}]{\includegraphics[width=0.16\linewidth]{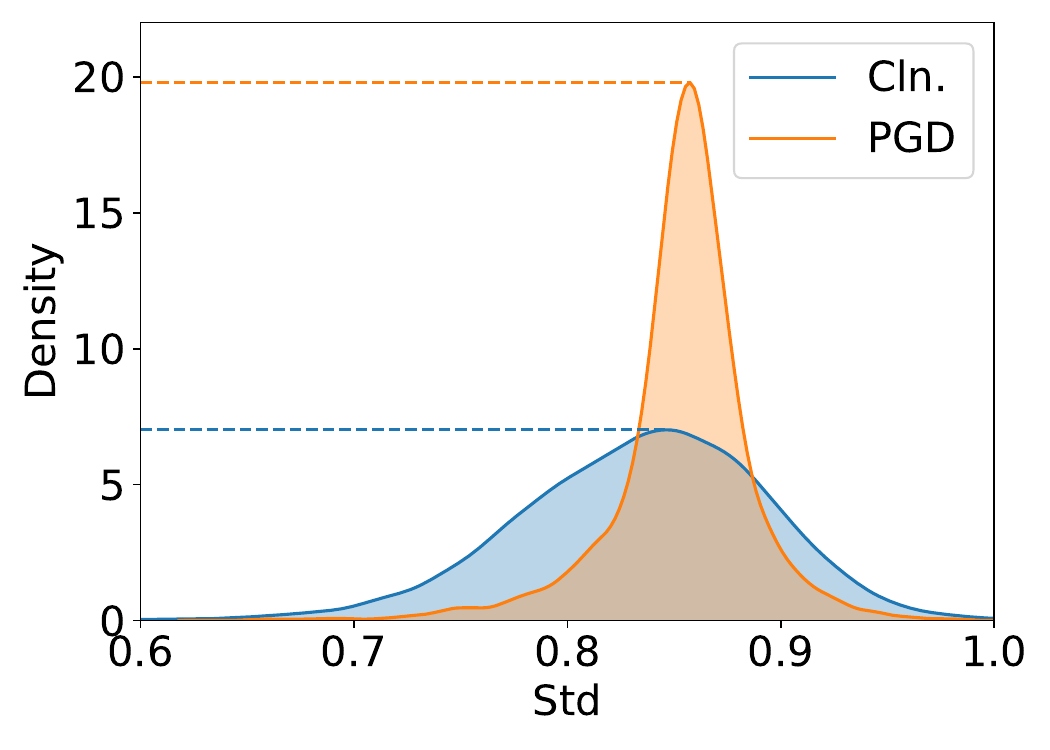}}\
\subfloat[{\myfont Std Shift (SVHN-Train)}]{\includegraphics[width=0.16\linewidth]{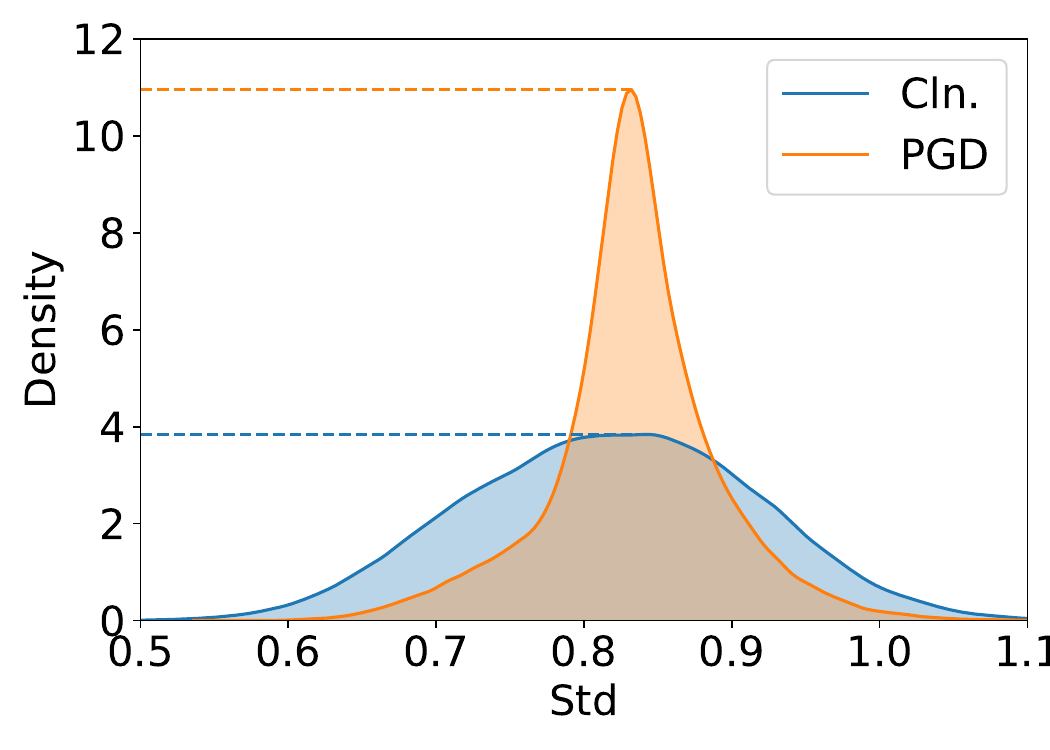}}\
\subfloat[{\myfont Std Shift (SVHN-Test)}]{\includegraphics[width=0.16\linewidth]{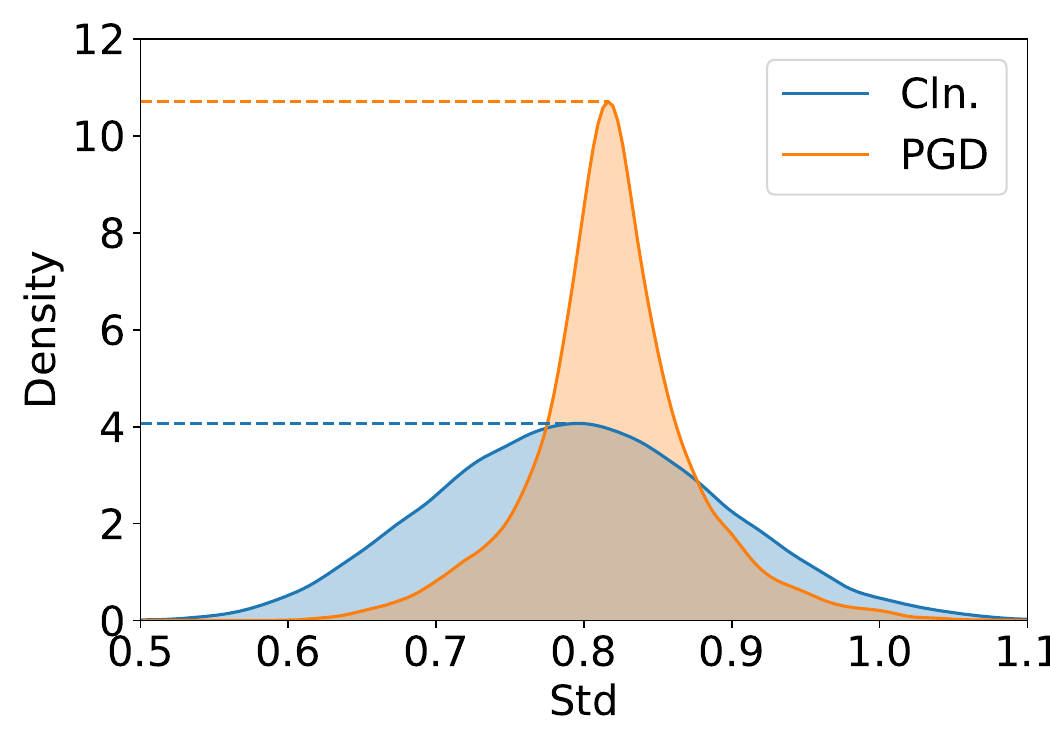}}
\caption{Distribution shifts of feature statistics (mean and Std) of perturbed examples in MNIST (average of 84 channels in latent space, LeNet-5 model), CIFAR10 and SVHN (average of 512 channels in latent space, ResNet-18 model). ``Cln." and ``PGD" represent clean examples and examples perturbed by PGD attack, respectively. ``Train" and ``Test" represent training data and testing data, respectively.}
\label{fig.featureDisShift}
\end{figure*}

\begin{proposition}\label{proposition1}
The variance of feature means becomes smaller after being perturbed, i.e., ${\rm var}(\mu^{\prime})\leq {\rm var}(\mu)$.
\end{proposition}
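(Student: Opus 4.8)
The plan is to reduce the claim about feature means to a statement about convex combinations of scalars and then exploit convexity. First I would invoke Assumption~\ref{assumption1} to write each adversarial example as $\mathbf{x}_i^{\prime}=\sum_{j=1}^{N}w_{ij}\mathbf{x}_j$ with the weights of Lemma~\ref{lamma2}. The crucial first step is to note that the feature-mean operation (averaging the $d$ coordinates) is linear and therefore commutes with this combination:
\begin{equation}
\mu_i^{\prime}=\frac{1}{d}\sum_{k=1}^{d}x_{ik}^{\prime}=\sum_{j=1}^{N}w_{ij}\Big(\frac{1}{d}\sum_{k=1}^{d}x_{jk}\Big)=\sum_{j=1}^{N}w_{ij}\mu_j .
\end{equation}
Hence every perturbed mean $\mu_i^{\prime}$ is a convex combination of the clean means $\mu_1,\ldots,\mu_N$, i.e. $\boldsymbol{\mu}^{\prime}=\mathbf{W}\boldsymbol{\mu}$ with $\mathbf{W}$ row-stochastic. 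This is the step that makes the abstract convex-hull machinery of Lemmas~\ref{lamma1}--\ref{lamma3} bite on the statistics themselves.

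Next I would center the data. Writing $\bar{\mu}=\frac{1}{N}\sum_j\mu_j$ and $\nu_j=\mu_j-\bar{\mu}$, we have $\sum_j\nu_j=0$ and ${\rm var}(\mu)=\frac{1}{N}\sum_j\nu_j^2$; since each row of $\mathbf{W}$ sums to one, $\mu_i^{\prime}-\bar{\mu}=\sum_j w_{ij}\nu_j$. Because the empirical variance is the minimal second moment about \emph{any} center, ${\rm var}(\mu^{\prime})\le\frac{1}{N}\sum_i(\mu_i^{\prime}-\bar{\mu})^2$. Applying Jensen's inequality rowwise (the vector $(w_{i1},\ldots,w_{iN})$ is a probability vector and $t\mapsto t^2$ is convex) gives $\big(\sum_j w_{ij}\nu_j\big)^2\le\sum_j w_{ij}\nu_j^2$, so that after interchanging the sums
\begin{equation}
{\rm var}(\mu^{\prime})\le\frac{1}{N}\sum_{i}\sum_{j}w_{ij}\nu_j^2=\frac{1}{N}\sum_{j}c_j\,\nu_j^2,\qquad c_j:=\sum_{i}w_{ij},
\end{equation}
where the column sums obey $\sum_j c_j=N$ by the last line of \eqref{eq.weightCon}. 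If the attack acts symmetrically across examples so that each $c_j$ concentrates around its mean value $1$ (the balanced, doubly-stochastic regime that is exactly the equal-column limit in Lemma~\ref{lamma3}), then $\frac{1}{N}\sum_j c_j\nu_j^2\le\frac{1}{N}\sum_j\nu_j^2={\rm var}(\mu)$, which is the desired conclusion.

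The hard part is the final step, and it is precisely where the ``with high probability'' qualifier is indispensable: the deterministic inequality can fail if a large column sum $c_j$ happens to be aligned with a large deviation $\nu_j^2$, i.e. if a few extreme clean examples are systematically over-represented as attack targets. The technical heart will therefore be to argue that, for a model with established classification ability and a small budget $\epsilon$, the induced weights stay close to balanced so that $\sum_j(c_j-1)\nu_j^2$ is negligible, or equivalently to control the mean shift $\bar{\mu}^{\prime}-\bar{\mu}=\frac{1}{N}\sum_j(c_j-1)\nu_j$ directly. I expect to lean on the within-row Jensen gap (the contraction inside each convex combination) to absorb the residual column imbalance in the typical case, and to treat the strict collapse predicted by Lemma~\ref{lamma3} as the limiting intuition justifying why variance trends downward.
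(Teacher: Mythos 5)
Your proposal is sound to the same degree the paper's own argument is, but after the shared first step it takes a genuinely different route. Both proofs begin by pushing the convex combination through the linear feature-mean operator to obtain $\bm{\mu}^{\prime}=\mathbf{W}\bm{\mu}$. From there the paper additionally assumes the attack preserves the grand mean ($\Phi_{\mu}=\Phi_{\mu^{\prime}}$), writes the variance gap exactly as the quadratic form $\frac{1}{N}\bm{\mu}^{\top}(\mathbf{I}-\mathbf{W}^{\top}\mathbf{W})\bm{\mu}$, diagonalizes $\mathbf{W}^{\top}\mathbf{W}$, and argues from the trace bound $\sum_i\lambda_i=\sum_{i,j}w_{i,j}^2\leq N$ together with the rank-collapse limit of Lemma~\ref{lamma3} and a geometric volume heuristic that the form is nonnegative with high probability. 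You instead center the clean means, use minimality of the second moment about the empirical mean to dispense with the mean-preservation assumption altogether, apply Jensen rowwise to exploit row-stochasticity, and reduce everything to the column sums $c_j$ with $\sum_j c_j=N$. The two routes then hit the same irreducible obstruction in different coordinates: the paper needs $\sum_i(1-\lambda_i)z_i^2\geq 0$, which does not follow from $\sum_i\lambda_i\leq N$ alone, while you need $\sum_j(c_j-1)\nu_j^2\leq 0$, which does not follow from $\sum_j(c_j-1)=0$ alone; both are closed by the same appeal to typical balancedness of the attack, which is exactly why the proposition is only asserted with high probability. Your version buys elementarity (Jensen in place of a spectral decomposition), one fewer assumption, and a more transparent description of the failure mode (extreme clean examples over-represented as targets); the paper's version buys an exact, unrelaxed expression for the gap (your Jensen step further loosens $\|\mathbf{W}\bm{\nu}\|^2$ to $\sum_j c_j\nu_j^2$) and a tighter connection to the limiting behavior in Lemma~\ref{lamma3}. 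One trivial slip: the weights come from Assumption~\ref{assumption1} via Lemma~\ref{lamma1}; Lemma~\ref{lamma2} only supplies closure of the weight class under composition of attack steps.
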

\begin{proof}
Please refer to Appendix~\ref{append.prop1}.
\end{proof}

\begin{proposition}\label{proposition3}
The variance of feature variances also becomes smaller after being perturbed, i.e., ${\rm var}(\sigma^{\prime 2})\leq {\rm var}(\sigma^2)$.
\end{proposition}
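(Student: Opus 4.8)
The plan is to mirror the argument behind Proposition~\ref{proposition1}, but with the feature variances $\sigma_j^2$ playing the role that the feature means $\mu_j$ played there. First I would invoke Assumption~\ref{assumption1} to write each adversarial example as a convex combination of the clean ones, $\mathbf{x}_i^{\prime}=\sum_{j=1}^N w_{ij}\mathbf{x}_j$ with $w_{ij}\in[0,1)$ and $\sum_j w_{ij}=1$. Reading this coordinate-wise gives $\mu_i^{\prime}=\sum_j w_{ij}\mu_j$, and the centered coordinates then satisfy $x_{ik}^{\prime}-\mu_i^{\prime}=\sum_j w_{ij}(x_{jk}-\mu_j)$. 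Substituting into the definition of $\sigma_i^{\prime 2}$ yields $\sigma_i^{\prime 2}=\frac{1}{d}\sum_{k=1}^d\big(\sum_j w_{ij}(x_{jk}-\mu_j)\big)^2$, which exhibits the same convex-combination structure in the weights $w_{ij}$ that the mean did.

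Next I would extract the key relation by applying Jensen's inequality (convexity of $t\mapsto t^2$, with weights $w_{ij}$ summing to one) inside the sum over $k$: $\big(\sum_j w_{ij}(x_{jk}-\mu_j)\big)^2\le\sum_j w_{ij}(x_{jk}-\mu_j)^2$. Summing over $k$ and dividing by $d$ collapses the right-hand side into $\sum_j w_{ij}\sigma_j^2$, giving $\sigma_i^{\prime 2}\le\sum_j w_{ij}\sigma_j^2$. Thus, up to this Jensen step, the perturbed variance vector behaves like $\mathbf{W}$ acting on the clean variance vector, exactly as $\mu^{\prime}=\mathbf{W}\mu$ did in Proposition~\ref{proposition1}.

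With this in hand, I would repeat the variance-contraction computation using $\sigma_j^2$ in place of $\mu_j$. A second application of Jensen gives $\big(\sum_j w_{ij}\sigma_j^2\big)^2\le\sum_j w_{ij}(\sigma_j^2)^2$; averaging over $i$ and invoking the column-balance built into (\ref{eq.weightCon}) (so that $\sum_i w_{ij}$ is one per column on average) bounds the second moment $\frac{1}{N}\sum_i(\sigma_i^{\prime 2})^2$ by $\frac{1}{N}\sum_j(\sigma_j^2)^2$. Since ${\rm var}(\sigma^{\prime 2})=\frac{1}{N}\sum_i(\sigma_i^{\prime 2})^2-\big(\frac{1}{N}\sum_i\sigma_i^{\prime 2}\big)^2$, the target ${\rm var}(\sigma^{\prime 2})\le{\rm var}(\sigma^2)$ then reduces to controlling how the first moment $\frac{1}{N}\sum_i\sigma_i^{\prime 2}$ behaves.

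The main obstacle is precisely that last point: unlike the means, where $\mu_i^{\prime}=\sum_j w_{ij}\mu_j$ holds with \emph{equality} so that the first moment is preserved, here the relation $\sigma_i^{\prime 2}\le\sum_j w_{ij}\sigma_j^2$ is only an inequality. The Jensen slack makes the first moment $\frac{1}{N}\sum_i\sigma_i^{\prime 2}$ decrease as well, and subtracting its square works \emph{against} the desired bound rather than for it. Resolving this cleanly requires either (i) treating $\sigma_i^{\prime 2}=\sum_j w_{ij}\sigma_j^2$ as the effective high-probability reconstruction of the variance, i.e. assuming the slack is negligible so that the first moment is preserved and the Proposition~\ref{proposition1} argument carries over verbatim, or (ii) a more delicate estimate that simultaneously controls the decreases of both the second moment and the squared first moment, e.g. via the positive-semidefinite Gram representation $\sigma_i^{\prime 2}=\mathbf{w}_i\mathbf{C}\mathbf{w}_i^{\top}$ with $C_{jl}=\frac{1}{d}\sum_k(x_{jk}-\mu_j)(x_{lk}-\mu_l)$ and $C_{jj}=\sigma_j^2$. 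I expect route (i), matching the derivation of Proposition~\ref{proposition1}, to be the one intended here.
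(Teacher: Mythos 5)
Your setup is faithful to the paper's framework and, importantly, your diagnosis of the obstacle is the genuine one: because $\sigma_i^{\prime 2}\le\sum_j w_{ij}\sigma_j^2$ holds only as an inequality, the first moment $\frac{1}{N}\sum_i\sigma_i^{\prime 2}$ also decreases, and a bound on the second moment alone cannot deliver ${\rm var}(\sigma^{\prime 2})\le{\rm var}(\sigma^2)$. But you stop at that point, offering two possible repairs without carrying either out, so as written the proposal does not establish the statement. Your guess that route (i) is the intended one is also wrong. The paper never linearizes $\sigma^{\prime 2}$; it uses the exact quadratic-form identity $\sigma_i^{\prime 2}=\frac{1}{d}\mathbf{w}_i\mathbf{X}\mathbf{X}^{\top}\mathbf{w}_i^{\top}-(\mathbf{w}_i\bm{\mu})^2$, which is precisely the uncentered form of the Gram representation $\mathbf{w}_i\mathbf{C}\mathbf{w}_i^{\top}$ you sketch in route (ii), and then evaluates it only in the degenerate limit supplied by Lemma~\ref{lamma3}: as $t\to\infty$ all rows of $\mathbf{W}^{(t)}$ coincide, so every $\sigma_i^{\prime 2}$ takes the same value and $\lim_{t\to\infty}{\rm var}(\sigma^{\prime 2})=0\le{\rm var}(\sigma^{2})$. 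The finite-$t$ claim is then asserted ``with a high probability'' by interpolating from the initial state to this extreme state, exactly as in the closing step of Proposition~\ref{proposition1}. In other words, the paper sidesteps the first-moment problem you identified by working at the one point where the perturbed variance vanishes identically; it does not control the intermediate regime rigorously either, so the delicate simultaneous estimate you anticipate in route (ii) is not actually performed anywhere.

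A secondary flaw in the part of your argument that precedes the acknowledged gap: your second Jensen step needs $\frac{1}{N}\sum_i\sum_j w_{ij}(\sigma_j^2)^2=\frac{1}{N}\sum_j\bigl(\sum_i w_{ij}\bigr)(\sigma_j^2)^2\le\frac{1}{N}\sum_j(\sigma_j^2)^2$, which requires each column sum $\sum_i w_{ij}$ to equal one. The constraints in (\ref{eq.weightCon}) only force the row sums to be one and the total sum to be $N$; individual columns can exceed one, and precisely the columns with large $\sigma_j^2$ might. This is why Proposition~\ref{proposition1} does not argue via column stochasticity but via the spectral bound $\sum_i\lambda_i={\rm tr}(\mathbf{W}^{\top}\mathbf{W})\le N$ together with a geometric ``high probability'' discussion. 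If you want to complete your route honestly, you would need either to add a doubly-stochastic assumption on $\mathbf{W}$ or to replace the column-balance step with an analogous spectral argument, and you would still have to confront the first-moment slack you correctly flagged.
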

\begin{proof}
Please refer to Appendix~\ref{append.prop3}.
\end{proof}

Propositions~\ref{proposition1}$\sim$\ref{proposition3} describe a clear changing trend for the feature statistics of the perturbed examples, i.e., after being perturbed, the differences of the feature means and feature variances of the examples will all decrease, making the examples difficult to distinguish. Obviously, if these propositions hold, we can recover some domain characteristics of data by enlarging the variances of feature statistics of perturbed examples, so as to alleviate perturbations and enhance robustness.

\subsection{Empirical Verification}\label{subsec.empirical}

Fig.~\ref{fig.featureDisShift} demonstrates the distributions of latent feature mean and standard deviation (Std) of both clean and perturbed examples in datasets MNIST, CIFAR10, and SVHN. The models are naturally trained with LeNet-5 architecture for MNIST, and ResNet-18 architecture for CIFAR10 and SVHN. The examples are perturbed by $l_{\infty}$-norm PGD attack with step size $2/255$ (40 steps with $\epsilon=0.3$ for MNIST, and 10 steps with $\epsilon=8/255$ for CIFAR10 and SVHN). The mean and Std are evaluated in latent space, where the distributions are obtained by kernel density estimation and the average of multiple channels are investigated. As a remark, both feature mean and Std approximately obey Gaussian distributions, no matter before or after being perturbed.

\begin{assumption}\label{assumption2}
When the number of examples is large enough, both $\mu_i$, $\mu_i^{\prime}$, $\sigma_i$ and $\sigma_i^{\prime}$ approximately follow Gaussian distributions.
\end{assumption}

Assumption~\ref{assumption2} is supported empirically by our investigations as in Fig.~\ref{fig.featureDisShift}, and is supported theoretically by the {\it Central Limit Theorem}.

Figs.~\ref{fig.featureDisShift}(a)$\sim$(f) / Figs.~\ref{fig.featureDisShift}(g)$\sim$(l) demonstrate the distributions of feature mean / feature Std of perturbed training and testing examples in the datasets through the naturally trained models. It can be observed that the variances of both feature mean and Std become smaller, i.e., the shapes of the distributions are getting more concentrate, providing empirical support for Proposition~\ref{proposition1} and Proposition~\ref{proposition3}.

\section{Proposed Method}\label{sec.FSU}

Inspired by the theoretical and empirical findings in Section~\ref{sec.theory}, this section will describe the FSU module in detail, and propose the robustness enhancement method.

\subsection{Basic Idea}

Given an image classification task, where the images are fed into a multi-channel network batch-by-batch. Considering a batch of examples, we investigate the feature maps in a multi-channel hidden layer, and denote $\mathbf{z}_{b,c}$ as the feature map of the $b$-th example in the $c$-th channel, i.e.,
\begin{equation}
\label{feature}
    \mathbf{z}_{b,c} = 
        \begin{bmatrix}
            z_{b,c}^{1,1} & \cdots & z_{b,c}^{1,W} \\
            \vdots & \ddots & \vdots \\
            z_{b,c}^{H,1} & \cdots & z_{b,c}^{H,W}
        \end{bmatrix}_{H \times W},
\end{equation}
where $b=1,\ldots,B$ ($B$ is the batch size), $c=1,\ldots,C$ ($C$ is the number of channels), $H$ and $W$ are the height and width of the feature map, respectively. Easily, we can compute
\begin{equation}
\left\{
\begin{array}{lll}
    \mu_{b,c}=\mathbb{E}(z_{b,c}^{i,j})=\frac{1}{HW}\sum\nolimits_{i=1}^{H}\sum\nolimits_{j=1}^{W}z_{b,c}^{i,j}\\
    \sigma_{b,c}=\sqrt{\frac{1}{HW}\sum\nolimits_{i=1}^{H}\sum\nolimits_{j=1}^{W}(z_{b,c}^{i,j}-\mathbb{E}[z_{b,c}^{i,j}])^2}
\end{array}\right..
\label{eq.meanStd}
\end{equation}
Obviously, $\mu_{b,c}$ and $\sigma_{b,c}$ are the two most important feature statistics of $\mathbf{z}_{b,c}$, which respectively reflect the average and dispersion of features in $\mathbf{z}_{b,c}$. According to Assumption~\ref{assumption2}, when the batch size $B$ is large enough, it is reasonable to assume that both $\mu_{b,c}$ and $\sigma_{b,c}$ approximately follow Gaussian distributions.

Given a specific clean feature map $\mathbf{z}_{b,c}$, the numerical values of $\mu_{b,c}$ and $\sigma_{b,c}$ can be easily computed according to Eq.~(\ref{eq.meanStd}). When all the feature maps $\mathbf{z}_{b,c}$ are perturbed by the attacker, both $\mu_{b,c}$ and $\sigma_{b,c}$ change correspondingly, leading to potential distribution shifts as in Fig.~\ref{fig.featureDisShift}. Since the original clean distributions keep some useful domain characteristics for classification, if the shifted distributions can be calibrated, it is possible to reconstruct the perturbed feature maps, thereby mitigating perturbations and enhancing robustness.

Our basic idea is to insert an FSU module after a certain hidden layer of the backbone architecture, which resamples the channel-wise feature means and standard deviations of each example according to the statistical information in the batch. After the new feature means and standard deviations are obtained, the feature maps are reconstructed and inputted into the remaining layers of the network.

\subsection{The FSU Module}

A schematic view of the FSU module is provided in Fig.~\ref{fig.overview}. In specific, given a (possibly perturbed) feature map $\mathbf{z}_{b,c}$, its mean $\mu_{b,c}$ and standard deviation $\sigma_{b,c}$ are supposed to follow Gaussian distributions, i.e.,

\begin{equation}\label{eq.meanAndStd}
\left\{
\begin{array}{lll}
    \mu_{b,c}\sim\mathcal{N}(\Phi_{\mu_c},\Sigma_{\mu_c}^2)\\
    \sigma_{b,c}\sim\mathcal{N}(\Phi_{\sigma_c},\Sigma_{\sigma_c}^2)\\
\end{array}\right.,\ c=1,\ldots,C.
\end{equation}
The channel-wise means of $\mu_{b,c}$ and $\sigma_{b,c}$ are estimated as
\begin{equation}\label{eq.statistics_mean}
\left\{
\begin{array}{lll}
    \Phi_{\mu_c}=\mathbb{E}(\mu_{b,c})=\frac{1}{B}\sum\nolimits_{b=1}^{B}\mu_{b,c}\\
    \Phi_{\sigma_c}=\mathbb{E}(\sigma_{b,c})=\frac{1}{B}\sum\nolimits_{b=1}^{B}\sigma_{b,c}\\
\end{array}\right.,\ c=1,\ldots,C,
\end{equation}
and the channel-wise standard deviations of $\mu_{b,c}$ and $\sigma_{b,c}$ are estimated as
\begin{equation}\label{eq.statistics}
\left\{
\begin{array}{lll}
    \Sigma_{\mu_c}=\sqrt{\frac{1}{B}\sum\nolimits_{b=1}^{B}(\mu_{b,c}-\Phi_{\mu_c})^2}\\
    \Sigma_{\sigma_c}=\sqrt{\frac{1}{B}\sum\nolimits_{b=1}^{B}(\sigma_{b,c}-\Phi_{\sigma_c})^2}\\
\end{array}\right.,\ c=1,\ldots,C.
\end{equation}
Based on the theoretical findings in Propositions~\ref{proposition1}$\sim$\ref{proposition3}, compared to the original clean feature maps, both ${\rm var}(\mu_{b,c})$ and ${\rm var}(\sigma_{b,c})$ become smaller after being perturbed. Thus, to recover some domain characteristics (i.e., distinctions of the examples) for classification, we need to enlarge ${\rm var}(\mu_{b,c})$ and ${\rm var}(\sigma_{b,c})$ for the perturbed examples. 

Let $\hat{\mu}_{b,c}$ and $\hat{\sigma}_{b,c}$ be the new $\mu_{b,c}$ and new $\sigma_{b,c}$. To achieve the above goal, the numerical values of $\hat{\mu}_{b,c}$ and $\hat{\sigma}_{b,c}$ can be obtained by biased resampling from $\mathcal{N}(\Phi_{\mu_c},\Sigma_{\mu_c}^2)$ and $\mathcal{N}(\Phi_{\sigma_c},\Sigma_{\sigma_c}^2)$, respectively. Since the sampling process cannot be incorporated into gradient-based back-prorogation training, re-parameterization trick is applied here. We design $\varepsilon_{b,c}^{\mu}$ and $\varepsilon_{b,c}^{\sigma}$ as the {\it\textbf{uncertain correction terms}} for $\mu_{b,c}$ and $\sigma_{b,c}$, which are randomly and independently drawn from standard Gaussian distribution $\mathcal{N}(0, 1)$, i.e.,
\begin{equation}\label{eq.noise}
\left\{
\begin{array}{lll}
    \varepsilon^{\mu}_{b,c} \sim \mathcal{N}(0, 1)\\
    \varepsilon^{\sigma}_{b,c} \sim \mathcal{N}(0, 1)
\end{array}\right.,\ b=1,\ldots,B,\ c=1,\ldots,C.
\end{equation}
Then, the new feature mean $\hat{\mu}_{b,c}$ and new feature standard deviation $\hat{\sigma}_{b,c}$ are derived as
\begin{equation}\label{eq.reConstuctMS}
\left\{
\begin{array}{lll}
    \hat{\mu}_{b,c}=\mu_{b,c}+\alpha \cdot \varepsilon_{b,c}^{\mu} \cdot \Sigma_{\mu_c}\\
    \hat{\sigma}_{b,c}=\sigma_{b,c}+\beta \cdot \varepsilon_{b,c}^{\sigma} \cdot \Sigma_{\sigma_c}
\end{array}\right.,
\end{equation}
where $\alpha$ and $\beta$ are the noise intensities for the mean and standard deviation, respectively. Having the resampled $\hat{\mu}_{b,c}$ and $\hat{\sigma}_{b,c}$ , $\mathbf{z}_{b,c}$ is finally reconstructed as
\begin{equation}\label{eq.reConstuctMap}
    \hat{\mathbf{z}}_{b,c}=\hat{\sigma}_{b,c}\times\frac{\mathbf{z}_{b,c}-\mu_{b,c}}{\sigma_{b,c}}+\hat{\mu}_{b,c},
\end{equation}
where the operations $-$, $\times$, and $+$ indicate element-wise subtraction, multiplication, and addition, respectively, $b=1,\ldots,B$ and $c=1,\ldots,C$.

Note that the reconstruction depends only on the statistical information of the current batch, thus the FSU module can be independently employed in {\it\textbf{training}}, {\it\textbf{attacking}}, and {\it\textbf{predicting}} stages. In training, it encourages the network to learn some uncertainty via Gaussian noise; in attacking and predicting, it reconstructs features in each epoch to recover some domain characteristics for classification.

\begin{figure}[t] 
\centering
\includegraphics[width=1.0\linewidth]{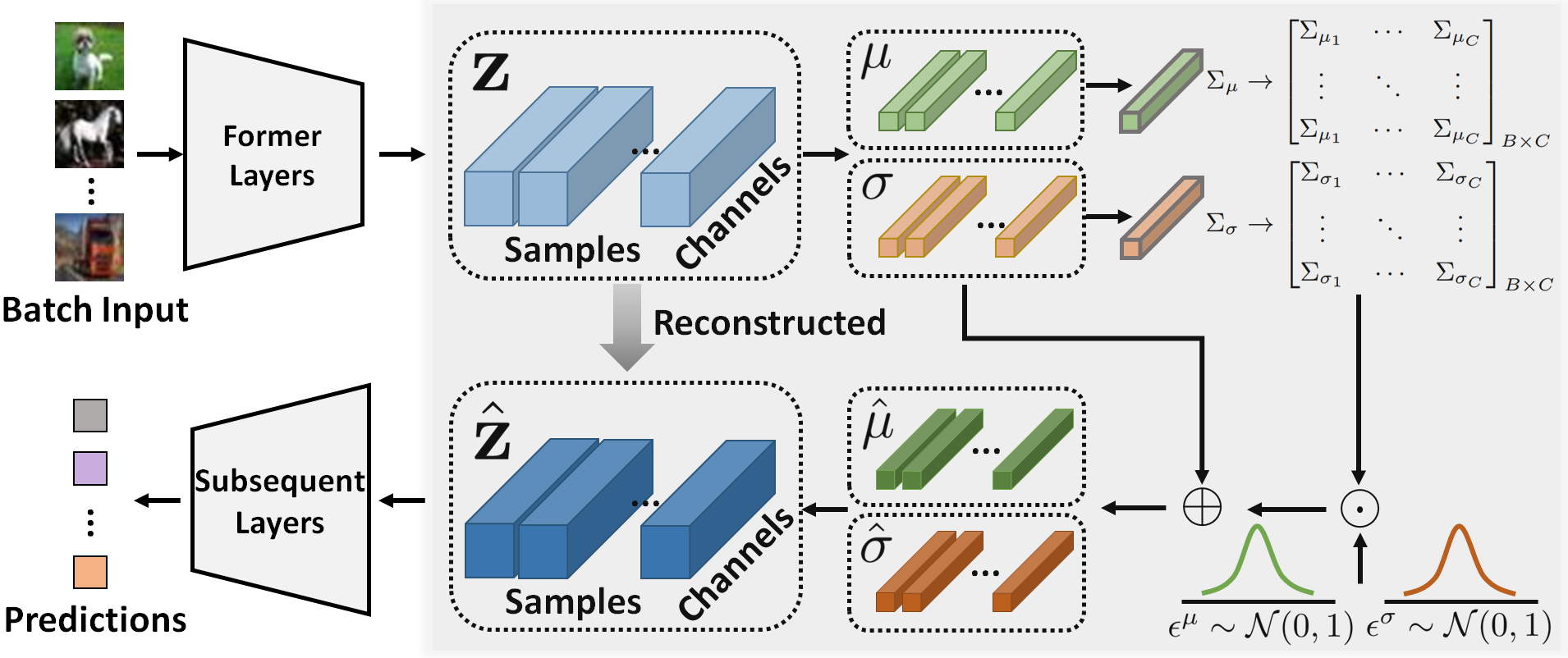}
\caption{A schematic view of the proposed FSU module.}
\label{fig.overview}
\end{figure}

\subsection{Intuitive Explanations}

\begin{theorem}\label{thm1}
Let $\hat{\mu}_{b,c}$ and $\hat{\sigma}_{b,c}$ be the reconstructed $\mu_{b,c}$ and $\sigma_{b,c}$ based on Eq.~(\ref{eq.reConstuctMS}), then there are ${\rm var}[\hat{\mu}_{b,c}]\geq{\rm var}[{\mu}_{b,c}]$ and ${\rm var}[\hat{\sigma}_{b,c}]\geq{\rm var}[{\sigma}_{b,c}]$.
\end{theorem}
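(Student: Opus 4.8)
The plan is to read off the distributional consequence of the re-parameterization in Eq.~(\ref{eq.reConstuctMS}) and to show that injecting zero-mean noise can only inflate the variance. First I would fix the meaning of the variance operator: consistent with Eqs.~(\ref{eq.statistics_mean})--(\ref{eq.statistics}) and the distributional form in Eq.~(\ref{eq.meanAndStd}), I treat $b$ as ranging over the batch, so that ${\rm var}[\mu_{b,c}]=\Sigma_{\mu_c}^2$ and ${\rm var}[\sigma_{b,c}]=\Sigma_{\sigma_c}^2$ hold by definition, while $\varepsilon_{b,c}^{\mu},\varepsilon_{b,c}^{\sigma}\sim\mathcal{N}(0,1)$ are the independently drawn correction terms from Eq.~(\ref{eq.noise}). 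The whole proof then reduces to a variance decomposition of a sum of a data term and an independent noise term.

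For the mean, I would write $\hat{\mu}_{b,c}=\mu_{b,c}+\alpha\,\Sigma_{\mu_c}\,\varepsilon_{b,c}^{\mu}$ and expand
\begin{equation}
{\rm var}[\hat{\mu}_{b,c}]={\rm var}[\mu_{b,c}]+2\alpha\,\Sigma_{\mu_c}\,{\rm cov}[\mu_{b,c},\varepsilon_{b,c}^{\mu}]+\alpha^2\,\Sigma_{\mu_c}^2\,{\rm var}[\varepsilon_{b,c}^{\mu}].
\end{equation}
The crucial step is that the cross term vanishes: since $\varepsilon_{b,c}^{\mu}$ is drawn independently of the data-derived quantity $\mu_{b,c}$ and has mean zero, ${\rm cov}[\mu_{b,c},\varepsilon_{b,c}^{\mu}]=0$. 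Using ${\rm var}[\varepsilon_{b,c}^{\mu}]=1$ leaves ${\rm var}[\hat{\mu}_{b,c}]={\rm var}[\mu_{b,c}]+\alpha^2\Sigma_{\mu_c}^2$, and since the added term is non-negative the claimed inequality ${\rm var}[\hat{\mu}_{b,c}]\geq{\rm var}[\mu_{b,c}]$ follows immediately. The argument for $\hat{\sigma}_{b,c}=\sigma_{b,c}+\beta\,\Sigma_{\sigma_c}\,\varepsilon_{b,c}^{\sigma}$ is identical word for word, giving ${\rm var}[\hat{\sigma}_{b,c}]={\rm var}[\sigma_{b,c}]+\beta^2\Sigma_{\sigma_c}^2\geq{\rm var}[\sigma_{b,c}]$.

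The main obstacle I anticipate is not the algebra but pinning down exactly which randomness the variance is taken over, and being honest about the finite-batch estimator. If one insists on the empirical batch variance $\frac{1}{B}\sum_b(\cdot)^2$ with the noise held as a realized sample, the cross term is only zero in expectation over $\varepsilon$, and the expected sample variance of the noise carries a Bessel-type factor, so the clean identity becomes $\mathbb{E}_{\varepsilon}[{\rm var}[\hat{\mu}_{b,c}]]=(1+\alpha^2\tfrac{B-1}{B})\Sigma_{\mu_c}^2$. This still preserves the inequality because $\alpha^2\tfrac{B-1}{B}\geq0$, so I would either state the result in expectation or simply invoke the independence-based additivity above; either way the conclusion is robust. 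I would also remark that equality holds precisely in the degenerate cases $\alpha=0$ (resp. $\beta=0$), which cleanly ties the strictness of the variance enlargement to the noise intensities, the very mechanism the FSU module relies on to counteract the variance shrinkage established in Propositions~\ref{proposition1}--\ref{proposition3}.
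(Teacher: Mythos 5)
Your proof is correct and follows essentially the same route as the paper's: both rely on the independence of the zero-mean correction term from the data-derived statistic, so that the variance of the sum decomposes additively into ${\rm var}[\mu_{b,c}]+\alpha^2\Sigma_{\mu_c}^2=(1+\alpha^2)\Sigma_{\mu_c}^2$ (and likewise for $\hat{\sigma}_{b,c}$), with the inequality following from $\alpha^2,\beta^2\geq 0$. Your added remarks on the vanishing cross term, the finite-batch Bessel factor, and the equality cases are harmless refinements that do not change the argument.
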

\begin{proof}
Please refer to Appendix~\ref{append.thm1}.
\end{proof}

\paragraph{Perspective 1} As presented in {\bf Theorem}~\ref{thm1}, the reconstruction can enlarge the variances of both the channel-wise mean and standard deviation of feature maps, thereby restoring some distinctions among examples and recovering some discriminative domain characteristics for classification. 

\paragraph{Perspective 2} We derive an approximate distribution of the reconstructed features in Eq.~(\ref{eq.reConstuctMap}). Unfortunately, when ${\mu}_{b,c}$, ${\sigma}_{b,c}$, $\hat{\mu}_{b,c}$, $\hat{\sigma}_{b,c}$ are all treated as random variables, the inference on this distribution will be too complex, since it includes the division of two Gaussian distributions, as well as the multiplication of a Gaussian distribution and a ratio distribution. Given a specific example, we can treat ${\mu}_{b,c}$ and ${\sigma}_{b,c}$ as fixed values, and according to the proof of {\bf Theorem}~\ref{thm1} in Appendix~\ref{append.thm1}, we approximately let $\hat{\mu}_{b,c}\sim\mathcal{N}(\mu_{b,c},(1+\alpha^2)\Sigma_{\mu_c}^2)$ and $\hat{\sigma}_{b,c}\sim\mathcal{N}(\sigma_{b,c},(1+\beta^2)\Sigma_{\sigma_c}^2)$. Thus, according to Eq.~(\ref{eq.reConstuctMap}), each feature $\hat{z}_{b,c}$ in feature map $\hat{\mathbf{z}}_{b,c}$ also follows a Gaussian distribution, whose expectation is estimated as
\begin{equation}
\begin{aligned}
        \mathbb{E}[\hat{z}_{b,c}] & = \mathbb{E}[\hat{\sigma}_{b,c}]\times\frac{z_{b,c}-\mu_{b,c}}{\sigma_{b,c}}+\mathbb{E}[\hat{\mu}_{b,c}]\\
        & = \sigma_{b,c}\times\frac{z_{b,c}-\mu_{b,c}}{\sigma_{b,c}}+\mu_{b,c}\\
        & = z_{b,c}
\end{aligned},
\end{equation}
and variance is estimated as 
\begin{equation}
\label{variance_of_rx}
\begin{aligned}
{\rm var}[\hat{z}_{b,c}] &= {\rm var}[\hat{\sigma}_{b,c}]  \left(\frac{z_{b,c}-\mu_{b,c}}{\sigma_{b,c}}\right)^2 + {\rm var}[\hat{\mu}_{b,c}]\\
        &= (1+\alpha^2)\Sigma_{\sigma_c}^2  \left(\frac{z_{b,c}-\mu_{b,c}}{\sigma_{b,c}}\right)^2 + (1+\beta^2)\Sigma_{\mu_c}^2
\end{aligned}.
\end{equation}
Finally, we approximately have 
\begin{equation}
\label{distribution_of_rx}
\small
\begin{aligned}
&\hat{z}_{b,c}\thicksim \\
&\mathcal{N}\left(z_{b,c},\ \  (1+\alpha^2)\Sigma_{\sigma_c}^2  \left(\frac{z_{b,c}-\mu_{b,c}}{\sigma_{b,c}}\right)^2 + (1+\beta^2)\Sigma_{\mu_c}^2\right)
\end{aligned}.
\end{equation}

\begin{figure}[t]
    \centering
    \vspace{-0.3cm}
    \subfloat{\includegraphics[width=1.3cm]{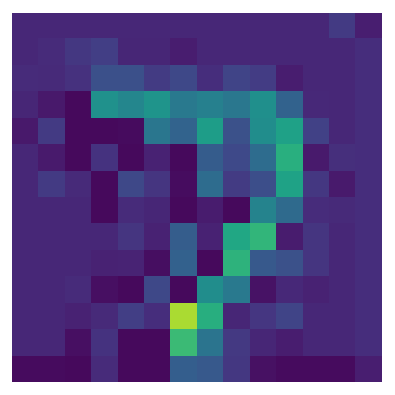}}\quad
    \subfloat{\includegraphics[width=1.3cm]{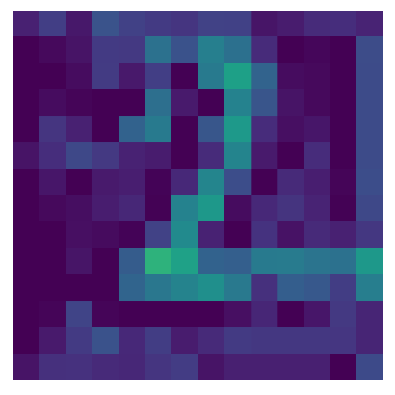}}\quad
    \subfloat{\includegraphics[width=1.3cm]{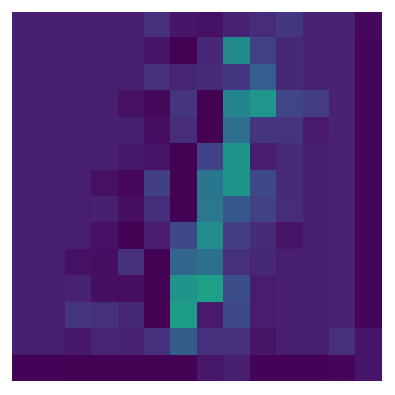}}\quad
    \subfloat{\includegraphics[width=1.3cm]{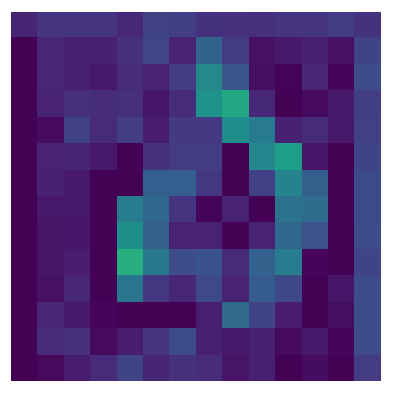}}\quad
    \subfloat{\includegraphics[width=1.3cm]{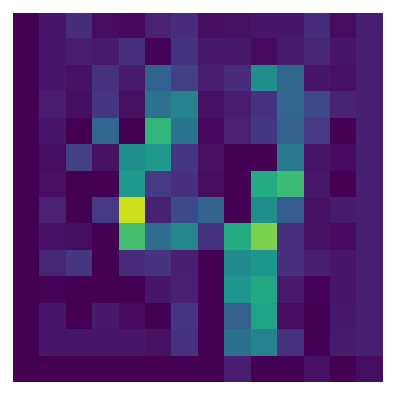}}\quad
    \subfloat{\includegraphics[width=0.45cm]{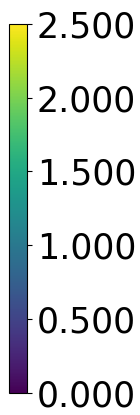}}\\
    \subfloat{\includegraphics[width=1.3cm]{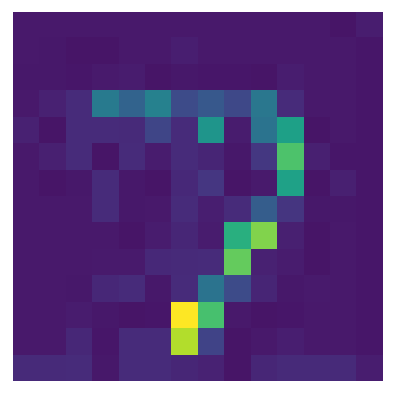}}\quad
    \subfloat{\includegraphics[width=1.3cm]{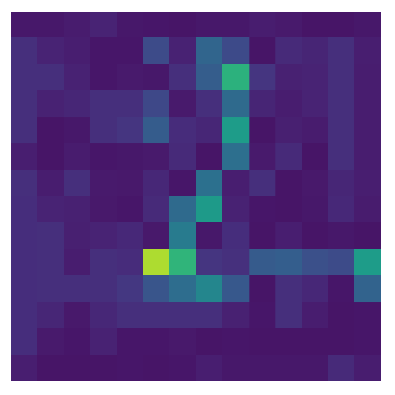}}\quad
    \subfloat{\includegraphics[width=1.3cm]{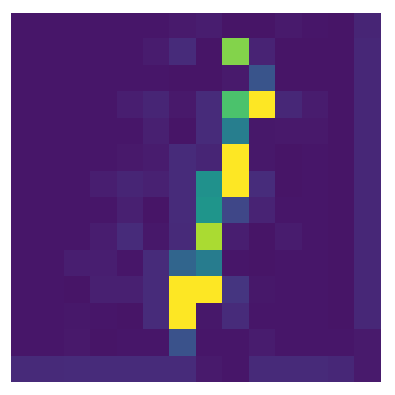}}\quad
    \subfloat{\includegraphics[width=1.3cm]{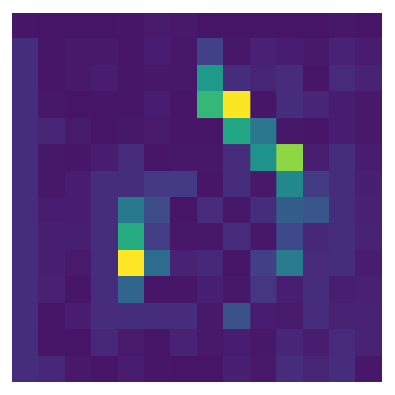}}\quad
    \subfloat{\includegraphics[width=1.3cm]{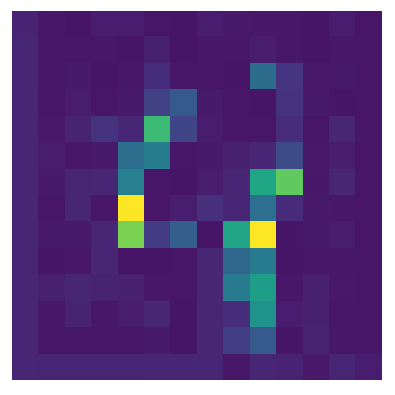}}\quad
    \subfloat{\includegraphics[width=0.45cm]{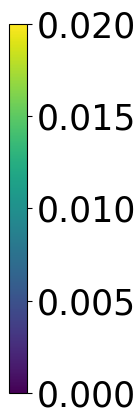}}
    \vspace{-0.1cm}
    \caption{Feature maps (first line) and variance graphs calculated by Eq.~(\ref{variance_of_rx}) (second line) of FGSM-perturbed examples in MNIST.}
    \vspace{-0.1cm}
    \label{variance_map}
\end{figure}

Based on this approximate inference, $\hat{\mathbf{z}}_{b,c}$ is essentially a resampled feature map from a multivariate Gaussian distribution, whose mean is $\mathbf{z}_{b,c}$ and variance is determined by normalized $\mathbf{z}_{b,c}$ and batch-level statistical values $(\Sigma_{\mu_c}^2,\Sigma_{\sigma_c}^2)$. At the pixel level, a larger deviation of the feature value from the mean corresponds to a higher variance, resulting in greater amplitudes of resampling. Fig.~\ref{variance_map} shows several FGSM-perturbed feature maps in dataset MNIST through the naturally trained LeNet-5 model, i.e., ${\mathbb{E}}[\hat{\mathbf{z}}_{b,c}]$, and the variance graphs calculated by Eq.~(\ref{variance_of_rx}), i.e., ${\rm var}[\hat{\mathbf{z}}_{b,c}]$. Obviously, points with large variances correspond to significant features with important semantic information. Adversarial attacks make such significant feature points be less discriminative from the background, thus misleading the established model into making false predictions. The FSU module reconstructs such points with larger variances. This process, to some extent, mitigates the influence of adversarial perturbations.

\subsection{FSU for Testing or Adversarial Training}\label{subsec.twoSolutions}

Many works have demonstrated that models learned by natural training may not be robust enough to defend against powerful attacks. Especially, introducing randomness to the network may endow the model with robustness against some mild attacks such as FGSM, BIM, MIM, etc. However, when confronted with strong attacks such as large-step PGD, CW, AA, etc., the robustness may drop seriously. In the proposed FSU, the feature statistics $(\mu,\sigma)$ can be seen as low-dimensional representations of examples. Although they carry some domain characteristics of data, the model is still difficult to defend against powerful attacks if adversarial information is not introduced during training. To deal with this problem, the following two solutions can be considered: 
\begin{enumerate}
\item Selecting a robust model established by adversarial training and incorporating the FSU module into the testing phase (including attacking and predicting).
\item Incorporating the FSU module into an adversarial training or model fine-tuning process by introducing some directional information at the level of feature statistics.
\end{enumerate}
Note that the first solution requires the examples to be tested in batch to obtain the distributions of feature statistics, which is unsuitable in many practical attacking scenarios. Therefore, we mainly focus on the second solution. As introduced in Section~\ref{subsec.robustness}, classic adversarial training uses PGD attack to generate perturbations, and greater perturbations can better expose the vulnerabilities of the model. However, without directional information, the FSU module tends to restore some features of the perturbed examples, weakening the perturbations and negatively affecting adversarial training. Thus, we introduce some directional information at the level of feature statistics, such that the vulnerabilities of the model can be learned better. Specifically, for the uncertain correction terms in Eq.~(\ref{eq.noise}), i.e., $\varepsilon^{\mu}$ and $\varepsilon^{\sigma}$, we determine their sampling intervals during each attacking epoch channel-wisely as

\begin{equation}\label{eq.muInterval}
\small
{\rm \varepsilon^{\mu}\in}\left\{
\begin{array}{ll}
(-\infty,0), & {\rm if}\ \mu'-\mu< 0,\\
(0,+\infty), & {\rm if}\ \mu'-\mu> 0,\\
(-\infty,+\infty), & {\rm otherwise},\\
\end{array}\right.
\end{equation}
and
\begin{equation}\label{eq.sigmaInterval}
\small
{\rm \varepsilon^{\sigma}\in}\left\{
\begin{array}{ll}
(-\infty,0), & {\rm if}\ \sigma'-\sigma< 0,\\
(0,+\infty), & {\rm if}\ \sigma'-\sigma> 0,\\
(-\infty,+\infty), & {\rm otherwise},\\
\end{array}\right.
\end{equation}
such that a reduced offset can become smaller and an increased offset can become larger, amplifying perturbations at the level of feature statistics and enhancing adversarial training. 

Finally, the details of the proposed method are described in Algorithm~\ref{alg.FSU}. As a remark, this algorithm can be used to train / fine-tune any preliminary / well-established models. 

\begin{algorithm}[t]
\small
\renewcommand\arraystretch{0.5}
\caption{Training with FSU}\label{alg.FSU}
\KwIn{Initial model $F_{\theta}$, training set $\mathbb{D}$, learning rate $\eta$, attacking method, noise intensity factors $\alpha\geq0$ and $\beta\geq0$.}
\KwOut{Updated model $F_{\theta}$.}
\For{$t=1:{\rm num\_epoch}$}{
\For{$e=1:{\rm num\_batch}$}{
\For{${\rm each\ clean\ example}~\mathbf{x}~{\rm in\ the\ batch}$}{
Apply attacking method to $\mathbf{x}$ and get perturbed $\mathbf{x}'$;
}
Forward propagate the batch of $\mathbf{x}$ and $\mathbf{x}'$;\\
\If{${\rm reconstruction\ layer}$}{
Extract hidden feature maps of $\mathbf{x}$ and $\mathbf{x}'$, i.e, $\mathbf{z}$ and $\mathbf{z}'$;\\
Calculate channel-wise $\mu$ and $\sigma$ for each $\mathbf{z}$;\\
Calculate channel-wise $\mu'$ and $\sigma'$ for each $\mathbf{z}'$;\\
Calculate ($\Sigma_{\mu'},\Sigma_{\sigma'}$) for the batch of $\mathbf{z}'$ via Eq.~(\ref{eq.statistics});\\
Determine the sampling interval channel-wisely for $\varepsilon^{\mu}$ and $\varepsilon^{\sigma}$ based on (\ref{eq.muInterval}) and (\ref{eq.sigmaInterval});\\
Resample $\hat{\mu}'$ and $\hat{\sigma}'$ for each $\mathbf{z}'$ via Eq.~(\ref{eq.reConstuctMS});\\
Reconstruct each $\hat{\mathbf{z}}'$ via Eq.~(\ref{eq.reConstuctMap});\\
Propagate the feature maps to the subsequent layer;\\
}
Back-propagate to update $\theta$ via gradient descent based on the batch of $\mathbf{x}$, $\mathbf{x}'$ and $\hat{\mathbf{x}}'$;\\
}
}
\textbf{return} Updated model $F_{\theta}$.\\
\end{algorithm}

\section{Experiments}\label{sec.exp}
In this section, we will conduct extensive experiments to validate the applicability and effectiveness of the FSU module in robustness enhancement.

\subsection{Experimental Setup}\label{subsec.setup}

\paragraph{Datasets} We evaluate model robustness on widely used benchmark datasets, including MNIST~\cite{lecun1998gradient}, CIFAR10~\cite{krizhevsky2009learning}, SVHN~\cite{goodfellowmulti} and CIFAR100~\cite{krizhevsky2009learning}. The details of these datasets are provided in Table~\ref{tab.datasets}.

\paragraph{Evaluation Protocols} Various benchmark attacks are performed, including FGSM~\cite{fgsm}, BIM~\cite{bim}, MIM~\cite{mim}, PGD~\cite{pgd}, CW~\cite{cw}, FAB~\cite{Croce2020Minimally} and AA~\cite{Croce2020Reliable}. Among them, FGSM, BIM, MIM and PGD are typical gradient-based methods; CW is an optimization-based method; FAB is an adaptive boundary arrack and AA is an ensemble method that integrates both white-box and black-box attacks. For the perturbation constraint, FGSM, BIM, MIM, PGD, FAB and AA are $l_{\infty}$-bounded; CW is both $l_{2}$ and $l_\infty$-bounded. The attack strength is set as $\epsilon=0.3$ for MNIST and is set as $\epsilon=8/255$ for CIFAR10, SVHN and CIFAR100. 

\begin{remark}\label{remark2}
For CW, it is necessary to point out that there exist three different implementation versions: 
\begin{itemize}
\item {\bf AdverTorch}\footnote{\url{https://github.com/BorealisAI/advertorch}}: the strongest but slowest one that implements binary search for finding the best parameter $c$.
\item {\bf TorchAttack}\footnote{\url{https://github.com/Harry24k/adversarial-attacks-pytorch}}: a faster but weaker version of AdverTorch that omits the binary search process.
\item {\bf PGD-100}\footnote{\url{https://github.com/TLMichael/Hypocritical-Perturbation/}}: a PGD-based version that performs 100-step PGD by replacing the loss function with the CW loss.
\end{itemize}
It is noteworthy that many existing works adopt the {\bf PGD-100}-version for CW evaluation. Based on our empirical studies, the robust accuracy of most models will drop significantly when confronted with real CW attacks. Since the time cost of $l_\infty$-CW evaluation by {\bf AdverTorch} and {\bf TorchAttack} is too high to afford, the $l_\infty$-CW attack usually adopts the {\bf PGD-100}-version, and $l_2$-CW attack can adopt the {\bf AdverTorch}- and {\bf TorchAttack}-versions.
\end{remark}

\paragraph{Platform} The empirical studies are realized using python language with PyTorch framework. Both {\bf AdverTorch} and {\bf TorchAttack} libraries are used for evaluation. All experiments are conducted on a GeForce RTX 2080ti GPU with CUDA 10.1 and Ubuntu 16.04.

\begin{table}[t]
\caption{Details of the selected datasets.}\label{tab.datasets}
\centering
\vspace{-0.2cm}
\scalebox{0.8}{
\begin{tabular}{lrrrrr}
\toprule
Dataset  & \# Train Samples & \# Test Samples & \# Classes& \# Channels & Image Size\\
\midrule  
MNIST & 60,000 & 10,000 & 10  & 1 & 28$\times$28 \\
CIFAR10 & 60,000 & 10,000 & 10 & 3 & 32$\times$32 \\
SVHN & 73,257 & 26,032 & 10 & 3 & 32$\times$32 \\
CIFAR100 & 60,000 & 10,000 & 100 & 3 & 32$\times$32 \\
\bottomrule
\end{tabular}}
\end{table}

\begin{table*}[t]
    \centering
    \caption{Robustness evaluation (\%) (left part) and proportion of decision changes by the FSU module (right part) on datasets MNIST, CIFAR10, SVHN and CIFAR100.}
    \vspace{-0.2cm}
    \label{tab:alltab}
    \renewcommand\arraystretch{0.9}
    \scalebox{0.75}{
    \begin{tabular}{lccccccccccccl}
    \toprule
    Method & Clean & FGSM$_\infty$ & BIM$_\infty$ & MIM$_\infty$ & PGD$_\infty$ & \#Epochs / Time & $P$ & Clean & FGSM$_\infty$ & BIM$_\infty$ & MIM$_\infty$ & PGD$_\infty$ & ($\alpha$,$\beta$)\\
    &&{\tiny (TorchAttack)}&{\tiny (TorchAttack)}&{\tiny (TorchAttack)}&{\tiny (TorchAttack)}&&&&{\tiny (TorchAttack)}&{\tiny (TorchAttack)}&{\tiny (TorchAttack)}&{\tiny (TorchAttack)}&\\
    &&&($2/255$)&($2/255$)&($2/255$)&&&&&($2/255$)&($2/255$)&($2/255$)&\\
    \midrule
    \multicolumn{14}{c}{\pmb{MNIST} ($\epsilon=0.3$)\quad 40 steps for BIM, MIM and PGD} \\ 
    \cdashline{1-14}[1pt/1pt]\noalign{\smallskip}
    Baseline-NT & 99.19 & 17.34 & 00.21 & 00.58 & 00.05 & 100 / 944s & $P_1$ (\%)  & 00.13$\pm$0.03 & 75.61$\pm$2.11 & 79.93$\pm$2.08 & 75.23$\pm$2.34 & 78.87$\pm$2.42 & (/, /)\\
    FSU-NT & 99.16$\pm$0.04 & 90.59$\pm$1.02 & 91.63$\pm$0.52 & 85.86$\pm$1.53 & 85.42$\pm$1.18 & 100 / 1306s & $P_2$ (\%)  & 43.15$\pm$6.79 & 97.10$\pm$0.90 & 92.49$\pm$2.16 & 80.62$\pm$5.57 & 90.61$\pm$1.89 & (1, 1)\\
    \midrule
    \multicolumn{14}{c}{\pmb{CIFAR10} ($\epsilon=8/255$)\quad 10 steps for BIM, MIM and PGD} \\
    \cdashline{1-14}[1pt/1pt]\noalign{\smallskip}
    Baseline-NT & 92.68 & 10.84 & 00.00 & 00.04 & 00.01 & 200 / 6826s & $P_1$ (\%) & 01.51$\pm$0.07 & 66.39$\pm$0.79 & 65.46$\pm$0.61 & 65.06$\pm$0.55 & 66.79$\pm$0.63 & (/, /)\\
    FSU-NT  & 92.36$\pm$0.21 & 73.24$\pm$0.44 & 59.11$\pm$0.78 & 57.06$\pm$0.80 & 59.68$\pm$0.59 & 200 / 7152s & $P_2$ (\%) & 35.43$\pm$4.99 & 94.36$\pm$0.40 & 90.44$\pm$0.74 & 87.54$\pm$1.08 & 89.49$\pm$0.47 & (1, 1)\\
    \midrule
    \multicolumn{14}{c}{\pmb{SVHN} ($\epsilon=8/255$)\quad 10 steps for BIM, MIM and PGD} \\
    \cdashline{1-14}[1pt/1pt]\noalign{\smallskip}
    Baseline-NT & 96.40 & 32.14 & 01.16 & 01.59 & 01.35 & 100 / 4715s & $P_1$ (\%) & 00.49$\pm$0.04 & 54.07$\pm$0.99 & 74.20$\pm$0.65 & 73.81$\pm$0.52 & 75.73$\pm$0.55 & (/, /)\\
    FSU-NT  & 96.25$\pm$0.07& 80.06$\pm$0.33 & 64.98$\pm$0.80 & 58.92$\pm$0.85 & 67.83$\pm$0.93 & 100 / 4985s & $P_2$ (\%) & 35.80$\pm$3.93 & 88.47$\pm$0.52 & 86.32$\pm$0.49 & 78.04$\pm$0.61 & 88.05$\pm$0.60 & (1, 1)\\
    \midrule
    \multicolumn{14}{c}{\pmb{CIFAR100} ($\epsilon=8/255$)\quad 10 steps for BIM, MIM and PGD} \\
    \cdashline{1-14}[1pt/1pt]\noalign{\smallskip}
    Baseline-NT & 73.64 & 09.41 & 00.18 & 00.29 & 00.13 &300 / 10790s & $P_1$ (\%) & 17.13$\pm$0.15 & 56.67$\pm$0.31 & 63.48$\pm$0.21 & 64.21$\pm$0.16 & 64.42$\pm$0.33 & (/, /)\\
    FSU-NT & 70.03$\pm$0.15 & 49.39$\pm$0.25 & 36.06$\pm$0.35 & 42.20$\pm$0.21 & 34.76±0.25 & 300 / 11026s & $P_2$ (\%) & 10.58$\pm$0.84 & 61.23$\pm$0.32 & 62.43$\pm$0.37 & 67.75$\pm$0.33 & 59.60$\pm$0.24 & (4, 1)\\
    \bottomrule
    \end{tabular}
    }
\vspace{-0.2cm}
\end{table*}

\subsection{Natural Training for Mild Attacks}\label{subsec.mild}

\subsubsection{Settings} 
In this section, a pure backbone DNN model is learned by natural training for each dataset, with the classic softmax cross-entropy loss. 

\textbf{Backbones and Parameters:} LeNet-5 is used as the backbone for MNIST; while ResNet-18 is used for CIFAR10, SVHN and CIFAR100. The FSU module is employed in both training and testing (including attacking and predicting) phases. In training, adaptive moment estimation (Adam) is used for optimization, with an initial learning rate of $0.001$, a cosine decay period of $100$, a minimum learning rate of $10^{-6}$, and a batch size of $128$. The training epochs for the four datasets are $100$, $200$, $100$ and $300$, respectively. 

\textbf{Training and Evaluation:} The whole training procedure does not introduce any additional information, i.e., adversarial perturbation, extra or augmented data, auxiliary architecture, etc. The only modification is the insertion of the FSU module between two layers of the backbone, which performs a feature reconstruction for each example. The FSU module is also employed for testing (includes attacking and predicting), where the examples are tested batch-by-batch with batch size $1000$. Since the FSU module only relies on the feature statistics of the current batch, the reconstructions during training and testing do not affect each other. Typical gradient-based FGSM, BIM, MIM, and PGD attacks ({\bf TorchAttack} library) are performed. The number of attacking iterations for BIM, MIM, and PGD is 40 (with step size $2/255$) for MNIST, and 10 (with step size $2/255$) for CIFAR10, SVHN, and CIFAR100. The noise intensity parameter $\beta$ for feature Std is fixed as $1$, and the noise intensity parameter $\alpha$ for feature mean is searched in $[0.2,6]$ with step size $0.2$. The best $\alpha$ is selected by PGD-10. Both $\alpha$ and $\beta$ keep consistent in training and testing for a dataset. The random seed for $\mathcal{N}(0, 1)$ in Eq.~(\ref{eq.noise}) is consistently set as $6666$. Each experiment is conducted for $10$ trials, where the mean and standard deviation are recorded. 

\textbf{Reconstruction Positions:} The FSU module can be inserted after any hidden layer of the network. Different reconstruction positions may result in different performances. We choose the reconstruction position based on two-fold cross-validation on the training set under the FGSM attack.
\begin{itemize}
\item For MNIST, we employ it after the first and second pooling layers during training, and after the first pooling layer during attacking and predicting. 
\item For CIFAR10, we employ it after the first convolution block during training, and after the fourth convolution block during attacking and predicting. 
\item For SVHN, we employ it after the third convolution block during training, and after the fourth convolution block during attacking and predicting. 
\item For CIFAR100, we employ it after the fourth convolution block during training, attacking, and predicting. 
\end{itemize}

\subsubsection{Empirical Studies} 
Basically, the following analyses can be made for this part of experiment.

\textbf{Overall Performance:} The left part of Table~\ref{tab:alltab} reports the clean accuracy, robust accuracy, and time cost of natural training without FSU (Baseline-NT) and with FSU (FSU-NT). Obviously, the robustness has been significantly improved by FSU under all attacks. More importantly, the time cost of FSU is substantially the same as the baseline, which is much more efficient than existing mainstream defense methods. Furthermore, the right part of Table~\ref{tab:alltab} reports the proportion of examples with decision change caused by FSU ($P_1$) and the proportion of examples in $P_1$ with decision change from wrong to correct ($P_2$). For clean examples, $P_1$ is very small, indicating that the FSU module has trivial impacts on clean examples. However, for perturbed examples, the value of $P_1$ under all attacks is large. Moreover, the proportion of decisions changed from wrong to correct (i.e., $P_2$) is high, i.e., batch-level feature statistics successfully helped many examples resist the attacks. 

\textbf{Ablation Study:} Fig.~\ref{fig.ablation} shows the robust accuracy against FGSM attack on MNIST in the cases of employing the FSU module in different stages. It can be observed that employing FSU only in training has similar accuracy to the baseline. This is a rational phenomenon, since no adversarial information is introduced during training, the reconstruction of clean training data has trivial impacts on test robustness. Despite this, employing FSU in training, attacking and predicting phases shows a significant improvement. As aforementioned, in training, FSU encourages the network to learn some uncertainty via Gaussian noise; while in testing, FSU reconstructs features in each epoch of attacking to recover some domain characteristics for classification.

\textbf{Comparison with State-of-the-Art Methods:} We further compare the FSU method with recent state-of-the-art adversarial training methods, including PCL (ICCV'19)~\cite{pcl}, TRADES (ICML'19)~\cite{trades}, AT-AWP (NeurIPS'20)~\cite{wu2020Adversarial}, MAIL-TRADES (NeurIPS'21)~\cite{Wang2021Probabilistic}, MLCAT$_{\rm WP}$ (ICML'22)~\cite{yu2022Understanding}, AT+RiFT (ICCV'23)~\cite{Zhu2023Improving}, and DKL (NeurIPS'24)~\cite{cui2024decoupled}. Besides, an up-to-date non-adversarial training method is also compared, i.e., SAM  (ICML'24)~\cite{zhang2024duality}. The backbones, training parameters, and training strategies all adopt the recommendations in the original papers. However, the models trained by AT-AWP and MLCAT$_{\rm wp}$ for MNIST under the default settings fail to show any robustness against the chosen attacks. Thus, AT-AWP is re-trained on MNIST for $200$ epochs by PGD-40 with step size $0.03$, and MLCAT$_{\rm wp}$ is re-trained on MNIST for $50$ epochs by PGD-40 with step size $0.02$. For each adversarial training method, the best model is selected via PGD-10 with step size $2/255$. The comparison results are reported in Table~\ref{tab:mild} (Please refer to Appendix~\ref{append.mild}), where the results of FSU are the average of 10 trials. It can be seen that FSU did not sacrifice much clean accuracy; at the same time, it shows state-of-the-art performance under many attacks, especially on datasets CIFAR10 and CIFAR100. In addition, the training cost of FSU is only a few tenths of that of other methods, since it utilizes natural training rather than adversarial training. From this perspective, FSU achieves significant robustness at a very low time cost. 

\begin{remark}\label{remark3}
It is noteworthy that applying the FSU module for testing requires the examples to be perturbed in batch, which is unsuitable in many practical attacking scenarios. Besides, many works have demonstrated that models learned by natural training may not be robust enough when confronted with powerful attacks. In the next subsection, we will try to overcome these limitations.
\end{remark}

\begin{table*}[t]
    \centering
    \caption{Robust accuracy (\%) under various attacks.}
    \vspace{-0.2cm}
    \label{tab:powerful}
    \renewcommand\arraystretch{0.9}
    \scalebox{0.75}{
    \begin{tabular}{lp{1.4cm}lllllllllll}
    \toprule
    &\underline{Clean}&\multicolumn{7}{c}{\underline{Gradient}} &\underline{Adaptive} &\underline{Ensemble} &\underline{Optimization} &($\alpha$,$\beta$)\\
    Attack& Cln. & FGSM$_\infty$ & BIM$_\infty$-10 & MIM$_\infty$-40 & PGD$_\infty$-20 &PGD$_\infty$-40 &PGD$_\infty$-100 & CW$_\infty$ & FAB$_\infty$ & AA$_\infty$ & CW$_2$ &\\
    &&{\scriptsize(AdverTorch)}&{\scriptsize(AdverTorch)}&{\scriptsize(AdverTorch)}&{\scriptsize(AdverTorch)}&{\scriptsize(AdverTorch)}&{\scriptsize(AdverTorch)}&{\scriptsize(PGD-100)}&{\scriptsize(AutoAttack)}&{\scriptsize(AutoAttack)}&{\scriptsize(TorchAttack)}&\\
    Method&&&{\scriptsize(10, 2/255)}&{\scriptsize(40, 2/255)}&{\scriptsize(20, 2/255)}&{\scriptsize(40, 2/255)}&{\scriptsize(100, 2/255)}&{\scriptsize(100, 2/255)}&{\scriptsize(5, 100)}&&{\scriptsize(50, 0.01)}\\
    \midrule																											
    \multicolumn{13}{c}{\pmb{MNIST} ($\epsilon=0.3$)}\\																											
    \cdashline{1-13}[1pt/1pt]\noalign{\smallskip}																											
TRADES	&	99.53 	&	97.55 	&	99.10 	&	96.33 	&	93.92 	&	98.45 	&	97.26 	&	95.72 	&	98.16 	&	92.98 	&	95.77 	& (/,	/)\\
TRADES+FSU	&	99.50 	&	97.68 	$\bullet$ &	99.07 	$\circ$ &	96.46 	$\bullet$ &	94.16 	$\bullet$ &	98.48 	$\bullet$ &	97.23 	$\circ$ &	95.83 	$\bullet$ &	98.07 	$\circ$ &	93.03 	$\bullet$ &	96.03 	$\bullet$ & (0.3, 1)\\
\cdashline{1-13}[1pt/1pt]\noalign{\smallskip}																											
AT-AWP	&	98.73 	&	95.47 	&	98.26 	&	94.82 	&	97.53 	&	96.20 	&	94.54 	&	94.48 	&	93.65 	&	93.18 	&	98.34 	& (/,	/)\\
AT-AWP+FSU	&	98.85 	&	95.86 	$\bullet$ &	98.26 	$\bullet$ &	95.09 	$\bullet$ &	97.61 	$\bullet$ &	96.35 	$\bullet$ &	94.94 	$\bullet$ &	94.77 	$\bullet$ &	93.98 	$\bullet$ &	93.36 	$\bullet$ &	98.52 	$\bullet$ & (0.5, 1)\\
\cdashline{1-13}[1pt/1pt]\noalign{\smallskip}																											
MAIL-TRADES	&	99.14 	&	97.27 	&	98.72 	&	96.42 	&	98.21 	&	97.09 	&	95.60 	&	95.69 	&	94.78 	&	94.03 	&	98.09 	& (/,	/)\\
MAIL-TRADES+FSU	&	99.21 	&	97.83 	$\bullet$ &	98.89 	$\bullet$ &	97.15 	$\bullet$ &	98.51 	$\bullet$ &	97.66 	$\bullet$ &	96.45 	$\bullet$ &	96.57 	$\bullet$ &	95.86 	$\bullet$ &	95.23 	$\bullet$ &	98.37 	$\bullet$ & (0.1, 1)\\
\cdashline{1-13}[1pt/1pt]\noalign{\smallskip}																											
MLCATWP	&	98.88 	&	97.24 	&	98.42 	&	96.16 	&	97.76 	&	97.02 	&	96.15 	&	96.17 	&	93.95 	&	93.64 	&	97.68 	& (/,	/)\\
MLCAT$_{\rm WP}$+FSU	&	98.82 	&	96.90 	$\circ$ &	98.31 	$\circ$ &	95.87 	$\circ$ &	97.66 	$\circ$ &	96.55 	$\circ$ &	95.44 	$\circ$ &	95.51 	$\circ$ &	94.18 	$\circ$ &	93.77 	$\bullet$ &	97.69 	$\bullet$ & (0.5, 1)\\
\cdashline{1-13}[1pt/1pt]\noalign{\smallskip}																											
AT+RiFT	&	99.44 	&	95.20 	&	99.01 	&	94.25 	&	98.08 	&	95.06 	&	60.72 	&	63.44 	&	62.13 	&	17.46 	&	97.85 	& (/,	/)\\
AT+RiFT+FSU	&	99.41 	&	97.93 	$\bullet$ &	98.92 	$\circ$ &	97.06 	$\bullet$ &	98.43 	$\bullet$ &	97.48 	$\bullet$ &	96.16 	$\bullet$ &	96.22 	$\bullet$ &	94.56 	$\bullet$ &	90.32 	$\bullet$ &	98.18 	$\bullet$ & (0.9, 1)\\
\cdashline{1-13}[1pt/1pt]\noalign{\smallskip}																											
DKL	&	98.80 	&	97.07 	&	98.26 	&	96.73 	&	97.48 	&	96.96 	&	96.49 	&	96.53 	&	94.34 	&	85.00 	&	98.11 	& (/,	/)\\
DKL+FSU	&	98.81 	&	97.87 	$\bullet$ &	98.28 	$\bullet$ &	97.14 	$\bullet$ &	90.83 	$\circ$ &	88.68 	$\circ$ &	79.57 	$\circ$ &	79.00 	$\circ$ &	81.79 	$\circ$ &	04.56 	$\circ$ &	98.22 	$\bullet$ & (0.7, 1)\\

    \midrule
    \multicolumn{13}{c}{\pmb{CIFAR10} ($\epsilon=8/255$)}\\
    \cdashline{1-13}[1pt/1pt]\noalign{\smallskip}
    TRADES	&	85.76 	&	61.61 	&	53.67 	&	53.15 	&	52.64 	&	52.21 	&	52.13 	&	52.59 	&	52.15 	&	50.44 	&	21.47 	& (/,	/)\\
TRADES+FSU	&	86.21 	&	61.51 	$\circ$ &	53.50 	$\circ$ &	52.95 	$\circ$ &	52.36 	$\circ$ &	52.02 	$\circ$ &	51.88 	$\circ$ &	52.65 	$\bullet$ &	52.52 	$\bullet$ &	50.43 	$\circ$ &	22.37 	$\bullet$ & (0.1, 1)\\
\cdashline{1-13}[1pt/1pt]\noalign{\smallskip}																											
AT-AWP	&	84.82 	&	58.90 	&	52.28 	&	51.92 	&	51.38 	&	51.09 	&	50.98 	&	49.66 	&	47.47 	&	46.83 	&	26.00 	& (/,	/)\\
AT-AWP+FSU	&	82.04 	&	60.09 	$\bullet$ &	55.75 	$\bullet$ &	55.58 	$\bullet$ &	55.39 	$\bullet$ &	55.24 	$\bullet$ &	55.25 	$\bullet$ &	51.26 	$\bullet$ &	49.50 	$\bullet$ &	48.88 	$\bullet$ &	45.82 	$\bullet$ & (0.5, 1)\\
\cdashline{1-13}[1pt/1pt]\noalign{\smallskip}																											
MAIL-TRADES	&	84.03 	&	55.96 	&	50.89 	&	50.78 	&	50.60 	&	50.36 	&	50.40 	&	47.91 	&	47.47 	&	46.82 	&	24.14 	& (/,	/)\\
MAIL-TRADES+FSU	&	84.12 	&	57.05 	$\bullet$ &	51.92 	$\bullet$ &	51.76 	$\bullet$ &	51.46 	$\bullet$ &	51.33 	$\bullet$ &	51.28 	$\bullet$ &	48.16 	$\bullet$ &	47.58 	$\bullet$ &	46.97 	$\bullet$ &	35.84 	$\bullet$ & (0.5, 1)\\
\cdashline{1-13}[1pt/1pt]\noalign{\smallskip}																											
MLCATWP	&	85.11 	&	58.87 	&	55.75 	&	55.37 	&	55.56 	&	55.32 	&	55.35 	&	50.27 	&	48.97 	&	47.43 	&	55.32 	& (/,	/)\\
MLCAT$_{\rm WP}$+FSU	&	83.67 	&	60.38 	$\bullet$ &	58.43 	$\bullet$ &	58.02 	$\bullet$ &	58.37 	$\bullet$ &	58.33 	$\bullet$ &	58.36 	$\bullet$ &	51.22 	$\bullet$ &	50.18 	$\bullet$ &	48.66 	$\bullet$ &	62.01 	$\bullet$ & (0.7, 1)\\
\cdashline{1-13}[1pt/1pt]\noalign{\smallskip}																											
AT+RiFT	&	76.01 	&	40.79 	&	34.28 	&	33.93 	&	33.53 	&	33.09 	&	33.00 	&	33.24 	&	30.99 	&	30.03 	&	20.52 	& (/,	/)\\
AT+RiFT+FSU	&	72.00 	&	48.74 	$\bullet$ &	45.64 	$\bullet$ &	45.46 	$\bullet$ &	45.28 	$\bullet$ &	45.22 	$\bullet$ &	45.21 	$\bullet$ &	40.97 	$\bullet$ &	39.32 	$\bullet$ &	38.64 	$\bullet$ &	54.80 	$\bullet$ & (0.7, 1)\\
\cdashline{1-13}[1pt/1pt]\noalign{\smallskip}																											
DKL	&	79.84 	&	59.06 	&	55.70 	&	55.64 	&	55.37 	&	55.27 	&	55.24 	&	51.60 	&	51.35 	&	50.97 	&	50.42 	& (/,	/)\\
DKL+FSU	&	82.13 	&	60.29 	$\bullet$ &	56.35 	$\bullet$ &	56.20 	$\bullet$ &	56.08 	$\bullet$ &	55.80 	$\bullet$ &	55.80 	$\bullet$ &	52.10 	$\bullet$ &	51.66 	$\bullet$ &	51.07 	$\bullet$ &	57.64 	$\bullet$ & (0.3, 1)\\
    \midrule
    \multicolumn{13}{c}{\pmb{SVHN} ($\epsilon=8/255$)}\\
    \cdashline{1-13}[1pt/1pt]\noalign{\smallskip}
    TRADES	&	91.73 	&	69.10 	&	58.63 	&	57.84 	&	57.56 	&	57.19 	&	57.03 	&	53.77 	&	52.84 	&	50.90 	&	12.42 	& (/,	/)\\
TRADES+FSU	&	91.56	&	69.69	$\bullet$ &	61.51	$\bullet$ &	60.80	$\bullet$ &	60.59	$\bullet$ &	60.31	$\bullet$ &	60.18	$\bullet$ &	56.68	$\bullet$ &	56.01	$\bullet$ &	54.66	$\bullet$ &	15.88   $\bullet$ & (0.9, 1)\\
\cdashline{1-13}[1pt/1pt]\noalign{\smallskip}																											
AT-AWP	&	93.39	&	66.62	&	55.78	&	54.40	&	53.42	&	52.23	&	51.74	&	50.77	&	49.31	&	46.88	&	03.92	& (/,	/)\\
AT-AWP+FSU	&	91.69	&	69.16	$\bullet$ &	61.81	$\bullet$ &	61.14	$\bullet$ &	60.68	$\bullet$ &	60.24	$\bullet$ &	59.97	$\bullet$ &	55.31	$\bullet$ &	53.85	$\bullet$ &	51.97	$\bullet$ &	06.98	$\bullet$ & (0.3, 1)\\
\cdashline{1-13}[1pt/1pt]\noalign{\smallskip}																											
MAIL-TRADES	&	92.40 	&	70.74 	&	60.39 	&	59.48 	&	59.57 	&	59.25 	&	59.08 	&	55.54 	&	55.08 	&	52.53 	&	05.68 	& (/,	/)\\
MAIL-TRADES+FSU	&	91.24 	&	71.87 	$\bullet$ &	61.70 	$\bullet$ &	60.75 	$\bullet$ &	60.96 	$\bullet$ &	60.61 	$\bullet$ &	60.51 	$\bullet$ &	56.63 	$\bullet$ &	56.07 	$\bullet$ &	53.69 	$\bullet$ &	09.88 	$\bullet$ & (0.3, 1)\\
\cdashline{1-13}[1pt/1pt]\noalign{\smallskip}																											
AT+RiFT	&	91.33	&	53.80	&	41.57	&	40.83	&	39.79	&	39.15	&	38.75	&	39.70	&	37.58	&	36.04	&	04.49	& (/,	/)\\
AT+RiFT+FSU	&	88.66	&	60.39	$\bullet$ &	53.41	$\bullet$ &	53.07	$\bullet$ &	52.36	$\bullet$ &	51.82	$\bullet$ &	51.55	$\bullet$ &	48.05	$\bullet$ &	46.67	$\bullet$ &	45.38	$\bullet$ &	10.55	$\bullet$ & (0.5, 1)\\
\cdashline{1-13}[1pt/1pt]\noalign{\smallskip}																											
DKL	&	95.13 	&	90.88 	&	65.36 	&	60.33 	&	59.37 	&	52.98 	&	48.08 	&	47.15 	&	54.59 	&	24.23 	&	08.06 	& (/,	/)\\
DKL+FSU	&	91.18 	&	71.02 	$\circ$ &	61.46 	$\circ$ &	52.66 	$\circ$ &	58.37 	$\circ$ &	56.46 	$\bullet$ &	55.50 	$\bullet$ &	51.06 	$\bullet$ &	61.37 	$\bullet$ &	41.36 	$\bullet$ &	12.52 	$\bullet$ & (0.1, 1)\\
    \midrule
    \multicolumn{13}{c}{\pmb{CIFAR100} ($\epsilon=8/255$)}\\
    \cdashline{1-13}[1pt/1pt]\noalign{\smallskip}
    TRADES	&	58.93 	&	33.61 	&	31.15 	&	31.08 	&	31.06 	&	30.90 	&	30.93 	&	27.94 	&	27.26 	&	26.99 	&	15.26 	& (/,	/)\\
TRADES+FSU	&	60.17 	&	34.35 	$\bullet$ &	32.16 	$\bullet$ &	32.11 	$\bullet$ &	32.07 	$\bullet$ &	32.00 	$\bullet$ &	31.95 	$\bullet$ &	28.41 	$\bullet$ &	27.68 	$\bullet$ &	27.36 	$\bullet$ &	15.85 	$\bullet$ & (0.3, 1)\\
\cdashline{1-13}[1pt/1pt]\noalign{\smallskip}																											
AT-AWP	&	53.61	&	32.72	&	30.91	&	30.89	&	30.85	&	30.84	&	30.83	&	27.20	&	25.44	&	25.12	&	23.63	& (/,	/)\\
AT-AWP+FSU	&	50.66	&	33.12	$\bullet$ &	31.90	$\bullet$ &	31.81	$\bullet$ &	31.77	$\bullet$ &	31.76	$\bullet$ &	31.68	$\bullet$ &	27.02	$\circ$ &	25.28	$\circ$ &	25.00	$\circ$ &	25.51	$\bullet$ & (0.7, 1)\\
\cdashline{1-13}[1pt/1pt]\noalign{\smallskip}																											
MAIL-TRADES	&	61.33 	&	31.29 	&	27.77 	&	27.58 	&	27.46 	&	27.30 	&	27.37 	&	24.07 	&	22.96 	&	22.64 	&	12.58 	& (/,	/)\\
MAIL-TRADES+FSU	&	60.33 	&	31.29  	$\bullet$	&	28.48 	$\bullet$ &	28.30 	$\bullet$ &	28.26 	$\bullet$ &	28.14 	$\bullet$ &	28.13 	$\bullet$ &	23.96 	$\circ$ &	23.14 	$\bullet$ &	22.80 	$\bullet$ &	15.25 	$\bullet$ & (0.5, 1)\\
\cdashline{1-13}[1pt/1pt]\noalign{\smallskip}																											
MLCATWP	&	63.51 	&	33.68 	&	29.48 	&	29.25 	&	28.79 	&	28.66 	&	28.51 	&	28.78 	&	26.16 	&	25.62 	&	17.04 	& (/,	/)\\
MLCAT$_{\rm WP}$+FSU	&	60.21 	&	35.88 	$\bullet$ &	33.46 	$\bullet$ &	33.29 	$\bullet$ &	33.09 	$\bullet$ &	33.05 	$\bullet$ &	33.00 	$\bullet$ &	29.63 	$\bullet$ &	26.99 	$\bullet$ &	26.53 	$\bullet$ &	25.52 	$\bullet$ & (0.9, 1)\\
\cdashline{1-13}[1pt/1pt]\noalign{\smallskip}																											
AT+RiFT	&	44.35	&	20.54	&	18.04	&	17.93	&	17.83	&	17.78	&	17.68	&	15.96	&	14.56	&	14.25	&	16.28	& (/,	/)\\
AT+RiFT+FSU	&	40.54	&	24.20	$\bullet$ &	23.30	$\bullet$ &	23.32	$\bullet$ &	23.31	$\bullet$ &	23.24	$\bullet$ &	23.23	$\bullet$ &	18.89	$\bullet$ &	17.61	$\bullet$ &	17.44	$\bullet$ &	25.63	$\bullet$ & (0.5, 1)\\
\cdashline{1-13}[1pt/1pt]\noalign{\smallskip}																											
DKL	&	60.44 	&	35.21 	&	32.72 	&	32.60 	&	32.49 	&	32.41 	&	32.37 	&	28.61 	&	27.85 	&	27.52 	&	18.71 	& (/,	/)\\
DKL+FSU	&	61.02 	&	35.68 	$\bullet$ &	33.07 	$\bullet$ &	32.96 	$\bullet$ &	32.90 	$\bullet$ &	32.79 	$\bullet$ &	32.74 	$\bullet$ &	28.84 	$\bullet$ &	27.97 	$\bullet$ &	27.63 	$\bullet$ &	20.10 	$\bullet$ & (0.1, 1)\\
    \midrule
    \multicolumn{13}{c}{\pmb{Count for Improvement}}\\
    \cdashline{1-13}[1pt/1pt]\noalign{\smallskip}
Count &------&20 / 23&18 / 23&20 / 23&19 / 23&20 / 23&19 / 23&19 / 23&19 / 23&20 / 23&23 / 23&{\bf 197 / 230}\\    
    \bottomrule
    \end{tabular}
    }
\begin{minipage}{17.5cm}
\vspace{0.1cm}
\footnotesize{Note 1: $\bullet$ / $\circ$ means that the robust accuracy is improved / not improved by incorporating the FSU module.\\
Note 2: For FAB, the two parameters are respectively the number of restarts and the number of iterations in each restart.\\
Note 3: For CW$_2$, the two parameters are respectively the number of optimization iterations and learning rate.\\
Note 4: For MLCAT$_{\rm wp}$ on SVHN, all attacks lose effectiveness, i.e., both clean and robust accuracy equal to 19.59\%, thus is not used.}
\end{minipage}
\vspace{-0.2cm}
\end{table*}

\begin{figure}[t] 
\centering    
\includegraphics[width=0.9\linewidth]{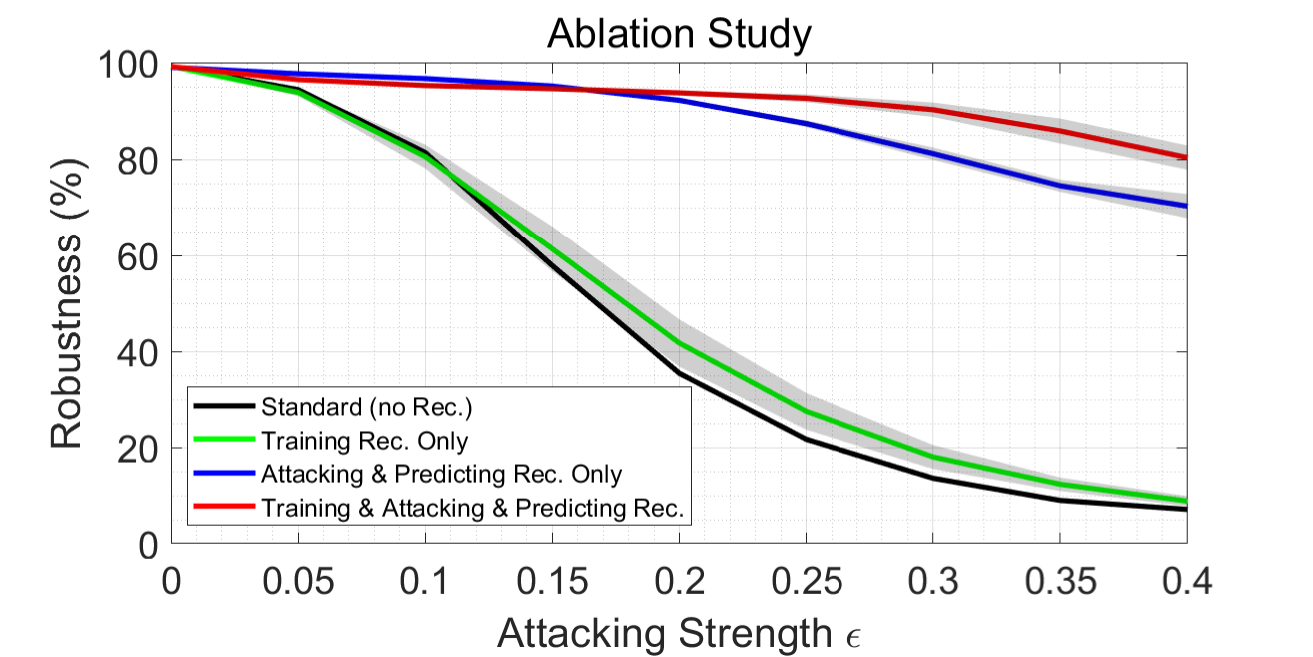}
\caption{Robust accuracy on MNIST under FGSM attack when employing FSU in different stages (with LeNet-5).}
\label{fig.ablation}
\end{figure}

\begin{table*}[t]
    \centering
    \caption{Statistics of robustness improvement.}
    \vspace{-0.2cm}
    \label{tab:statistics}
    \renewcommand\arraystretch{0.9}
    \scalebox{0.75}{
    \begin{tabular}{l|rrrr|rrrr|rrrr|rrrr}
    \toprule
    Method &\multicolumn{4}{c|}{\underline{MNIST}} &\multicolumn{4}{c|}{\underline{CIFAR-10}} &\multicolumn{4}{c|}{\underline{SVHN}} &\multicolumn{4}{c}{\underline{CIFAR-100}}\\
    Improvement &Cln. &Max Rb. &Avg Rb. &Count &Cln. &Max Rb. &Avg Rb. &Count &Cln. &Max Rb. &Avg Rb. &Count &Cln. &Max Rb. &Avg Rb. &Count \\
    &(\%) &(\%) &(\%) &\# / \# &(\%) &(\%) &(\%) &\# / \# &(\%) &(\%) &(\%) &\# / \# &(\%) &(\%) &(\%) &\# / \#\\
    \midrule
TRADES+FSU	&	-0.03 	&	+0.26 	&	+0.08 	&	7 / 10	&	+0.45 	&	+0.90 	&	+0.01 	&	3 / 10	&	-1.16 	&	+4.20 	&	+1.53 	&	10 / 10	&	+1.24 	&	+1.10 	&	+0.78 	&	10 / 10	\\
AT-AWP+FSU	&	+0.12 	&	+0.40 	&	+0.23 	&	10 / 10	&	-2.78 	&	+19.82 	&	+4.63 	&	10 / 10	&	-1.70 	&	+8.23 	&	+5.60 	&	10 / 10	&	-2.95 	&	+1.88 	&	+0.64 	&	7 / 10	\\
MAIL-TRADES+FSU	&	+0.07 	&	+1.20 	&	+0.66 	&	10 / 10	&	+0.09 	&	+11.70 	&	+1.80 	&	10 / 10	&	-0.65 	&	+6.65 	&	+4.78 	&	10 / 10	&	-1.00 	&	+2.67 	&	+0.67 	&	8 / 10	\\
MLCAT$_{\rm WP}$+FSU	&	-0.06 	&	+0.23 	&	-0.23 	&	3 / 10	&	-1.44 	&	+6.69 	&	+2.58 	&	10 / 10	&	/	&	/	&	/	&	/	&	-3.30 	&	+8.48 	&	+3.45 	&	10 / 10	\\
AT+RiFT+FSU	&	-0.03 	&	+72.86 	&	+18.21 	&	9 / 10	&	-4.01 	&	+34.28 	&	+12.59 	&	10 / 10	&	-2.67 	&	+12.80 	&	+10.16 	&	10 / 10	&	-3.81 	&	+9.35 	&	+4.93 	&	10 / 10	\\
DKL+FSU	&	+0.01 	&	+0.80 	&	-14.10 	&	4 / 10	&	+2.29 	&	+7.22 	&	+1.24 	&	10 / 10	&	-3.95 	&	+17.13 	&	+1.08 	&	6 / 10	&	+0.58 	&	+1.39 	&	+0.42 	&	10 / 10	\\
    \cdashline{1-17}[1pt/1pt]\noalign{\smallskip}	
Avg	&	+0.01 	&	+12.63 	&	+0.81 	&	7.2 / 10	&	-0.90 	&	+13.44 	&	+3.81 	&	8.8 / 10	&	-2.03 	&	+9.80 	&	+4.63 	&	9.2 / 10	&	-1.54 	&	+4.15 	&	+1.81 	&	9.2 / 10	\\
    \bottomrule
    \end{tabular}
    }
\begin{minipage}{17.5cm}
\vspace{0.1cm}
\footnotesize{Note 1: ``Max Rb." and ``Avg Rb." mean the maximum improvement and average improvement of robust accuracy under the ten attacks for a dataset.\\
Note 2: ``Count" means the number of improved robust evaluation of a model on a dataset.}
\end{minipage}
\vspace{-0.2cm}
\end{table*}

\begin{figure*}[t]
\centering
\subfloat[\scriptsize CIFAR10 - FGSM$_\infty$ Attack ($\epsilon=8/255$)]{\includegraphics[width=0.49\linewidth]{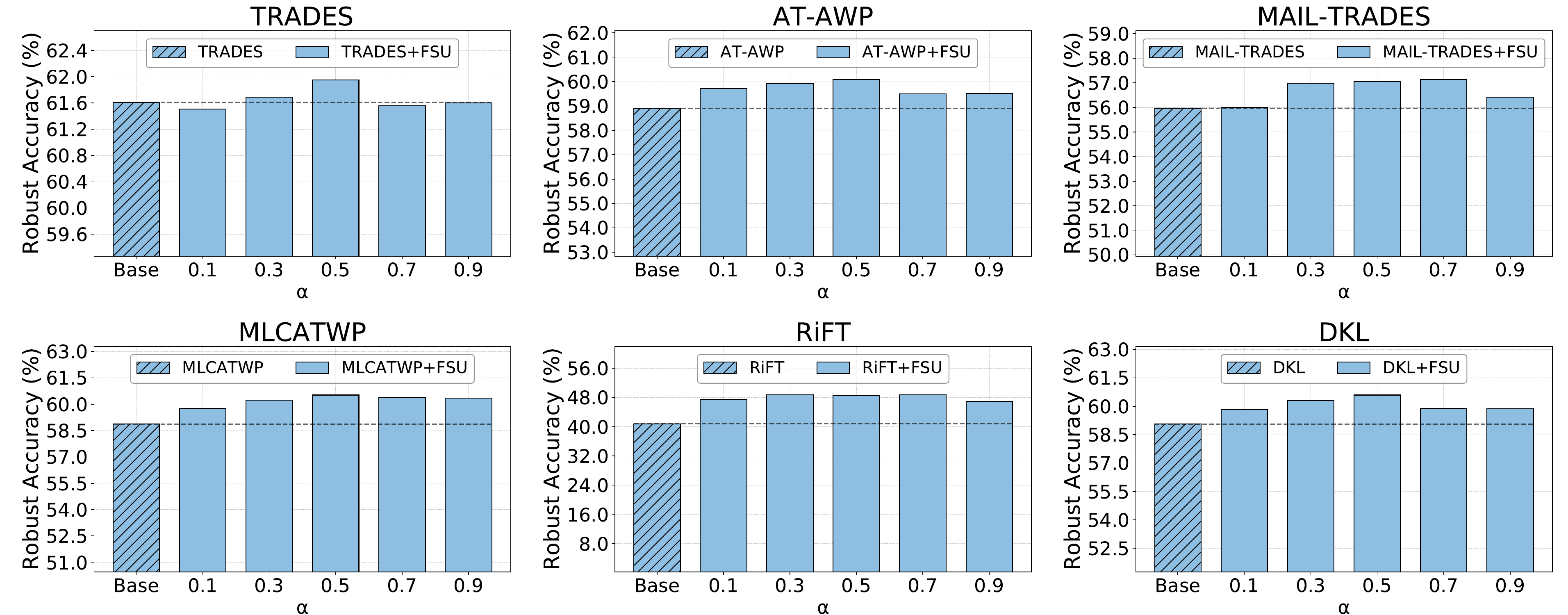}}\quad
\subfloat[\scriptsize CIFAR10 - PGD$_\infty$-100 Attack ($\epsilon=8/255$, step size $2/255$)]{\includegraphics[width=0.49\linewidth]{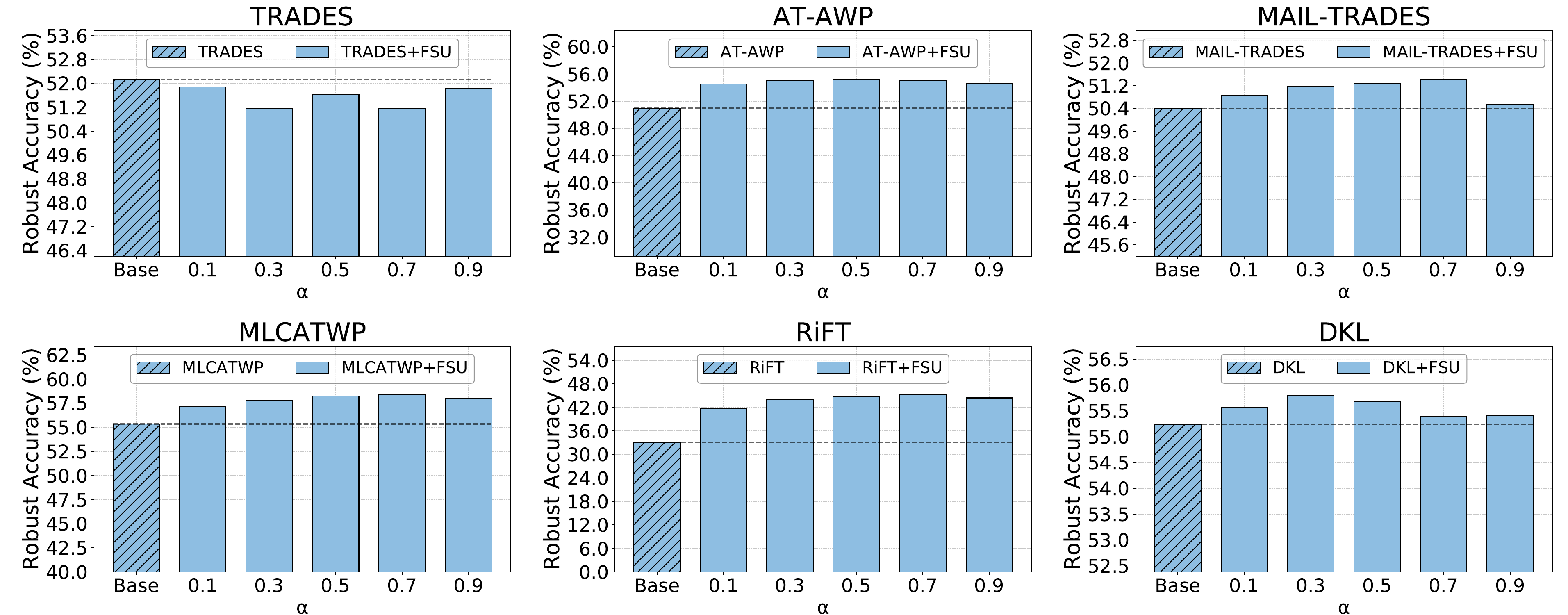}}\\
\subfloat[\scriptsize CIFAR10 - AA$_\infty$ Attack ($\epsilon=8/255$)]{\includegraphics[width=0.49\linewidth]{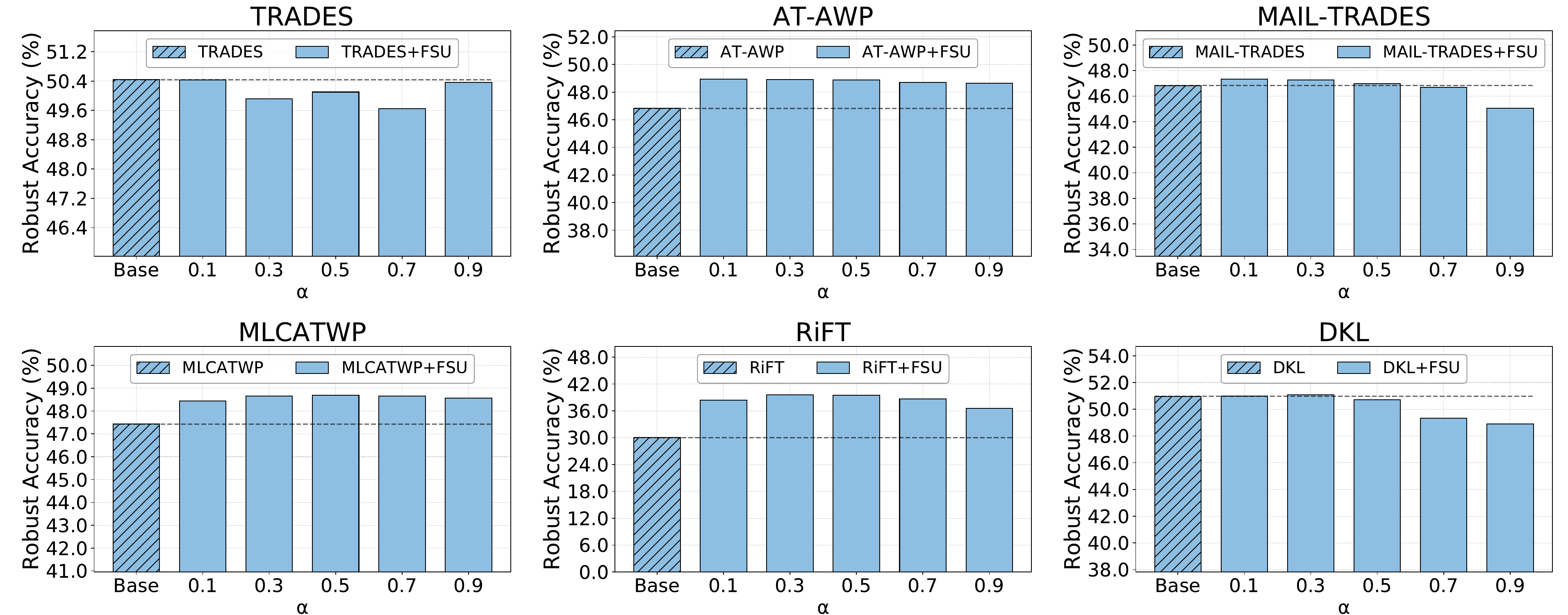}}\quad
\subfloat[\scriptsize CIFAR10 - CW$_2$ Attack ($\epsilon=8/255$, iterations $50$, learning rate $0.01$)]{\includegraphics[width=0.49\linewidth]{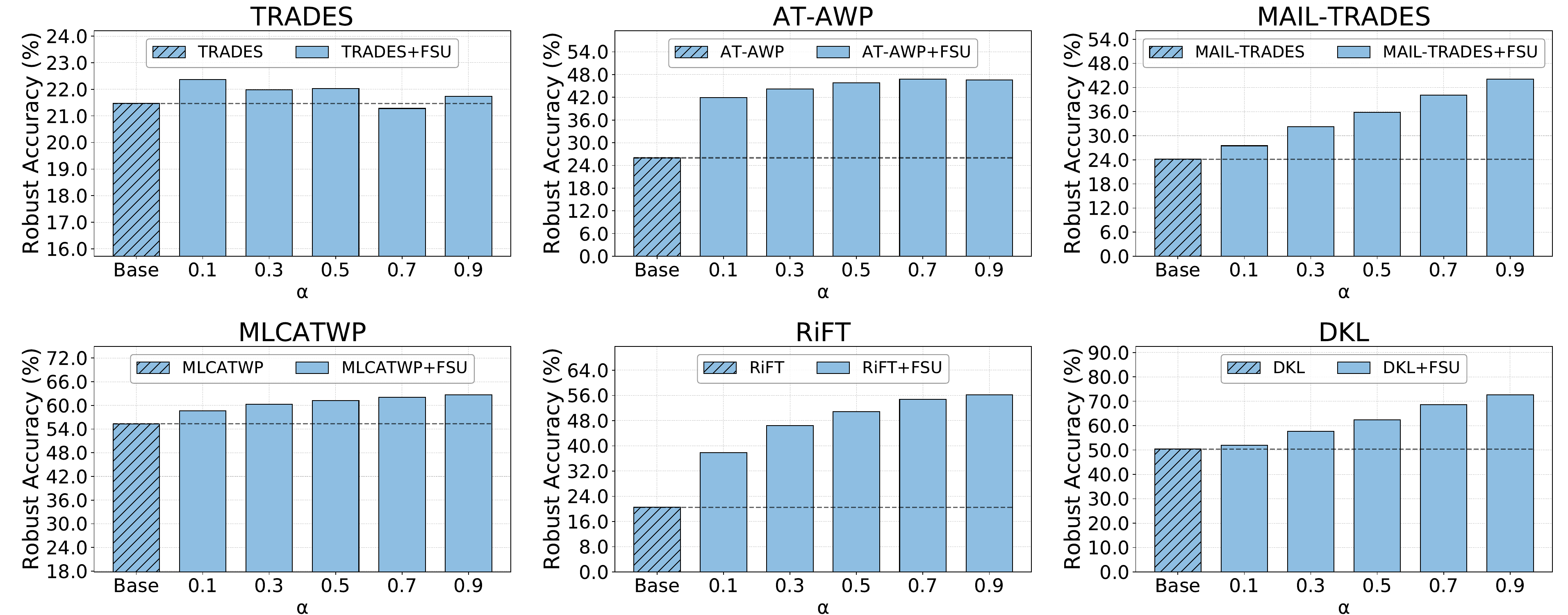}}
\caption{Hyper-parameter sensitivity analysis on dataset CIFAR10.}
\label{fig.sensitivityCifar10}
\end{figure*}

\begin{figure*}[t]
\centering
\subfloat[\scriptsize MNIST]{\includegraphics[width=0.49\linewidth]{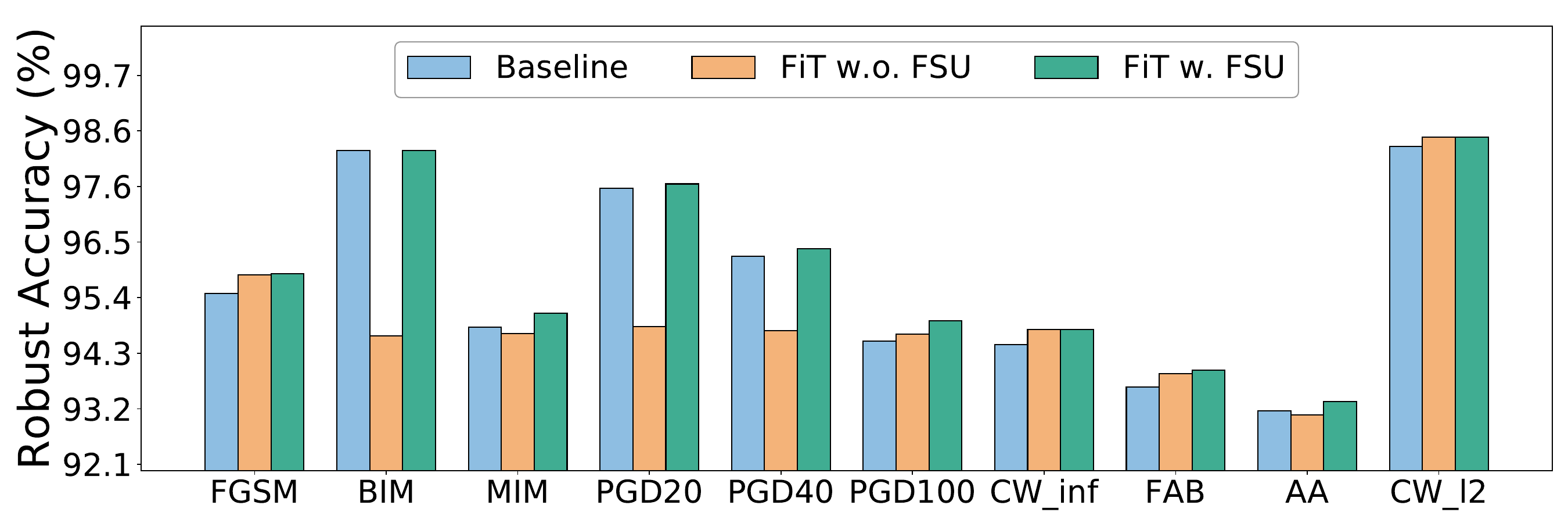}}\quad
\subfloat[\scriptsize CIFAR10]{\includegraphics[width=0.49\linewidth]{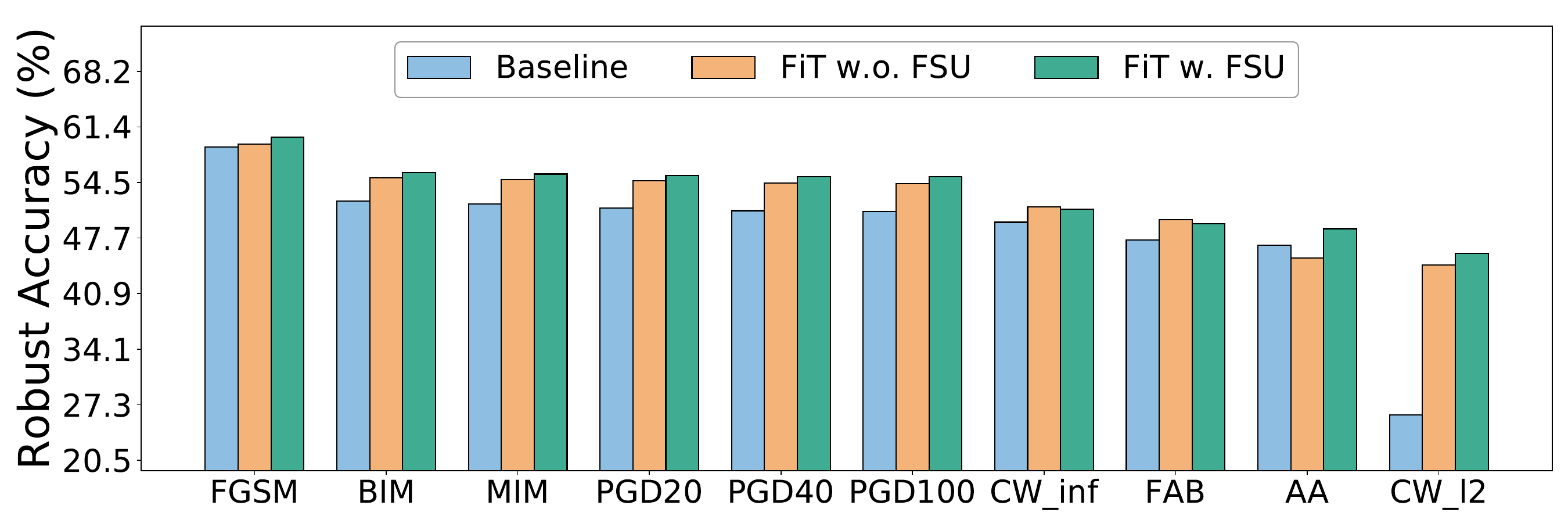}}\\
\vspace{-0.2cm}
\subfloat[\scriptsize SVHN]{\includegraphics[width=0.49\linewidth]{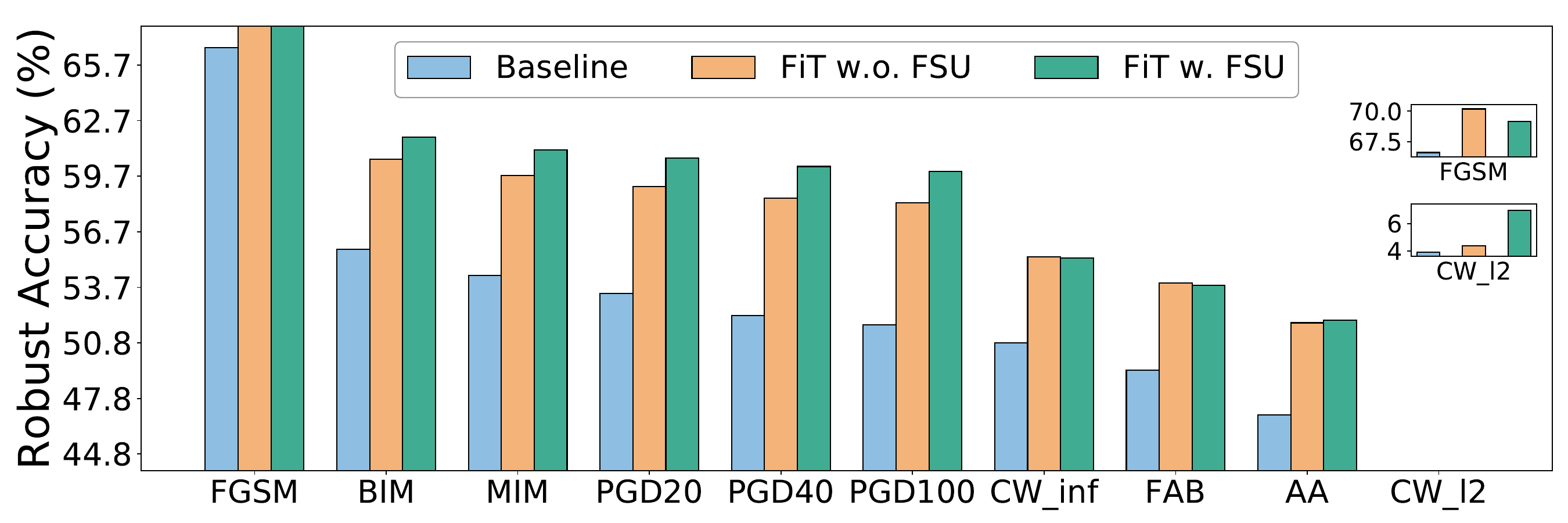}}\quad
\subfloat[\scriptsize CIFAR100]{\includegraphics[width=0.49\linewidth]{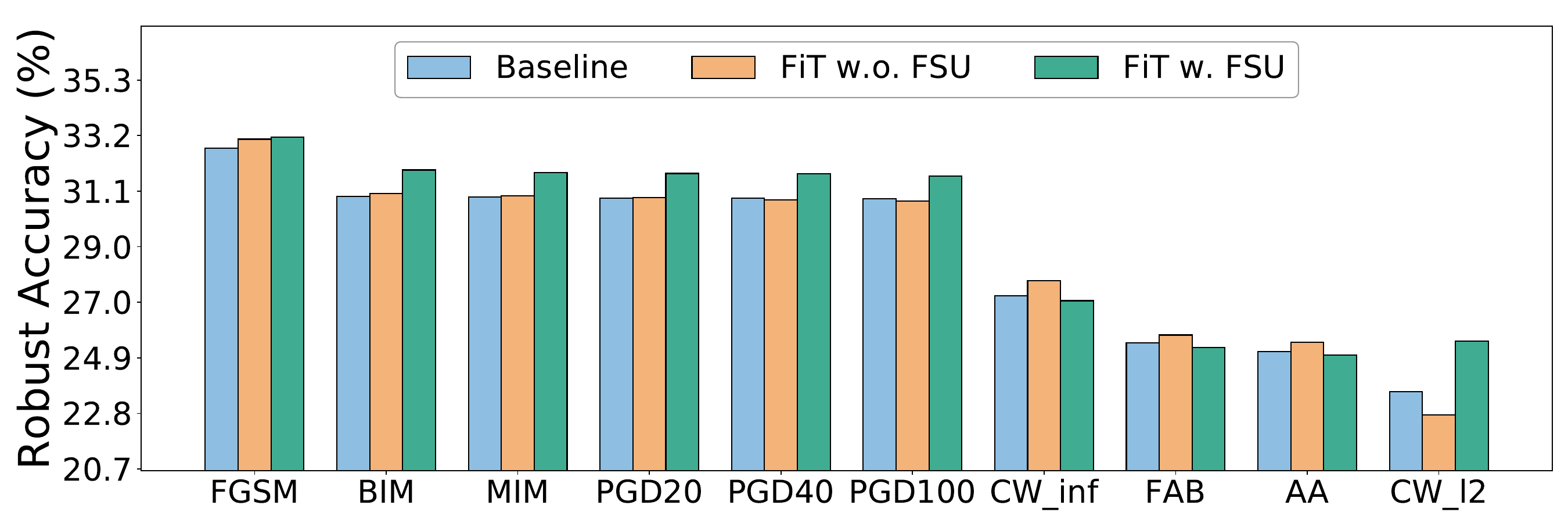}}
\caption{Ablation studies on the FSU module. (Baseline model: AT-AWP; ``Fit w.o. FSU" means pure model fine-tuning without the FSU module; ``Fit w. FSU" means the proposed Algorithm~\ref{alg.FSU}.)}
\vspace{-0.1cm}
\label{fig.ablationAT}
\end{figure*}

\subsection{Adversarial Training for Powerful Attacks}\label{subsec.powerful}

\subsubsection{Settings} In this section, we select several robust models established by adversarial training in the last subsection (i.e., TRADES~\cite{trades}, AT-AWP~\cite{wu2020Adversarial}, MAIL-TRADES~\cite{Wang2021Probabilistic}, MLCAT$_{\rm WP}$~\cite{yu2022Understanding}, AT+RiFT~\cite{Zhu2023Improving} and DKL~\cite{cui2024decoupled} in Table~\ref{tab:mild}), and fine-tune these models by the proposed Algorithm~\ref{alg.FSU}.

\textbf{Training Parameters:}  Similar to Subsection~\ref{subsec.mild}, the noise intensity parameter $\beta$ for feature Std is fixed as $1$, and the noise intensity parameter $\alpha$ for feature mean is searched in $\{0.1,0.3,0.5,0.7,0.9\}$. The number of fine-tuning epochs is set to $20$ consistently for all the models on all the datasets, the learning rate and decay strategy follow the settings of the original training, the random seed is set to default ($0$ or $1$), and PGD-10 attack with step size $2/255$ ($0.1$ for MNIST) is used for generating perturbations.

\textbf{Reconstruction Positions:} It is too time-consuming to cross-validate the reconstruction position for adversarial training. In this experiment, we consistently apply the feature reconstruction in the latent space for all the models on all the datasets, i.e., the FSU module is inserted between the logits layer and the classification layer. 

\textbf{Evaluation Protocols:} Typical gradient-based FGSM, BIM, MIM and PGD attacks ({\bf AdverTorch} library), optimization-based CW attack, as well as adaptive boundary attack FAB and ensemble attack AA are performed. Specifically, we perform a $10$-step BIM attack, a $40$-step MIM attack, a $20$-step PGD attack, a $40$-step PGD attack, and a $100$-step PGD attack, all with step size $2/255$. For FAB, the number of restarts is set to $5$, the number of iterations for each restart is set to $100$, and all the other attacking parameters are set to default. All of FGSM, BIM, MIM, PGD, FAB, and AA are $l_{\infty}$-bounded. As for CW, we test both $l_{\infty}$-bounded {\bf PGD-100}-version and $l_2$-bounded {\bf TorchAttack}-version. For {\bf PGD-100}-version, the step size is set to $2/255$; for  {\bf TorchAttack}-version, the attacking parameters are set to default, i.e., $50$ optimization iterations with a learning rate of $0.01$. 

\subsubsection{Empirical Studies} Basically, the following analyses can be made for this part of experiment.

\textbf{Overall Performance:} Table~\ref{tab:powerful} reports the robust accuracies of the several well-established models under various attacks, as well as the robust accuracies of the models fine-tuned by Algorithm~\ref{alg.FSU}. Notably, we have the following observations.

\begin{itemize}
\item It is obvious from Table~\ref{tab:powerful} that the FSU module helped these models better defend against almost all of these attacks. It is comprehensively effective for the chosen attacks, especially on datasets CIFAR10, SVHN and CIFAR100. As for dataset MNIST, the robust accuracies of the original models are already high, without too much room for improvement. Despite this, it achieved general improvements on MNIST for TRADES, AT-AWP, MAIL-TRADES and AT+RiFT, and similar performance for MLCAT$_{\rm WP}$.
\item The robustness improvement under CW$_2$ is remarkable. It is noteworthy that we have tested a real optimization-based CW attack, i.e., the {\bf TorchAttack}-version. As can be seen from Table~\ref{tab:powerful}, the robustness under CW$_\infty$ is similar to that under PGD-100. However, when confronted with CW$_2$, the accuracies of many models drop seriously. With the help of FSU, the accuracy improves to a great extent. E.g., the robust accuracies of AT-AWP and AT+RiFT are improved respectively by $19.82\%$ and $34.28\%$ under CW attack for dataset CIFAR10.
\item The robustness improvement under AA is also remarkable in some cases. E.g., the robust accuracies of AT+RiFT and DKL are improved respectively by $9.34\%$ and $17.13\%$ under AA for dataset SVHN. It is well-known that AA sequentially performs four different attacks, i.e., Auto-PGD$_{\rm CE}$ (white-box), Auto-PGD$_{\rm DLR}$ (white-box), FAB (white-box) and square attack (black-box), which is a very powerful tool for robustness evaluation. The universal improvement of the models under AA demonstrates the comprehensive effectiveness of the proposed FSU module.
\item The last row of Table~\ref{tab:powerful} counts the number of improvements under each attack. It is obvious that the robustness of most models has been improved by FSU, no matter under which attack. Under the ten attacks on the four benchmark datasets (a total of 230 comparisons), FSU achieved robustness improvement in 197 comparisons.
\end{itemize}

\begin{figure*}[!t]
\centering
\subfloat[\scriptsize Mean Shift (Baseline-Train)]{\includegraphics[width=0.245\linewidth]{non-cifar10-pgd-train-mean.pdf}}\
\subfloat[\scriptsize Mean Shift (Baseline-Test)]{\includegraphics[width=0.245\linewidth]{non-cifar10-pgd-test-mean.pdf}}\
\subfloat[\scriptsize Std Shift (Baseline-Train)]{\includegraphics[width=0.245\linewidth]{non-cifar10-pgd-train-std.pdf}}\
\subfloat[\scriptsize Std Shift (Baseline-Test)]{\includegraphics[width=0.245\linewidth]{non-cifar10-pgd-test-std.pdf}}\\
\subfloat[\scriptsize Mean Shift (FSU-Train)]{\includegraphics[width=0.245\linewidth]{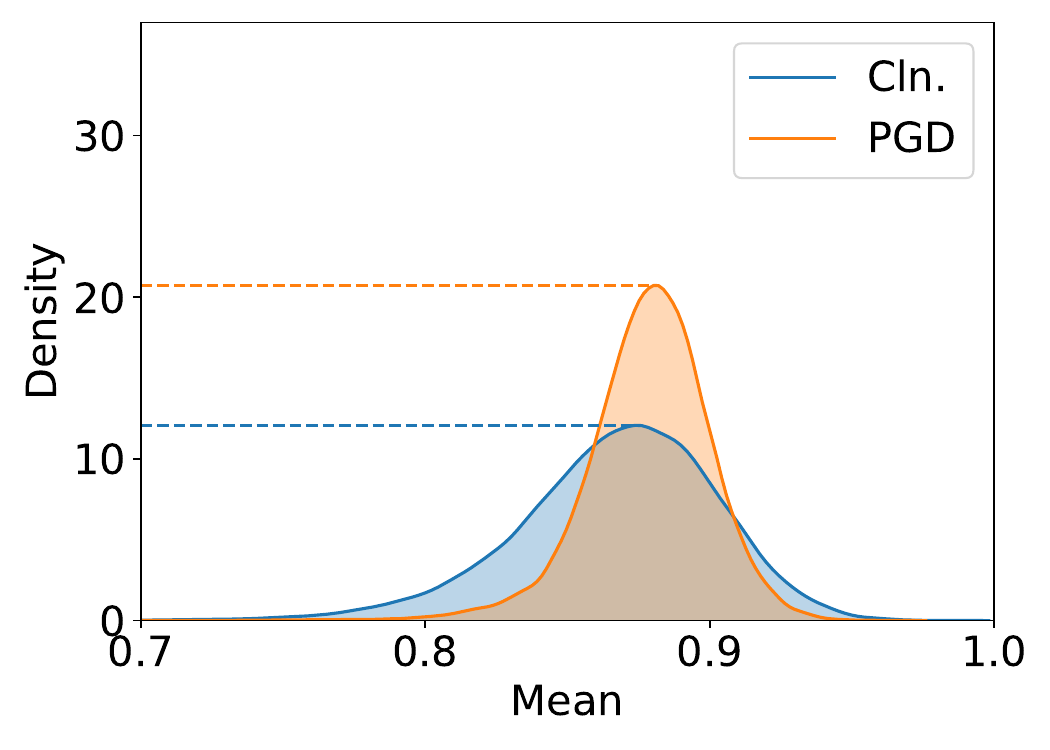}}\
\subfloat[\scriptsize Mean Shift (FSU-Test)]{\includegraphics[width=0.245\linewidth]{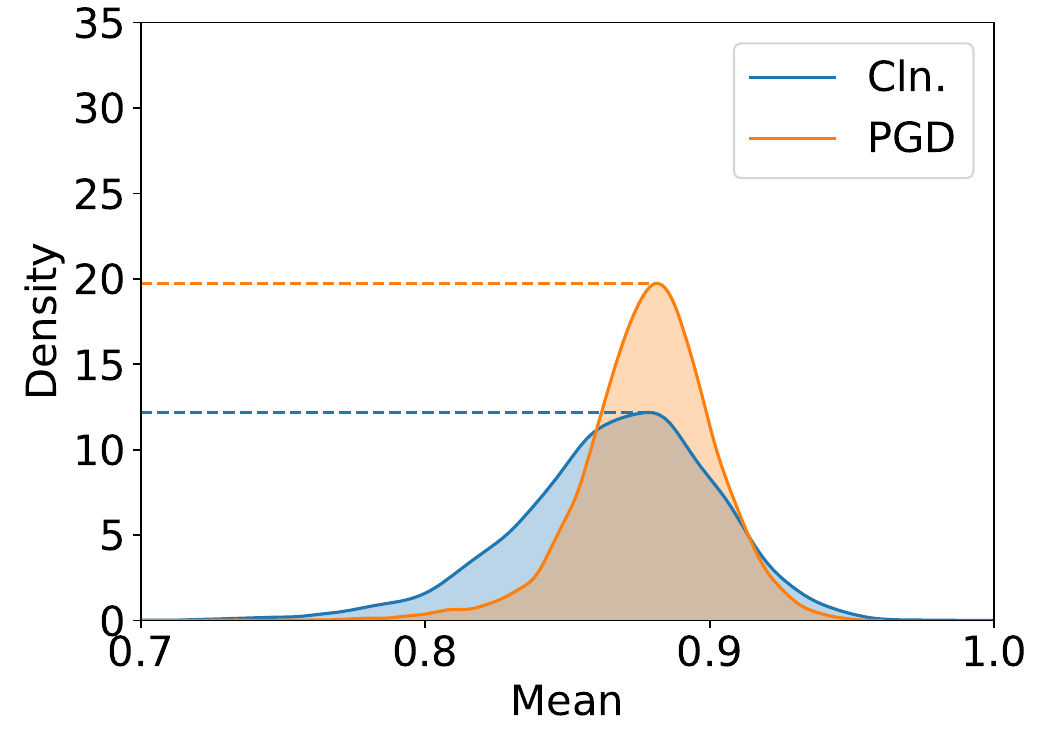}}\
\subfloat[\scriptsize Std Shift (FSU-Train)]{\includegraphics[width=0.245\linewidth]{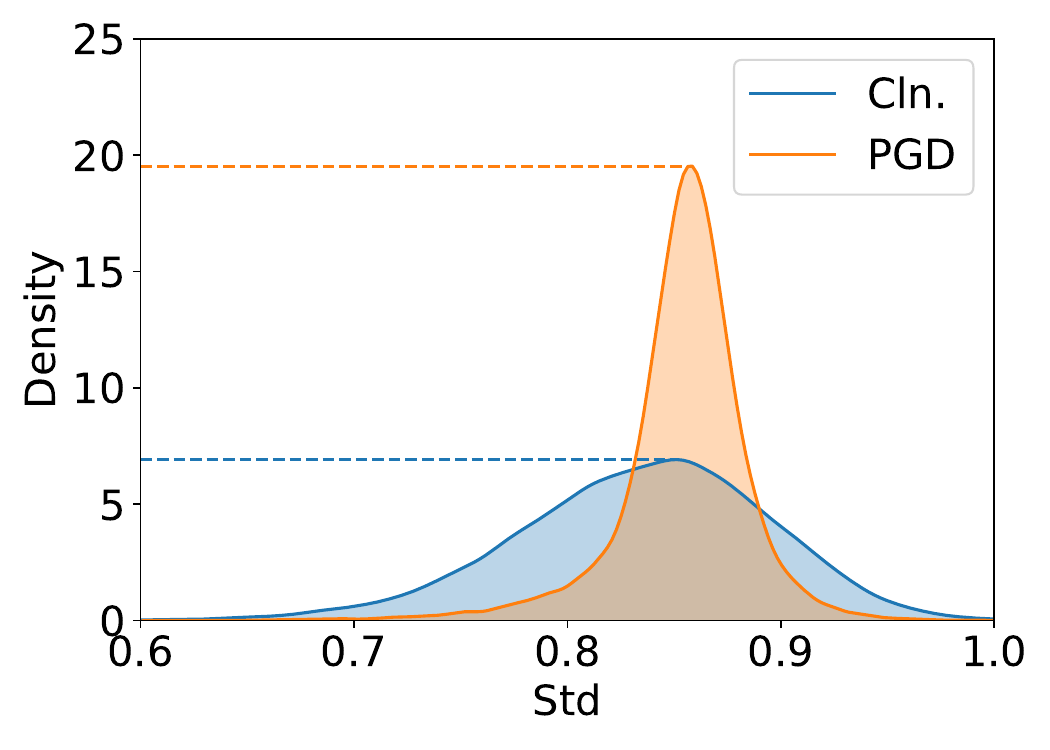}}\
\subfloat[\scriptsize Std Shift (FSU-Test)]{\includegraphics[width=0.245\linewidth]{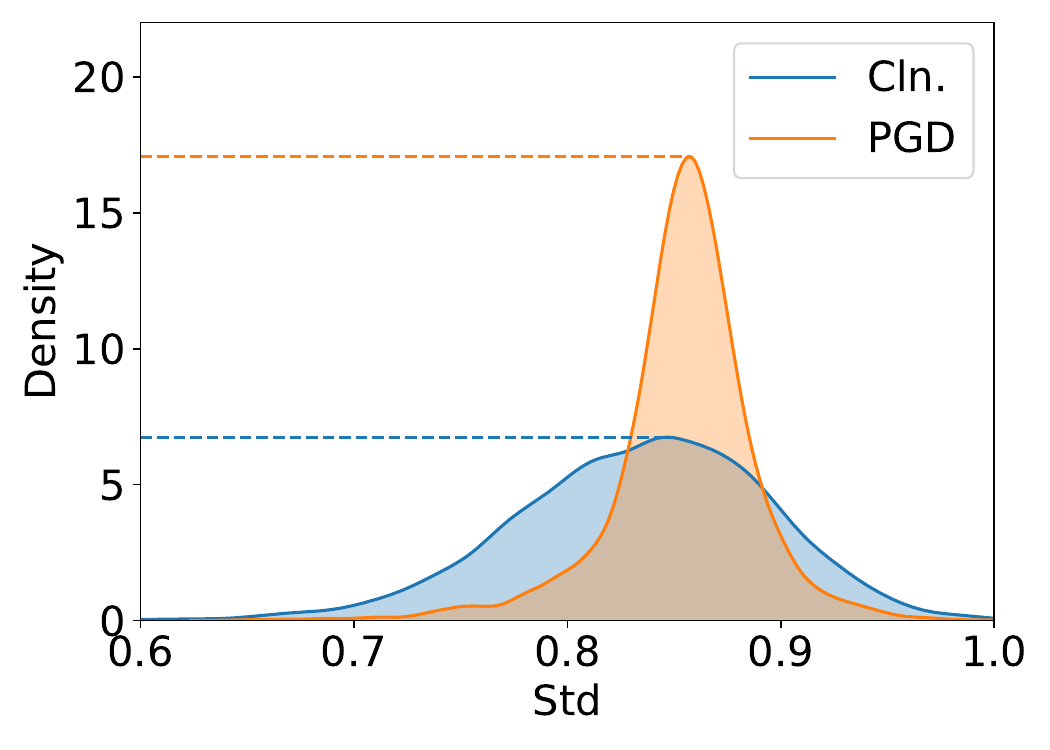}}
\caption{Calibration for distribution shifts of feature statistics in dataset CIFAR10 (average of 512 channels in latent space, ResNet-18 model). ``Cln." and ``PGD" represent clean examples and examples perturbed by PGD-10 attack, respectively. ``Baseline" and ``FSU" mean the naturally trained models without and with the FSU module, respectively. ``Train" and ``Test" represent training data and testing data, respectively.}
\vspace{-0.2cm}
\label{fig.featureDisCifar10}
\end{figure*}

\begin{figure}[t]
\centering
\subfloat[\scriptsize{Example 1: Baseline-NT}]{\includegraphics[width=0.45\linewidth]{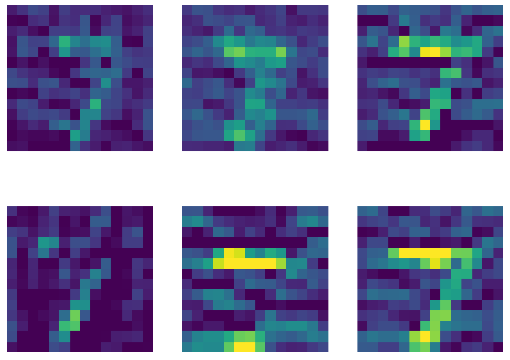}}\quad\quad
\subfloat[\scriptsize{Example 1: FSU-NT}]{\includegraphics[width=0.45\linewidth]{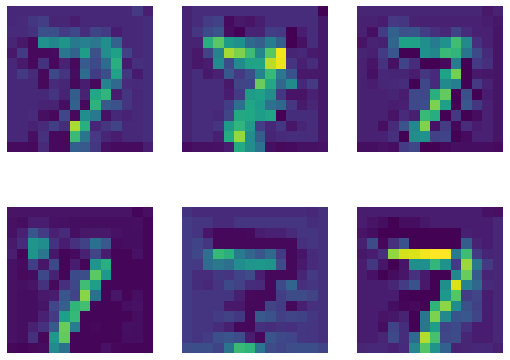}}\\ 
\subfloat[\scriptsize{Example 2: Baseline-NT}]{\includegraphics[width=0.45\linewidth]{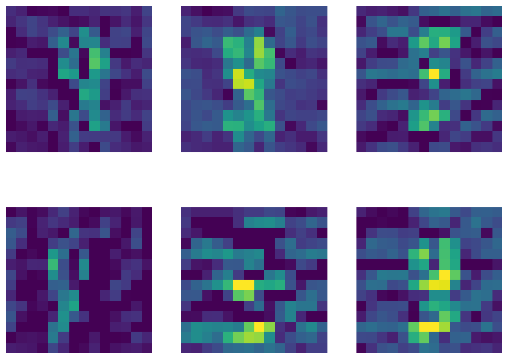}}\quad\quad
\subfloat[\scriptsize{Example 2: FSU-NT}]{\includegraphics[width=0.45\linewidth]{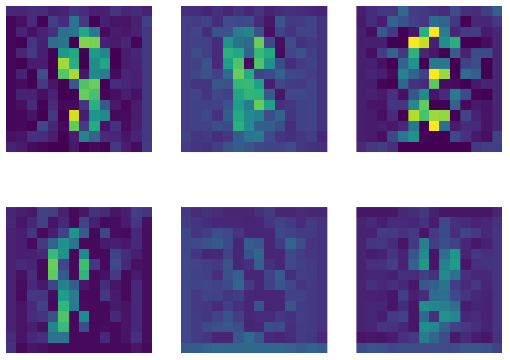}}
\caption{Feature maps of six channels for randomly selected testing examples in MNIST perturbed by FGSM (with LeNet-5).}
\label{fig.featureMapMNISTFGSM0}
\vspace{-0.2cm}
\end{figure}

\begin{figure}[t]
\centering
\subfloat[\scriptsize Cln.]{\includegraphics[width=0.15\linewidth]{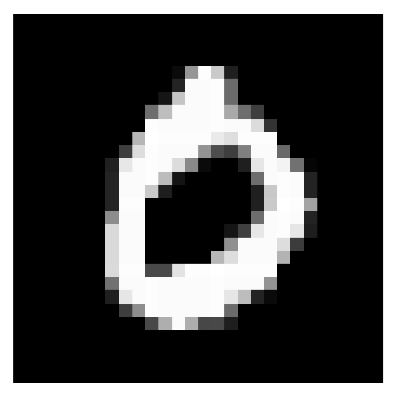}}\quad\ 
\subfloat[\scriptsize Ep 10]{\includegraphics[width=0.15\linewidth]{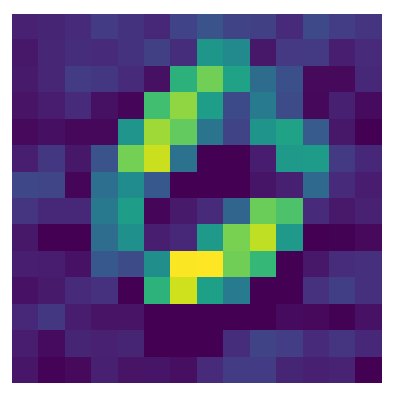}}\quad\ 
\subfloat[\scriptsize Ep 20]{\includegraphics[width=0.15\linewidth]{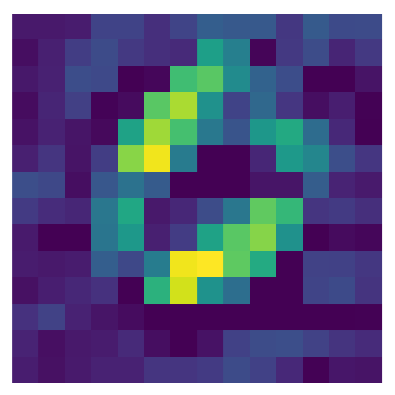}}\quad\ 
\subfloat[\scriptsize Ep 30]{\includegraphics[width=0.15\linewidth]{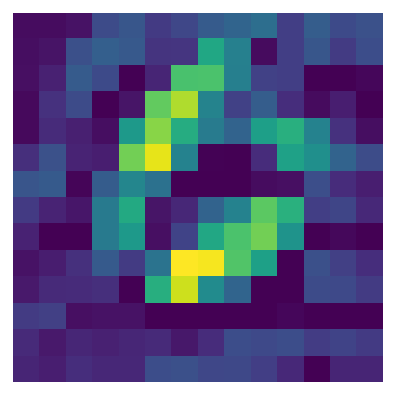}}\quad\ 
\subfloat[\scriptsize Ep 40]{\includegraphics[width=0.15\linewidth]{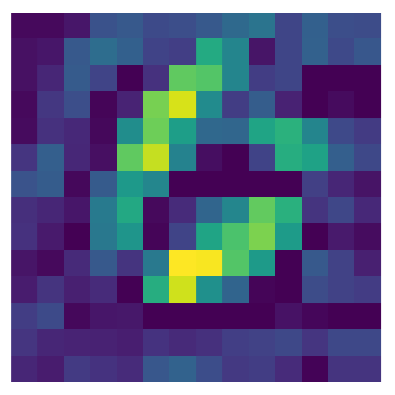}}\\
\vspace{-0.3cm}
\subfloat[\scriptsize Cln.]{\includegraphics[width=0.15\linewidth]{sample0.png}}\quad\ 
\subfloat[\scriptsize Ep 10]{\includegraphics[width=0.15\linewidth]{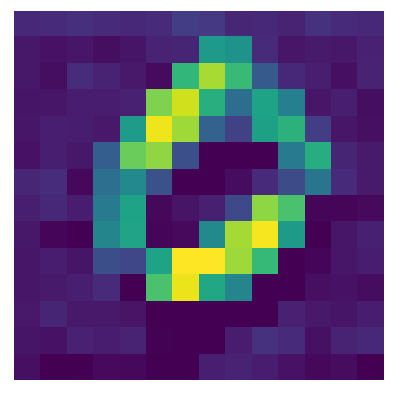}}\quad\ 
\subfloat[\scriptsize Ep 20]{\includegraphics[width=0.15\linewidth]{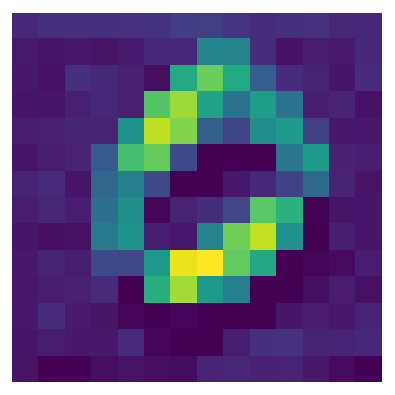}}\quad\ 
\subfloat[\scriptsize Ep 30]{\includegraphics[width=0.15\linewidth]{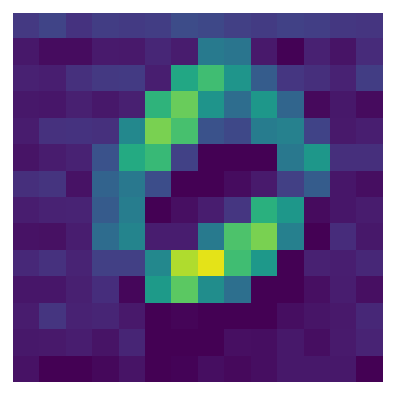}}\quad\ 
\subfloat[\scriptsize Ep 40]{\includegraphics[width=0.15\linewidth]{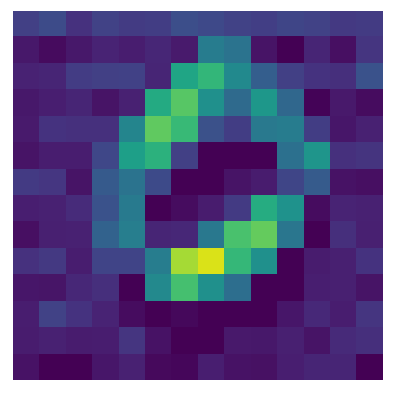}}\\
\vspace{-0.3cm}
\subfloat[\scriptsize Cln.]{\includegraphics[width=0.15\linewidth]{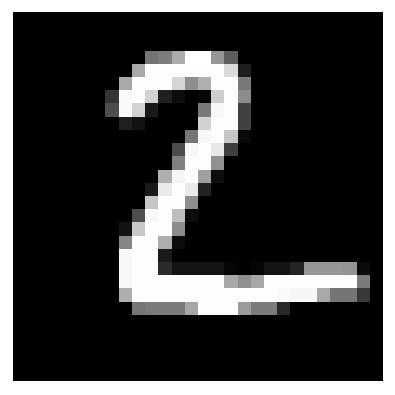}}\quad\ 
\subfloat[\scriptsize Ep 10]{\includegraphics[width=0.15\linewidth]{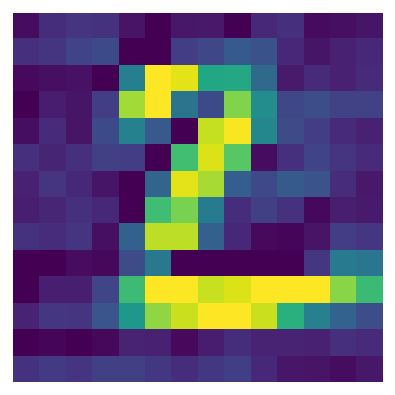}}\quad\ 
\subfloat[\scriptsize Ep 20]{\includegraphics[width=0.15\linewidth]{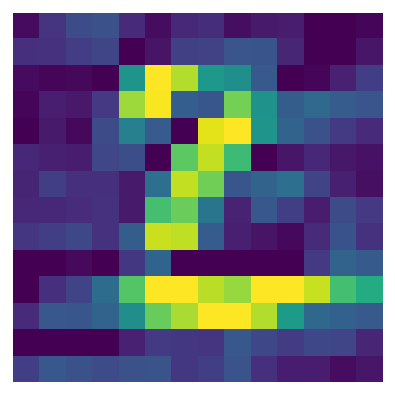}}\quad\ 
\subfloat[\scriptsize Ep 30]{\includegraphics[width=0.15\linewidth]{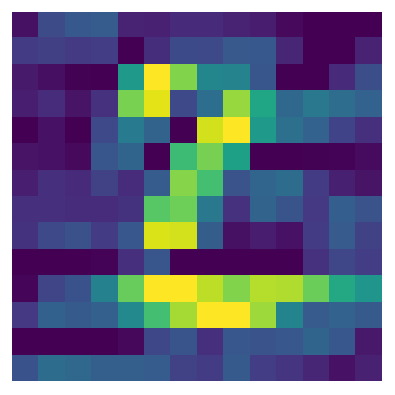}}\quad\ 
\subfloat[\scriptsize Ep 40]{\includegraphics[width=0.15\linewidth]{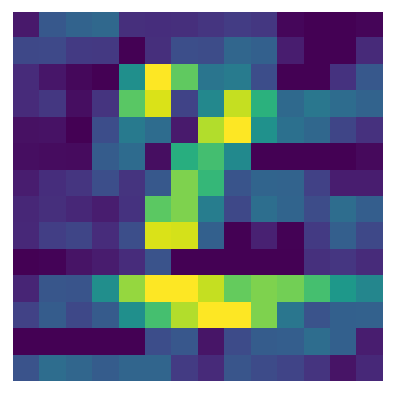}}\\
\vspace{-0.3cm}
\subfloat[\scriptsize Cln.]{\includegraphics[width=0.15\linewidth]{sample2.png}}\quad\ 
\subfloat[\scriptsize Ep 10]{\includegraphics[width=0.15\linewidth]{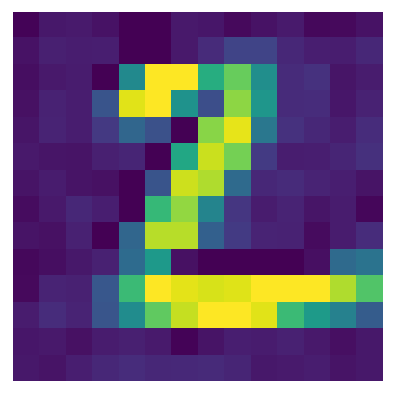}}\quad\ 
\subfloat[\scriptsize Ep 20]{\includegraphics[width=0.15\linewidth]{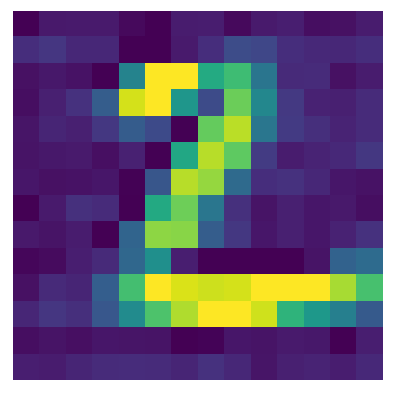}}\quad\ 
\subfloat[\scriptsize Ep 30]{\includegraphics[width=0.15\linewidth]{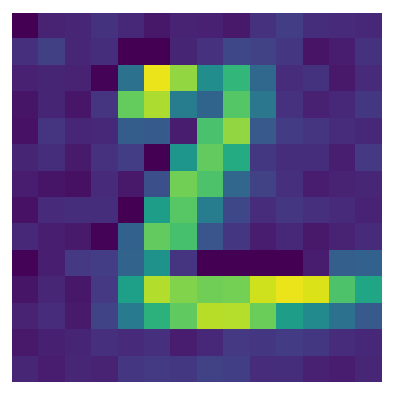}}\quad\ 
\subfloat[\scriptsize Ep 40]{\includegraphics[width=0.15\linewidth]{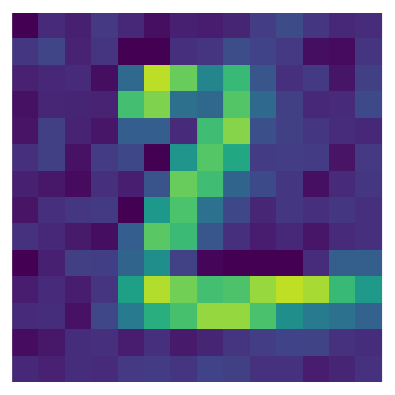}}
\caption{Feature maps of a randomly selected channel for randomly selected testing examples in MNIST during PGD-40 attack (with LeNet-5). (a)$\sim$(e) Example 1: Baseline-NT. (f)$\sim$(j) Example 1: FSU-NT. (k)$\sim$(o) Example 2: Baseline-NT. (p)$\sim$(t) Example 2: FSU-NT. (Ep  means Epoch.)}
\label{fig.featureMapMNISTPGD0}
\end{figure}

\begin{figure}[t]
\centering
\subfloat[\scriptsize{Example 1: Baseline-NT}]{\includegraphics[width=0.45\linewidth]{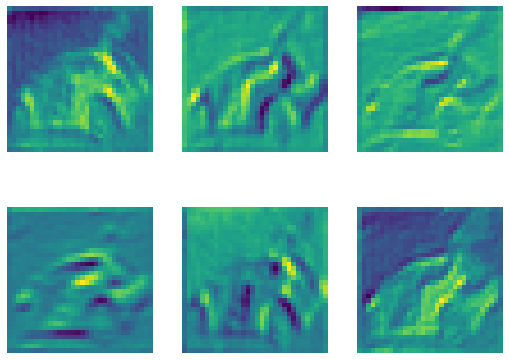}}\quad\quad
\subfloat[\scriptsize{Example 1: FSU-NT}]{\includegraphics[width=0.45\linewidth]{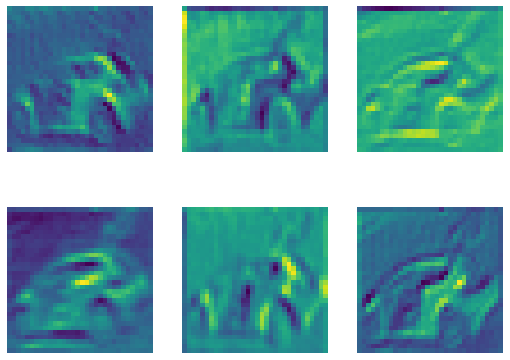}}\\
\subfloat[\scriptsize{Example 2: Baseline-NT}]{\includegraphics[width=0.45\linewidth]{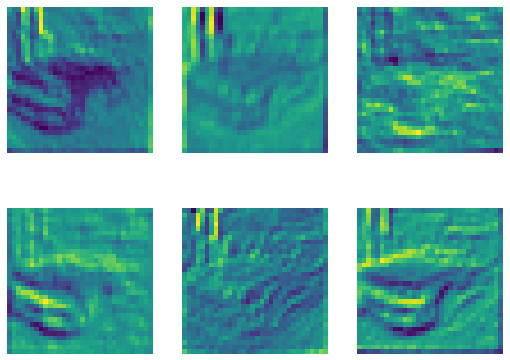}}\quad\quad
\subfloat[\scriptsize{Example 2: FSU-NT}]{\includegraphics[width=0.45\linewidth]{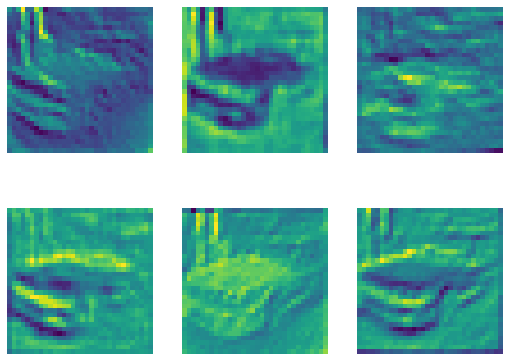}}
\caption{Feature maps of six randomly selected channels for randomly selected testing examples in CIFAR10 perturbed by FGSM (with ResNet-18).}
\label{fig.featureMapCIFAR10FGSM0}
\vspace{-0.2cm}
\end{figure}

\begin{figure}[t]
\centering
\subfloat[\tiny Cln.]{\includegraphics[width=0.15\linewidth]{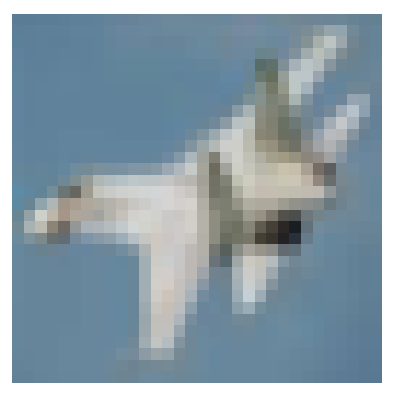}}\quad\
\subfloat[\tiny E10 L1]{\includegraphics[width=0.15\linewidth]{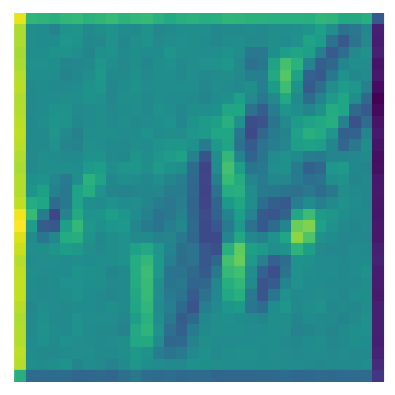}}\quad\ 
\subfloat[\tiny E10 L2]{\includegraphics[width=0.15\linewidth]{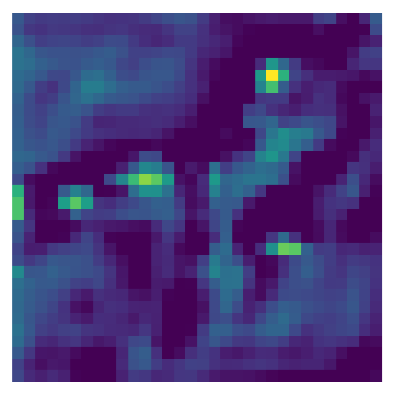}}\quad\ 
\subfloat[\tiny E10 L3]{\includegraphics[width=0.15\linewidth]{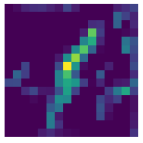}}\quad\ 
\subfloat[\tiny E10 L4]{\includegraphics[width=0.15\linewidth]{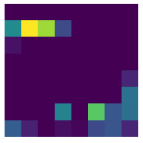}}\\
\vspace{-0.3cm}
\subfloat[\tiny Cln.]{\includegraphics[width=0.15\linewidth]{sample_plane.png}}\quad\ 
\subfloat[\tiny E10 L1]{\includegraphics[width=0.15\linewidth]{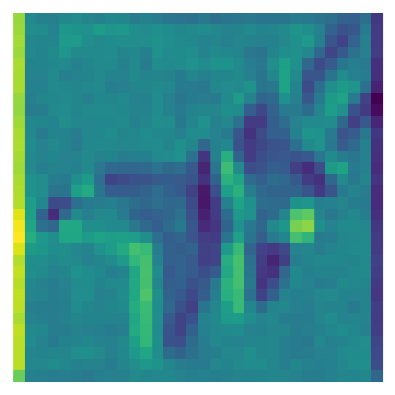}}\quad\ 
\subfloat[\tiny E10 L2]{\includegraphics[width=0.15\linewidth]{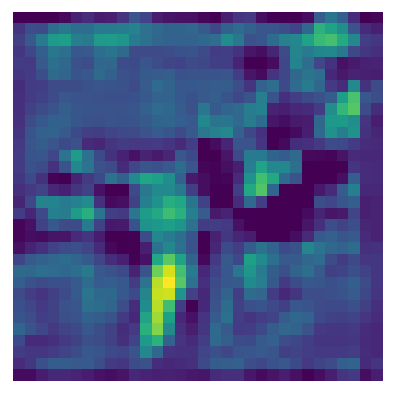}}\quad\ 
\subfloat[\tiny E10 L3]{\includegraphics[width=0.15\linewidth]{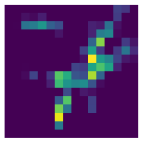}}\quad\ 
\subfloat[\tiny E10 L4]{\includegraphics[width=0.15\linewidth]{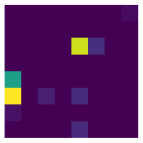}}\\
\vspace{-0.3cm}
\subfloat[\tiny Cln.]{\includegraphics[width=0.15\linewidth]{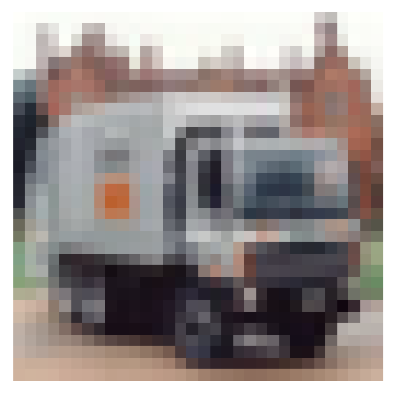}}\quad\ 
\subfloat[\tiny E10 L1]{\includegraphics[width=0.15\linewidth]{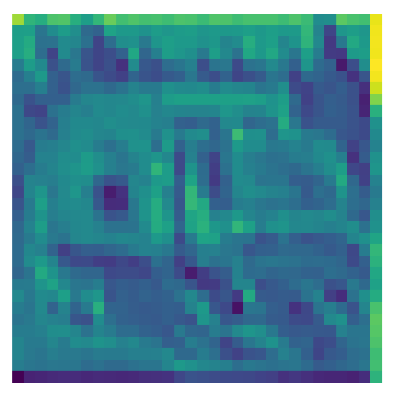}}\quad\ 
\subfloat[\tiny E10 L2]{\includegraphics[width=0.15\linewidth]{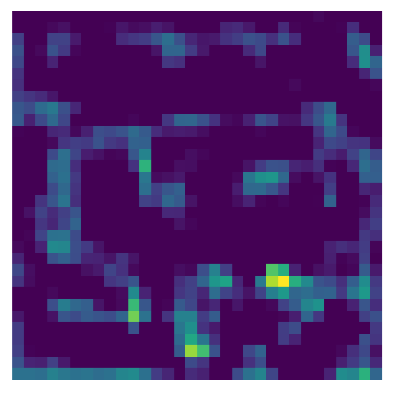}}\quad\ 
\subfloat[\tiny E10 L3]{\includegraphics[width=0.15\linewidth]{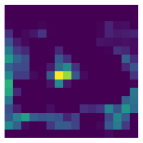}}\quad\ 
\subfloat[\tiny E10 L4]{\includegraphics[width=0.15\linewidth]{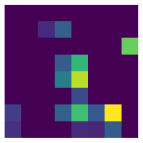}}\\
\vspace{-0.3cm}
\subfloat[\tiny Cln.]{\includegraphics[width=0.15\linewidth]{sample_truck.png}}\quad\ 
\subfloat[\tiny E10 L1]{\includegraphics[width=0.15\linewidth]{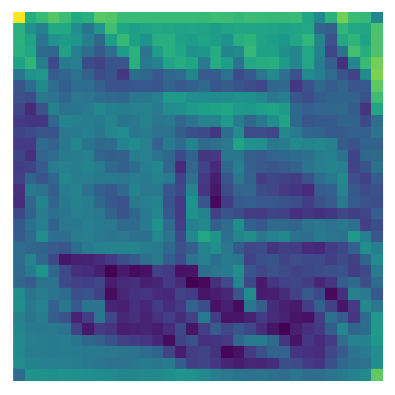}}\quad\ 
\subfloat[\tiny E10 L2]{\includegraphics[width=0.15\linewidth]{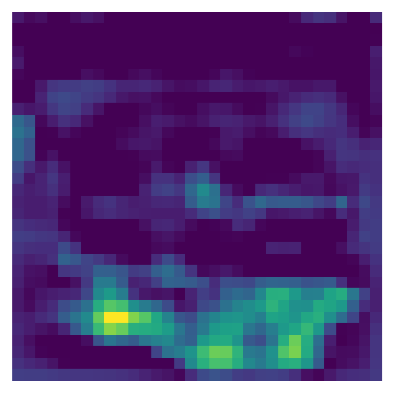}}\quad\ 
\subfloat[\tiny E10 L3]{\includegraphics[width=0.15\linewidth]{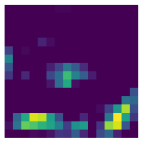}}\quad\ 
\subfloat[\tiny E10 L4]{\includegraphics[width=0.15\linewidth]{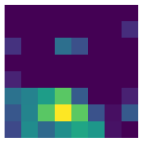}}
\caption{Feature maps of a randomly selected channel for randomly selected testing examples in CIFAR10 by PGD-10 attack (with ResNet-18). (a)$\sim$(e) Example 1: Baseline-NT. (f)$\sim$(j) Example 1: FSU-NT. (k)$\sim$(o) Example 2: Baseline-NT. (p)$\sim$(t) Example 2: FSU-NT. (E and L mean epoch and location. L1: after the first convolution layer; L2: after the first convolution block; L3: after the second convolution block; L4: after the third convolution block.)}
\label{fig.featureMapCIFARPGD0}
\end{figure}

\textbf{Result Statistics:} Table~\ref{tab:statistics} further reports some statistics of the results in Table~\ref{tab:powerful}, including the change of clean accuracy, the maximum / average improvement of robust accuracy, and the count of robustness improvement of the fine-tuned models compared with the original models. It can be seen that on CIFAR10, SVHN and CIFAR100, the clean accuracy has been sacrificed a little bit in some cases, but the robust accuracy has been improved to a greater extent. As for MNIST, it is aforementioned that the accuracies of the original models are already high, without too much room for improvement. Despite this, FSU achieved positive changes on all clean accuracy, maximum and average robust accuracies.

\textbf{Sensitivity Analysis:} Our primary results show that Algorithm~\ref{alg.FSU} is stable with respect to $\beta$ (i.e., the noise intensity of feature Std), while is sensitive to $\alpha$ (i.e., the noise intensity of feature mean). Thus, we fix $\beta$ as $1$ and investigate its performance with $\alpha=\{0.1,0.3,0.5,0.7,0.9\}$. Fig.~\ref{fig.sensitivityCifar10} shows the robust accuracies of the fine-tuned models by Algorithm~\ref{alg.FSU} with different values of $\alpha$ on dataset CIFAR10 under FGSM, PGD-100, AA, and CW$_2$ attacks. It can be seen that different values of $\alpha$ may correspond to different performances. Despite this, the robustness is improved in most cases compared with the original model. In addition, for different attacks, the best accuracy might be obtained with different parameter values, indicating that it is possible to achieve a better defense effect against a certain attack by further tuning the parameters. More sensitivity figures on the other datasets are provided in Appendix~\ref{append.sensitivity}.

\textbf{Ablation Study:}  We also make an ablation study on the FSU module. By deleting steps $7\sim 16$ of Algorithm~\ref{alg.FSU}, we can fine-tune the models only based on $\mathbf{x}$ and $\mathbf{x}'$, denoted as ``Fit w.o. FSU". The ablation results are reported in Fig.~\ref{fig.ablationAT}, where AT-AWP is used as the baseline model. It can be seen that compared with the baseline, pure fine-tuning w.o. FSU could either improve or degrade the robustness. This is due to the fact that the best model is selected via PGD-10, which cannot represent good robustness under other attacks. By incorporating FSU, the robustness is consistently improved under almost all of the attacks, and the improvement is more obvious than that of ``Fit w.o. FSU" in most cases.

\subsection{Calibration for Distribution Shifts}\label{subsec.calibration}

Fig.~\ref{fig.featureDisCifar10} demonstrates the distributions of feature mean and Std, i.e., $(\mu,\sigma)$, of both clean and perturbed examples in dataset CIFAR10 without the FSU module (denoted as Baseline) and with the FSU module (denoted as FSU). Note that to effectively observe the calibration, the naturally trained models in Subsection~\ref{subsec.mild} are used here. Obviously, the distribution shifts of both feature mean and Std caused by the baseline are more obvious than those caused by FSU, indicating that the FSU module helped calibrate the distribution shifts of feature statistics to a great extent. More calibration figures on the other datasets are provided in Appendix~\ref{append.calibration}.

\subsection{Visualization of Reconstructed Features}\label{subsec.visualization}

To visualize the feature reconstruction effects of the FSU module, Figs.~\ref{fig.featureMapMNISTFGSM0}\&\ref{fig.featureMapCIFAR10FGSM0} and Figs.~\ref{fig.featureMapMNISTPGD0}\&\ref{fig.featureMapCIFARPGD0} demonstrate the multi-channel feature maps of randomly selected testing examples perturbed by FGSM and PGD, respectively. Note that the reconstruction effects should be observed during the attacking procedure, thus the models used here are also the naturally trained ones in Subsection~\ref{subsec.mild}. Figs.~\ref{fig.featureMapMNISTFGSM0} and \ref{fig.featureMapMNISTPGD0} are for MNIST; Figs.~\ref{fig.featureMapCIFAR10FGSM0} and \ref{fig.featureMapCIFARPGD0} are for CIFAR10. It can be seen that the FSU module does help the perturbed examples recover some important feature points. Compared to the feature maps obtained from the baseline model, the ones obtained from the FSU model have more explicit semantic information. In other words, the FSU module mitigates the influence of adversarial perturbations and weakens the ability of the attacks to deceive models. More visualization analyses on the feature maps of randomly selected testing examples perturbed by PGD and FGSM are provided in Appendix~\ref{append.visual}.

\section{Conclusions}\label{sec.conclusion}
We theoretically and empirically discovered a universal phenomenon that adversarial attacks tend to shift the distributions of feature statistics of examples. Following this discovery, we enhanced the adversarial robustness of DNNs by reconstructing feature maps based on channel-wise statistical feature mean and standard deviation of a batch of examples. Gaussian noise is used to model the uncertainty of the statistical feature mean and standard deviation, which mitigates the influence of the perturbations from attackers, and helps the reconstructed feature maps recover some domain characteristics for classification. The proposed module does not rely on adversarial information; it can be universally applied to natural training, adversarial training, attacking, predicting, and model fine-tuning for robustness enhancement, which achieves promising improvements against various attacks.

\bibliographystyle{IEEEtran}
\bibliography{Bibliography}

\newpage
\onecolumn
\section*{Appendix}

\subsection{Proof of Lemma~\ref{lamma1}}\label{append.lemma1}
\begin{proof}
We prove it from the simplest case with three examples in two-dimensional space as shown in Fig.~\ref{fig.proof}.

\begin{figure}[h] 
\centering
\includegraphics[width=0.3\linewidth]{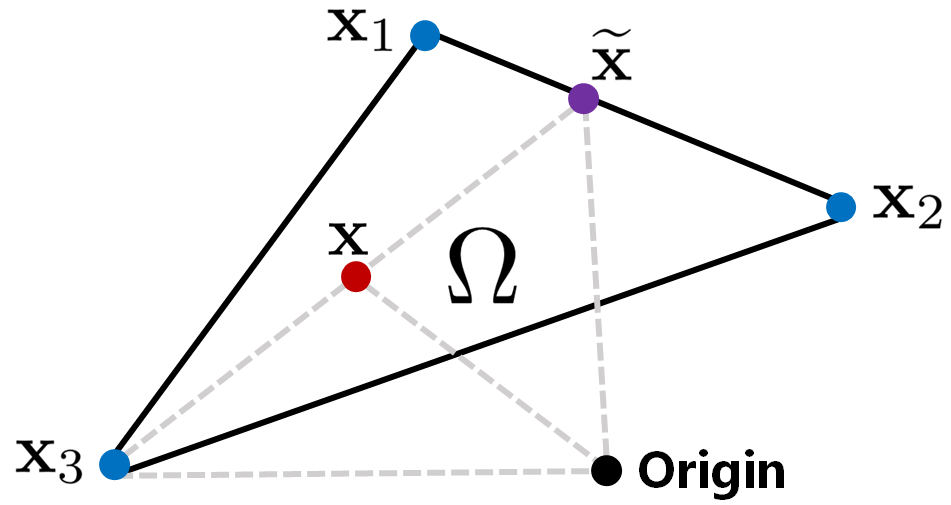}
\caption{A case with three examples in two-dimensional space.}
\label{fig.proof}
\end{figure}

For any example $\mathbf{x}$ in the convex hull of $\{\mathbf{x}_1,\mathbf{x}_2,\mathbf{x}_3\}$, i.e., $\forall\mathbf{x}\in\Omega(\mathbf{x}_1,\mathbf{x}_2,\mathbf{x}_3)$, we can find a line that crosses $\mathbf{x}$ and $\mathbf{x}_3$ and intersects with $\mathbf{x}_1\mathbf{x}_2$ at $\widetilde{\mathbf{x}}$. We can denote $\widetilde{\mathbf{x}}=\alpha\mathbf{x}_1+(1-\alpha)\mathbf{x}_2$, where $\alpha\in[0,1]$. Then,
\begin{displaymath}
\begin{aligned}
\mathbf{x}&=\beta\widetilde{\mathbf{x}}+(1-\beta)\mathbf{x}_3\\
&=\beta\left(\alpha\mathbf{x}_1+(1-\alpha)\mathbf{x}_2\right)+(1-\beta)\mathbf{x}_3\\
&=\beta\alpha\mathbf{x}_1+\beta(1-\alpha)\mathbf{x}_2+(1-\beta)\mathbf{x}_3\\
&=w_1\mathbf{x}_1+w_2\mathbf{x}_2+w_3\mathbf{x}_3,
\end{aligned}
\end{displaymath}
where $\beta\in[0,1]$, $w_1=\beta\alpha$, $w_2=\beta(1-\alpha)$, $w_3=1-\beta$, and
\begin{displaymath}
w_1+w_2+w_3=\beta\alpha+\beta-\beta\alpha+1-\beta=1.
\end{displaymath}
This proof can be easily generalized to complex cases with more examples in higher dimensional space by mathematical induction.
\end{proof}

~\\
\subsection{Proof of Lemma~\ref{lamma2}}\label{append.lemma2}
\begin{proof}
For any $\mathbf{A},\mathbf{B}\in\mathbb{W}$, denote $\mathbf{C}=\mathbf{B}\mathbf{A}$. Obviously, there are
\begin{equation}
c_{i,j}=\sum_{k=1}^{N}a_{i,k}b_{k,j}\in[0,1),
\end{equation}
\begin{equation}
\sum_{j=1}^{N}c_{i,j}=\sum_{j=1}^{N}\sum_{k=1}^{N}a_{i,k}b_{k,j}=\sum_{k=1}^{N}a_{i,k}\sum_{j=1}^{N}b_{k,j}=1\times 1=1,
\end{equation}
\begin{equation}
\sum_{i=1}^{N}\sum_{j=1}^{N}c_{i,j}=N.
\end{equation}
Thus, $\mathbf{C}\in\mathbb{W}$.
\end{proof}

~\\
\subsection{Proof of Lemma~\ref{lamma3}}\label{append.lemma3}
\begin{proof}
Since $\mathbf{W}^{(t)}=\mathbf{W}^t\mathbf{W}^{(t-1)}$, there are
\begin{itemize}
\item $w^{(t)}_{i,j}=\sum_{k=1}^{N}w^{t}_{i,k}w^{(t-1)}_{k,j}\geq \min_k\{w^{(t-1)}_{k,j}\}\sum_{k=1}^{N}w^{t}_{i,k}=\min_k\{w^{(t-1)}_{k,j}\}$,
\item $w^{(t)}_{i,j}=\sum_{k=1}^{N}w^{t}_{i,k}w^{(t-1)}_{k,j}\leq \max_k\{w^{(t-1)}_{k,j}\}\sum_{k=1}^{N}w^{t}_{i,k}=\max_k\{w^{(t-1)}_{k,j}\}$.
\end{itemize}
Therefore $w^{(t)}_{i,j}\in\big[\min_k\{w^{(t-1)}_{k,j}\},\max_k\{w^{(t-1)}_{k,j}\}\big]$, i.e., $\big[\min_i\{w^{(t)}_{i,j}\},\max_i\{w^{(t)}_{i,j}\}\big]\subseteq\big[\min_k\{w^{(t-1)}_{k,j}\},\max_k\{w^{(t-1)}_{k,j}\}\big]$, from which we know that
$\min_i\{w^{(t)}_{i,j}\}$ is monotonically increasing with respect to $t$, and $\max_i\{w^{(t)}_{i,j}\}$ is monotonically decreasing with respect to $t$. Since $\min_i\{w^{(t)}_{i,j}\}<\max_i\{w^{(1)}_{i,j}\}$ and $\max_i\{w^{(t)}_{i,j}\}>\min_i\{w^{(1)}_{i,j}\}$, we know that $\min_i\{w^{(t)}_{i,j}\}$ has upper bound and $\max_i\{w^{(t)}_{i,j}\}$ has lower bound. According to the monotonic bounded theorem, both $\min_i\{w^{(t)}_{i,j}\}$ and $\max_i\{w^{(t)}_{i,j}\}$ converge with respect to $t$, therefore the mathematical limits of $\min_i\{w^{(t)}_{i,j}\}$ and $\max_i\{w^{(t)}_{i,j}\}$ exist.

Without loss of generality, suppose $\exists\ \xi_1>\min_i\{w^{(1)}_{i,j}\}$ and $\exists\ \xi_2<\max_i\{w^{(1)}_{i,j}\}$ such that $\lim\limits_{t\rightarrow\infty}\min_i\{w^{(t)}_{i,j}\}=\xi_1$ and $\lim\limits_{t\rightarrow\infty}\max_i\{w^{(t)}_{i,j}\}=\xi_2$. That is, $\forall\varepsilon>0$,
\begin{itemize}
\item $\exists\lambda_1\in\mathcal{R}$, $0<\lambda_1<1$, $\exists N_1\in Z^{+}$, such that when $t>N_1$ there is $\big|\min_i\{w^{(t)}_{i,j}\}-\xi_1\big|<\frac{\lambda_1\varepsilon}{2}$,
\item $\exists\lambda_2\in\mathcal{R}$, $0<\lambda_2<1$, $\exists N_2\in Z^{+}$, such that when $t>N_2$ there is $\big|\max_i\{w^{(t)}_{i,j}\}-\xi_2\big|<\frac{\lambda_2\varepsilon}{2}$.
\end{itemize}
Therefore, when $t>N_1+N_2$, 
\begin{equation}
\begin{array}{lll}
&\big|\min_i\{w^{(t)}_{i,j}\}-\min_i\{w^{(t+1)}_{i,j}\}\big| \\
&= \big|\big(\min_i\{w^{(t)}_{i,j}\}-\xi_1\big)-\big(\min_i\{w^{(t+1)}_{i,j}\}-\xi_1\big)\big|\\
&\leq \big|\min_i\{w^{(t)}_{i,j}\}-\xi_1\big|+\big|\min_i\{w^{(t+1)}_{i,j}\}-\xi_1\big|\\
&< \lambda_1\varepsilon,
\end{array}
\end{equation}
\begin{equation}
\begin{array}{lll}
&\big|\max_i\{w^{(t+1)}_{i,j}\}-\max_i\{w^{(t)}_{i,j}\}\big| \\
&= \big|\big(\max_i\{w^{(t+1)}_{i,j}\}-\xi_2\big)-\big(\max_i\{w^{(t)}_{i,j}\}-\xi_2\big)\big|\\
&\leq \big|\max_i\{w^{(t+1)}_{i,j}\}-\xi_2\big|+\big|\max_i\{w^{(t)}_{i,j}\}-\xi_2\big|\\
&< \lambda_2\varepsilon.
\end{array}
\end{equation}
Since $\big[\min_i\{w^{(t+1)}_{i,j}\},\max_i\{w^{(t+1)}_{i,j}\}\big]\subseteq\big[\min_i\{w^{(t)}_{i,j}\},\max_i\{w^{(t)}_{i,j}\}\big]$, we can denote 
\begin{itemize}
\item $\min_i\{w^{(t+1)}_{i,j}\} = \lambda_1\max_i\{w^{(t)}_{i,j}\}+(1-\lambda_1)\min_i\{w^{(t)}_{i,j}\}$,
\item $\max_i\{w^{(t+1)}_{i,j}\} = (1-\lambda_2)\max_i\{w^{(t)}_{i,j}\}+\lambda_2\min_i\{w^{(t)}_{i,j}\}$.
\end{itemize}
Therefore,
\begin{itemize}
\item $\big|\min_i\{w^{(t)}_{i,j}\}-\lambda_1\max_i\{w^{(t)}_{i,j}\}-(1-\lambda_1)\min_i\{w^{(t)}_{i,j}\}\big| < \lambda_1\varepsilon$, thus $\big|\min_i\{w^{(t)}_{i,j}\}-\max_i\{w^{(t)}_{i,j}\}\big| < \varepsilon$,
\item $\big|(1-\lambda_2)\max_i\{w^{(t)}_{i,j}\}+\lambda_2\min_i\{w^{(t)}_{i,j}\}-\max_i\{w^{(t)}_{i,j}\}\big| < \lambda_2\varepsilon$, thus $\big|\max_i\{w^{(t)}_{i,j}\}-\min_i\{w^{(t)}_{i,j}\}\big| < \varepsilon$.
\end{itemize}
This means, when $t>N_1+N_2$, $\forall\varepsilon>0$ there is $|\xi_1-\xi_2|<\varepsilon$, i.e., $\xi_1=\xi_2=\xi$, $\lim\limits_{t\rightarrow\infty}\min_i\{w^{(t)}_{i,j}\}= \lim\limits_{t\rightarrow\infty}\max_i\{w^{(t)}_{i,j}\} =\xi$. 

Finally, we get the conclusion that when $t\rightarrow\infty$, there is $w^{(t)}_{1,j}=w^{(t)}_{2,j}=\ldots=w^{(t)}_{N,j}=\xi$, i.e., the values in each column of $\mathbf{W}^{(t)}$ become equal.
\end{proof}

~\\
\subsection{Proof of Proposition~\ref{proposition1}}\label{append.prop1}
\begin{proof}
Let $\mathbf{x}_i=[x_{i1},\ldots,x_{id}]$ (which is a row vector) be the $i$-th clean example, and $\mathbf{X}=[\mathbf{x}_1;\ldots;\mathbf{x}_N]$ be the clean data matrix. According to Lemma~\ref{lamma1} and Assumption~\ref{assumption1}, there are
\begin{equation}\label{eq.advSamples}
\begin{aligned}
\mathbf{x}_1^{\prime}&=w_{11}\mathbf{x}_1+w_{12}\mathbf{x}_2+\ldots+w_{1N}\mathbf{x}_N=\mathbf{w}_{1}\mathbf{X}=[x_{11}^{\prime},\ldots,x_{1d}^{\prime}],\\
\mathbf{x}_2^{\prime}&=w_{21}\mathbf{x}_1+w_{22}\mathbf{x}_2+\ldots+w_{2N}\mathbf{x}_N=\mathbf{w}_{2}\mathbf{X}=[x_{21}^{\prime},\ldots,x_{2d}^{\prime}],\\
\ldots\\
\mathbf{x}_N^{\prime}&=w_{N1}\mathbf{x}_1+w_{N2}\mathbf{x}_2+\ldots+w_{NN}\mathbf{x}_N=\mathbf{w}_{N}\mathbf{X}=[x_{N1}^{\prime},\ldots,x_{Nd}^{\prime}],\\
\end{aligned}
\end{equation} 
where
\begin{equation}\label{eq.constraintsW}
\begin{array}{llll}
&\mathbf{w}_{i}=[w_{i,1},w_{i,2},\ldots,w_{i,N}],\ i=1,\ldots,N,\\
&w_{i,j}\in[0,1),\ i=1,\ldots,N,\ j=1,\ldots,N,\\
&\sum_{j=1}^{N}w_{i,j}=1,\ i=1,\ldots,N,\\
&\sum_{i=1}^{N}\sum_{j=1}^{N}w_{i,j}=N.\\
\end{array}
\end{equation} 
Denote $\mathbf{W}=[\mathbf{w}_{1};\mathbf{w}_{2};\ldots;\mathbf{w}_{N}]$, (\ref{eq.advSamples}) can be re-written into the matrix form, i.e.,
\begin{equation}\label{eq.advSamplesMatrix}
\mathbf{X}^{\prime}=\mathbf{W}\mathbf{X}.
\end{equation} 
Let $\alpha=[\frac{1}{d},\frac{1}{d},\ldots,\frac{1}{d}]$, then for a clean example $\mathbf{x}_i$ and its corresponding adversarial example $\mathbf{x}_i^{\prime}$, their feature means are calculated as 
\begin{equation}
\begin{aligned}
&\mu_i=\frac{1}{d}\sum_{j=1}^{d}x_{ij}=\mathbf{x}_i\alpha^{\top},\\
&\mu_i^{\prime}=\frac{1}{d}\sum_{j=1}^{d}x_{ij}^{\prime}=\mathbf{x}_i^{\prime}\alpha^{\top}.
\end{aligned}
\end{equation}
Denote $\bm{\mu}=[\mu_1,\mu_2,\ldots,\mu_N]^{\top}$ and $\bm{\mu}{\prime}=[\mu_1^{\prime},\mu_2^{\prime},\ldots,\mu_N^{\prime}]^{\top}$, there are
\begin{equation}
\begin{aligned}
&\bm{\mu}=\mathbf{X}\alpha^{\top},\\
&\bm{\mu}^{\prime}=\mathbf{X}^{\prime}\alpha^{\top}=\mathbf{W}\mathbf{X}\alpha^{\top}=\mathbf{W}{\bm\mu}.
\end{aligned}
\end{equation}
Further let $\beta=[\frac{1}{N},\frac{1}{N},\ldots,\frac{1}{N}]$, the expectaions of $\mu$ and $\mu^{\prime}$ are estimated as
\begin{equation}\label{eq.meanClnAdv}
\begin{aligned}
&\mathbb{E}(\mu)=\frac{1}{Nd}\sum_{i=1}^{N}\sum_{j=1}^{d}x_{ij}=\beta\mathbf{X}\alpha^{\top}=\Phi_{\mu},\\
&\mathbb{E}(\mu^{\prime})=\frac{1}{Nd}\sum_{i=1}^{N}\sum_{j=1}^{d}x_{ij}^{\prime}=\beta\mathbf{X}^{\prime}\alpha^{\top}=\beta\mathbf{W}\mathbf{X}\alpha^{\top}=\Phi_{\mu^{\prime}}.
\end{aligned}
\end{equation} 
The variance of $\{\mu_1,\mu_2,\ldots,\mu_N\}$ is estimated as
\begin{equation}\label{eq.stdCln}
\begin{aligned}
&{\rm var}(\mu)\\
&=\frac{1}{N}\sum_{i=1}^{N}(\mu_i-\Phi_{\mu})^2\\
&=\frac{1}{N}\sum_{i=1}^{N}(\mu_i^2-2\mu_i\Phi_{\mu}+\Phi_{\mu}^2)\\
&=\frac{1}{N}\sum_{i=1}^{N}(\mu_i^2-\Phi_{\mu}^2)\\
&=\Sigma_{\mu}^2.
\end{aligned}
\end{equation} 
Similarly, the variance of $\{\mu_1^{\prime},\mu_2^{\prime},\ldots,\mu_N^{\prime}\}$ is estimated as
\begin{equation}\label{eq.stdAdv}
{\rm var}(\mu^{\prime})=\frac{1}{N}\sum_{i=1}^{N}\left((\mu_i^{\prime})^2-(\Phi_{\mu}^{\prime})^2\right)=\Sigma_{\mu^{\prime}}^2.
\end{equation} 
We reasonably assume that the adversarial attack didn't change the expectation of the feature means, i.e., $\Phi_{\mu}=\Phi_{\mu^{\prime}}$, then the difference of the two variances is calculated as
\begin{equation}\label{eq.varDiff}
\begin{aligned}
&{\rm var}(\mu)-{\rm var}(\mu^{\prime})\\
&=\frac{1}{N}\sum_{i=1}^{N}\left(\mu_i^2-\Phi_{\mu}^2\right)-\frac{1}{N}\sum_{i=1}^{N}\left((\mu_i^{\prime})^2-(\Phi_{\mu}^{\prime})^2\right)\\
&=\frac{1}{N}\sum_{i=1}^{N}\mu_i^2-\frac{1}{N}\sum_{i=1}^{N}(\mu_i^{\prime})^2\\
&=\frac{1}{N}\sum_{i=1}^{N}\left(\mu_i^2-(\mu_i^{\prime})^2\right)\\
&=\frac{1}{N}\left(\bm{\mu}^{\top}\bm{\mu}-(\bm{\mu}^{\prime})^{\top}(\bm{\mu}^{\prime})\right)\\
&=\frac{1}{N}\left(\bm{\mu}^{\top}\bm{\mu}-\bm{\mu}^{\top}\mathbf{W}^{\top}\mathbf{W}\bm{\mu}\right)\\
&=\frac{1}{N}\left(\bm{\mu}^{\top}(\mathbf{I}-\mathbf{W}^{\top}\mathbf{W})\bm{\mu}\right).
\end{aligned}
\end{equation}
Let $\mathbf{A}=\mathbf{W}^{\top}\mathbf{W}$ ($\mathbf{A}$ is a real symmetric matrix), and perform orthogonal decomposition to $\mathbf{A}$, there is
\begin{equation}\label{eq.decompose}
\mathbf{A}=\mathbf{Q}\Lambda\mathbf{Q}^{\top},
\end{equation}
where $\mathbf{Q}$ is an orthogonal matrix, $\Lambda=diag(\lambda_1,\lambda_2,\ldots,\lambda_N)$, $\lambda_1\geq\lambda_2\geq\ldots\geq\lambda_N\geq 0$ are the eigenvalues of $\mathbf{A}$ and $\sum_{i=1}^{N}\lambda_i=tr(\mathbf{W}^{\top}\mathbf{W})=\sum_{i=1}^{N}\sum_{j=1}^{N}w_{i,j}^2$. Furthermore, since $w_{i,j}\in[0,1)$ and $\sum_{i=}^{N}\sum_{j=1}^{N}w_{i,j}=N$, it is obvious that $\sum_{i=1}^{N}\lambda_i\leq N$. Let $\mathbf{z}=\mathbf{Q}^{\top}{\bm\mu}=[z_1,z_2,\ldots,z_N]^{\top}$, then
\begin{equation}\label{eq.varDif2}
\begin{aligned}
&{\rm var}(\mu)-{\rm var}(\mu^{\prime})\\
&=\frac{1}{N}\left(\mathbf{z}^{\top}\mathbf{Q}^{\top}\mathbf{Q}\mathbf{z}-\mathbf{z}^{\top}\Lambda\mathbf{z}\right)\\
&=\frac{1}{N}\left(\mathbf{z}^{\top}\mathbf{z}-\mathbf{z}^{\top}\Lambda\mathbf{z}\right)\\
&=\frac{1}{N}\left((1-\lambda_1)z_1^2+(1-\lambda_2)z_2^2+\ldots+(1-\lambda_N)z_N^2\right).
\end{aligned}
\end{equation}
We discuss the situation that a series of attacking steps $\mathbf{W}^1,\mathbf{W}^2,\ldots, \mathbf{W}^t$ finally lead to the attacking results, i.e.,
\begin{equation}\label{eq.advSamplesMatrix2}
\mathbf{X}^{\prime}=\mathbf{W}^t\ldots\mathbf{W}^2\mathbf{W}^1\mathbf{X}=\mathbf{W}^{(t)}\mathbf{X}.
\end{equation} 
Suppose $\mathbf{W}^1$ is full rank. Based on traditional textbook, and according to Lemma~\ref{lamma2} and Lemma~\ref{lamma3}, when $t\rightarrow\infty$, there is 
\begin{equation}
\begin{aligned}
N&=rank(\mathbf{W}^1)\geq rank(\mathbf{W}^2\mathbf{W}^1)\geq rank(\mathbf{W}^3\mathbf{W}^2\mathbf{W}^1)\\
&\geq\ldots\geq rank(\mathbf{W}^{(t)})=1.
\end{aligned}
\end{equation}
For the extreme case of $t\rightarrow\infty$, it is obvious that ${\rm var}(\mu)-{\rm var}(\mu^{\prime})> 0$ holds with probability 1. Correspondingly, from the initial state to the extreme state, the rank of $\mathbf{W}^{(t)}$ becomes smaller and smaller, i.e., the zero eigenvalues become more and more, thereby the must-positive terms in Eq.~(\ref{eq.varDif2}) become more and more. During this process, since $\sum_{i=1}^{N}\lambda_i\leq N$ always holds, we can say that ${\rm var}(\mu)-{\rm var}(\mu^{\prime})\geq 0$ holds with a high probability. 
\end{proof}

{\bf Geometric Interpretation} This ``high probability” can be interpreted geometrically. From Eq.~(\ref{eq.varDiff}) we know that ${\rm var}(\mu)-{\rm var}(\mu^{\prime})\geq0$ holds when $\bm{\mu}^{\top}\bm{\mu}\geq\bm{\mu}^{\top}\mathbf{W}^{\top}\mathbf{W}\bm{\mu}$, i.e., $|\bm{\mu}|\geq|\mathbf{W}\bm{\mu}|=|\bm{\mu}^{\prime}|$. Taking the origin as the center and $|\bm{\mu}|$ as the radius of a hypersphere in $N$-dimensional space, obviously, there are $|\bm{\mu}|>|\bm{\mu}^{\prime}|$ when $\bm{\mu}^{\prime}$ is inside the hypersphere, and $|\bm{\mu}|<|\bm{\mu}^{\prime}|$ when $\bm{\mu}^{\prime}$ is outside the hypersphere. Note that the requirements for $\mathbf{W}$ in (\ref{eq.constraintsW}) will constrain $\bm{\mu}^{\prime}$ in a hypercube area, where the following Fig.~\ref{fig.GeometricInterpretation} provides some two-dimensional cases. No matter where $\bm{\mu}$ is located on the hypersphere, the hypercube always has at least half of its volume inside the hypersphere, and in many cases, the volume inside the hypersphere (i.e., gray area) is much larger than that outside it (i.e., pink area).

\begin{figure}[h]
\centering
\subfloat[\small Two-Dimensional Case 1]{\includegraphics[width=0.35\linewidth]{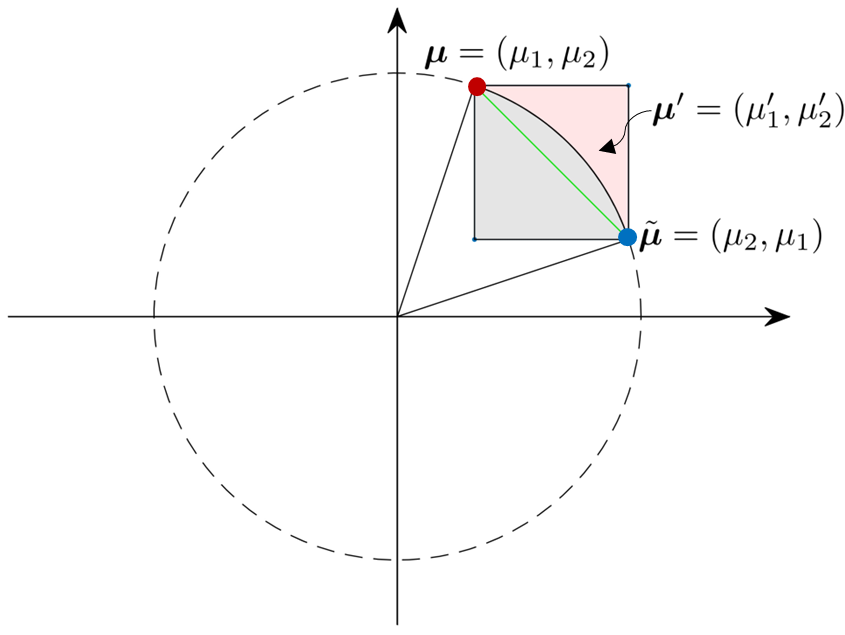}}\
\subfloat[\small Two-Dimensional Case 2]{\includegraphics[width=0.35\linewidth]{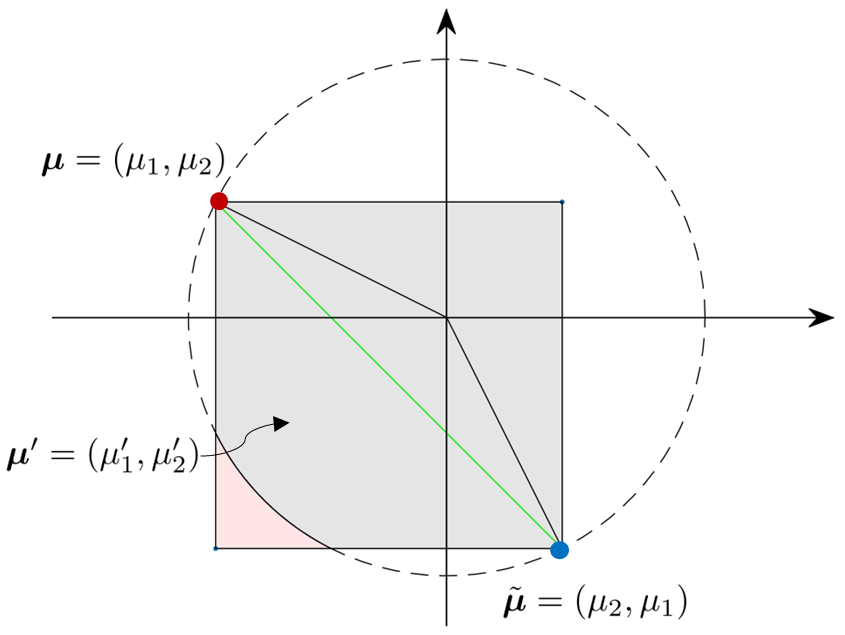}}\\
\subfloat[\small Two-Dimensional Case 3]{\includegraphics[width=0.35\linewidth]{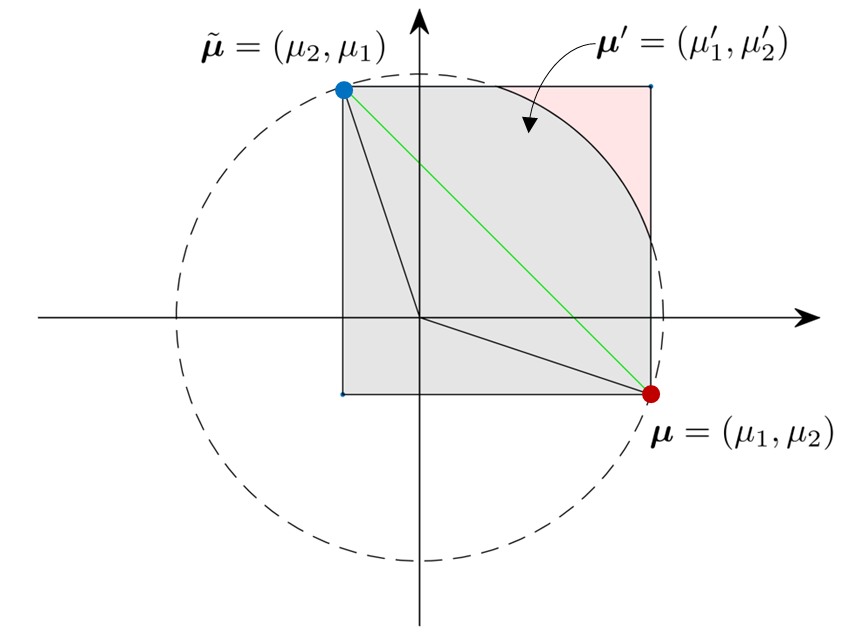}}\
\subfloat[\small Two-Dimensional Case 4]{\includegraphics[width=0.35\linewidth]{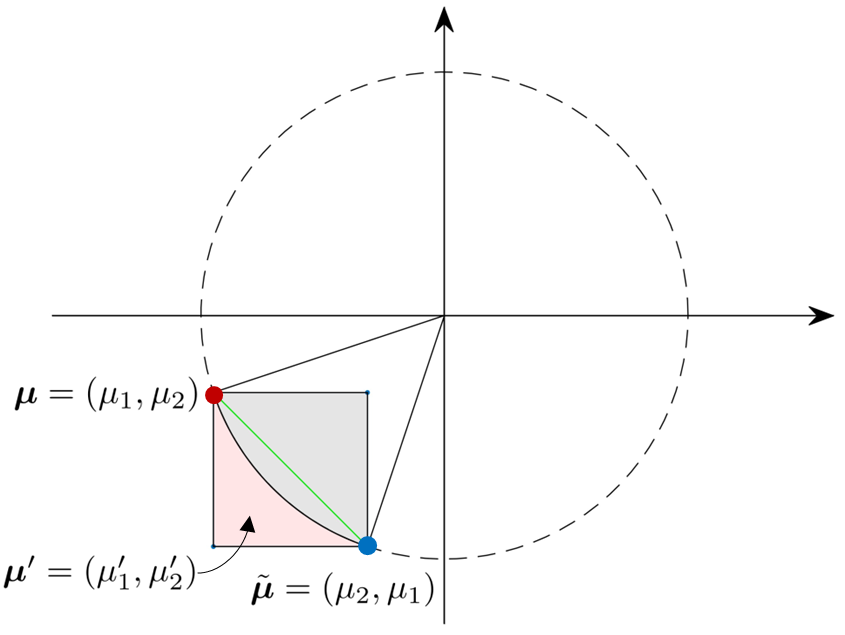}}\\
\vspace{0.3cm}
\caption{Geometric interpretation for Proposition~\ref{proposition1}.}
\label{fig.GeometricInterpretation}
\vspace{1.0cm}
\end{figure}

~\\
\subsection{Proof of Proposition~\ref{proposition3}}\label{append.prop3}
\begin{proof}
Following the proof of Proposition~\ref{proposition1}, we discuss the situation that a series of attacking steps $\mathbf{W}^1,\mathbf{W}^2,\ldots,\mathbf{W}^t$ finally lead to the attacking results, i.e., $\mathbf{X}^{\prime}=\mathbf{W}^t\ldots\mathbf{W}^2\mathbf{W}^1\mathbf{X}=\mathbf{W}^{(t)}\mathbf{X}$. There is
\begin{equation}
\begin{aligned}
&\lim_{t\rightarrow\infty}{\rm var}(\sigma^{\prime 2})\\
&=\lim_{t\rightarrow\infty}\frac{1}{N}\sum_{i=1}^{N}(\sigma_i^{\prime 2}-\Phi^2_{\sigma^{\prime 2}})\\
&=\lim_{t\rightarrow\infty}\frac{1}{N}\sum_{i=1}^{N}\left[\frac{1}{d}(\mathbf{w}_i\mathbf{X}\mathbf{X}^{\top}\mathbf{w}_i^{\top})-(\mathbf{w}_i{\bm\mu})^2\right]^2-\Phi^2_{\sigma^{\prime 2}}\\
&=\left[\frac{1}{d}(\beta\mathbf{X}\mathbf{X}^{\top}\beta^{\top})-(\beta{\bm\mu}{\bm\mu}^{\top}\beta^{\top})\right]^2-\Phi^2_{\sigma^{\prime 2}}\\
&=0.
\end{aligned}
\end{equation}
Thus, there is $\lim_{t\rightarrow\infty}[{\rm var}(\sigma^2)-{\rm var}(\sigma^{\prime 2})]>>0$. Correspondingly, from the initial state to the extreme state, we can say that ${\rm var}(\sigma^2)-{\rm var}(\sigma^{\prime 2})\geq 0$ with a high probability. 
\end{proof}

~\\
\subsection{Proof of Theorem~\ref{thm1}}\label{append.thm1}
\begin{proof}
We derive the distributions of $\hat{\mu}_{b,c}$ and $\hat{\sigma}_{b,c}$. Based on Eq.~(\ref{eq.noise}) and Eq.~(\ref{eq.reConstuctMS}) we know that $\hat{\mu}_{b,c}$ and $\hat{\sigma}_{b,c}$ follow independent Gaussian distributions. According to the operation rules for Gaussian distributions, there are
\begin{equation}\label{eq.reConstuctMS1}
\small
\left\{
\begin{aligned}
        &\mathbb{E}[\hat{\mu}_{b,c}] = \mathbb{E}[\mu_{b,c}]+\alpha \cdot \Sigma_{\mu_c} \cdot \mathbb{E}[\varepsilon^{\mu}_{b,c}] = \Phi_{\mu_c}, \\
        &{\rm var}[\hat{\mu}_{b,c}] = {\rm var}[\mu_{b,c}]+\alpha^2 \cdot \Sigma_{\mu_c}^2 \cdot {\rm var}[\varepsilon^{\mu}_{b,c}]  = (1+\alpha^2)\Sigma_{\mu_c}^2, \\
        &\mathbb{E}[\hat{\sigma}_{b,c}] = \mathbb{E}[\sigma_{b,c}]+\beta \cdot \Sigma_{\sigma_c} \cdot \mathbb{E}[\varepsilon^{\sigma}_{b,c}] = \Phi_{\sigma_c}, \\
        &{\rm var}[\hat{\sigma}_{b,c}] = {\rm var}[\sigma_{b,c}]+\beta^2 \cdot \Sigma_{\sigma_c}^2 \cdot {\rm var}[\varepsilon^{\sigma}_{b,c}]  = (1+\beta^2)\Sigma_{\sigma_c}^2, \\
\end{aligned}\right.
\end{equation}
from which there are
\begin{equation}\label{eq.reConstuctMS2}
\left\{
\begin{aligned}
    &\hat{\mu}_{b,c}\sim\mathcal{N}\big(\Phi_{\mu_c},(1+\alpha^2)\Sigma_{\mu_c}^2\big),\\
    &\hat{\sigma}_{b,c}\sim\mathcal{N}\big(\Phi_{\sigma_c},(1+\beta^2)\Sigma_{\sigma_c}^2\big).\\
\end{aligned}\right.
\end{equation}
Since $\alpha^2\geq0$ and $\beta^2\geq0$, there are ${\rm var}[\hat{\mu}_{b,c}]\geq{\rm var}[{\mu}_{b,c}]$ and ${\rm var}[\hat{\sigma}_{b,c}]\geq{\rm var}[{\sigma}_{b,c}]$, i.e., the variances of both the feature means and standard deviations become larger after reconstruction. 
\end{proof}

\newpage
\subsection{Robustness Comparison of FSU-NT with State-of-the-Arts}\label{append.mild}

\begin{table}[h]
    \centering
    \caption{Robust accuracy (\%) under typical gradient-based mild attacks.}
    \vspace{-0.2cm}
    \label{tab:mild}
    \renewcommand\arraystretch{0.95}
    \scalebox{0.95}{
    \begin{tabular}{lp{1.2cm}p{1.6cm}p{1.6cm}p{1.6cm}p{1.6cm}p{1.6cm}r}
    \toprule
    Method & AT & Cln. & FGSM$_\infty$ & BIM$_\infty$ & MIM$_\infty$ & PGD$_\infty$ & Time\\
    &&&(TorchAttack)&(TorchAttack)&(TorchAttack)&(TorchAttack)&\\
    &&&&($2/255$)&($2/255$)&($2/255$)&\\
    \midrule
    \multicolumn{8}{c}{\pmb{MNIST} ($\epsilon=0.3$)\quad 40 steps for BIM, MIM and PGD} \\
    \cdashline{1-8}[1pt/1pt]\noalign{\smallskip}
    PCL & Y & 99.21 & 70.26 & 26.71 & 26.75 & 10.06 & 73499s\\
    TRADES & Y & 99.53 & 97.55 & 96.28 & 96.33 & 97.29 & 9868s\\
    AT-AWP & Y & 98.73 & 95.47 & 94.83 & 94.82 & 96.18 & 177064s\\
    MAIL-TRADES & Y & 99.14 & 97.29 & 96.28 & 96.41 & 97.14 & 48132s\\
    MLCAT$_{\rm WP}$ & Y & 98.88 & 97.24 & 96.38 & 96.16 & 96.85 & 100989s\\
    AT+RiFT & Y & 99.44 & 95.20 & 94.02 & 94.26 & 94.87 & 37364s\\
    DKL  & Y & 98.80 & 97.07 & 96.91 & 96.73 & 97.02 & 63728s\\
    SAM & N & 99.19 & 08.44 & 08.68 & 09.25 & 08.87 & {\bf 5530s}\\
    FSU-NT (Ours) & N & 99.16 & 90.59 & 91.63 & 85.86 & 85.42 & {\bf 1306s}\\
    \midrule
    \multicolumn{8}{c}{\pmb{CIFAR10} ($\epsilon=8/255$)\quad 10 steps for BIM, MIM and PGD} \\
    \cdashline{1-8}[1pt/1pt]\noalign{\smallskip}
    PCL & Y & 89.85 & 54.32 & 17.47 & 18.62 & 15.17 & 187194s \\
    TRADES & Y & 85.76 & 61.61 & 53.67 & 55.76 & 54.29 & 160872s \\
    AT-AWP & Y & 84.82 & 58.90 & 52.28 & 53.85 & 52.74 & 210223s\\
    MAIL-TRADES & Y & 84.03 & 55.96 & 50.89 & 51.92 & 51.31 & 101026s\\
    MLCAT$_{\rm WP}$ & Y & 85.11 & 58.87 & 55.77 & 56.16 & 56.30 & 429540s\\
    AT+RiFT & Y & 76.01 & 40.79 & 34.28 & 35.77 & 34.53 & 127665s\\
    DKL  & Y & 79.84 & 59.06 & 55.70	 & 56.44 & 55.95 & 73323s\\
    SAM & N & 93.91 & 45.17 & 21.86 & 21.66 & 19.67 & {\bf 6496s}\\
    FSU-NT (Ours) & N & 92.36 & 73.24 & 59.11 & 57.06 & 59.68 & {\bf 7152s}\\
    \midrule
    \multicolumn{8}{c}{\pmb{SVHN} ($\epsilon=8/255$)\quad 10 steps for BIM, MIM and PGD} \\
    \cdashline{1-8}[1pt/1pt]\noalign{\smallskip}
    PCL & Y & 97.32 & 72.00 & 27.92 & 31.36 & 30.38 & 273098s\\
    TRADES & Y & 91.73 & 69.10 & 58.63 & 60.48 & 59.03 & 243508s\\
    AT-AWP & Y & 93.39 & 66.61 & 55.78 & 58.82 & 56.84 & 112945s\\
    MAIL-TRADES & Y & 92.42 & 70.74 & 60.38 & 61.69 & 60.77 & 60007s\\
    MLCAT$_{\rm WP}$ & Y & 19.59 & 19.59 & 19.59 & 19.59 & 19.59 & 265291s\\
    AT+RiFT & Y & 91.33 & 53.80 & 41.56 & 44.62 & 42.13 & 75817s\\
    DKL  & Y & 95.14 & 90.88 & 65.34 & 73.05 & 67.99 & 105841s\\
    SAM & N & 19.59 & 19.59 & 19.59 & 19.59 & 19.59 & {\bf 12358s}\\
    FSU-NT (Ours) & N & 96.25 & 80.06 & 64.98 & 58.92 & 67.83 & {\bf 4985s}\\
    \midrule
    \multicolumn{8}{c}{\pmb{CIFAR100} ($\epsilon=8/255$)\quad 10 steps for BIM, MIM and PGD} \\
    \cdashline{1-8}[1pt/1pt]\noalign{\smallskip}
    TRADES & Y & 58.93 & 33.61 & 31.14 & 31.73 & 31.40 & 163013s \\
    AT-AWP & Y & 53.61 & 32.72 & 30.91 & 31.37 & 31.11 & 77060s\\
    MAIL-TRADES & Y & 61.33 & 31.29 & 27.77 & 28.45 & 28.03 & 114857s\\
    MLCAT$_{\rm WP}$ & Y & 63.51 & 33.68 & 29.48 & 30.46 & 29.70 & 158484s\\
    AT+RiFT & Y & 44.35 & 20.53 & 18.05 & 18.50 & 18.16 & 125939s\\
    DKL  & Y & 60.44 & 35.21 & 32.72 & 33.24 & 32.89 & 65737s\\
    SAM & N & 77.40 & 32.73 & 13.81 & 14.77 & 12.98 & {\bf 6566s}\\
    FSU-NT (Ours) & N &70.03 & 49.39 & 36.06 & 42.20 & 34.76 & {\bf 11026s}\\
    \bottomrule
    \end{tabular}
    }
\begin{minipage}{14.5cm}
\vspace{0.1cm}
\footnotesize{Note 1: ``Y" / ``N" represents adversarial training / non-adversarial training. \\
Note 2: On dataset SVHN, MLCAT$_{\rm wp}$ and SAM have trapped into a fixed pattern, and the perturbations generated by all attacks lose effectiveness.}
\end{minipage}
\end{table}

\newpage
\subsection{More Sensitivity Figures}\label{append.sensitivity}

\begin{figure*}[h]
\centering
\subfloat[\scriptsize SVHN - FGSM$_\infty$ Attack ($\epsilon=8/255$)]{\includegraphics[width=0.49\linewidth]{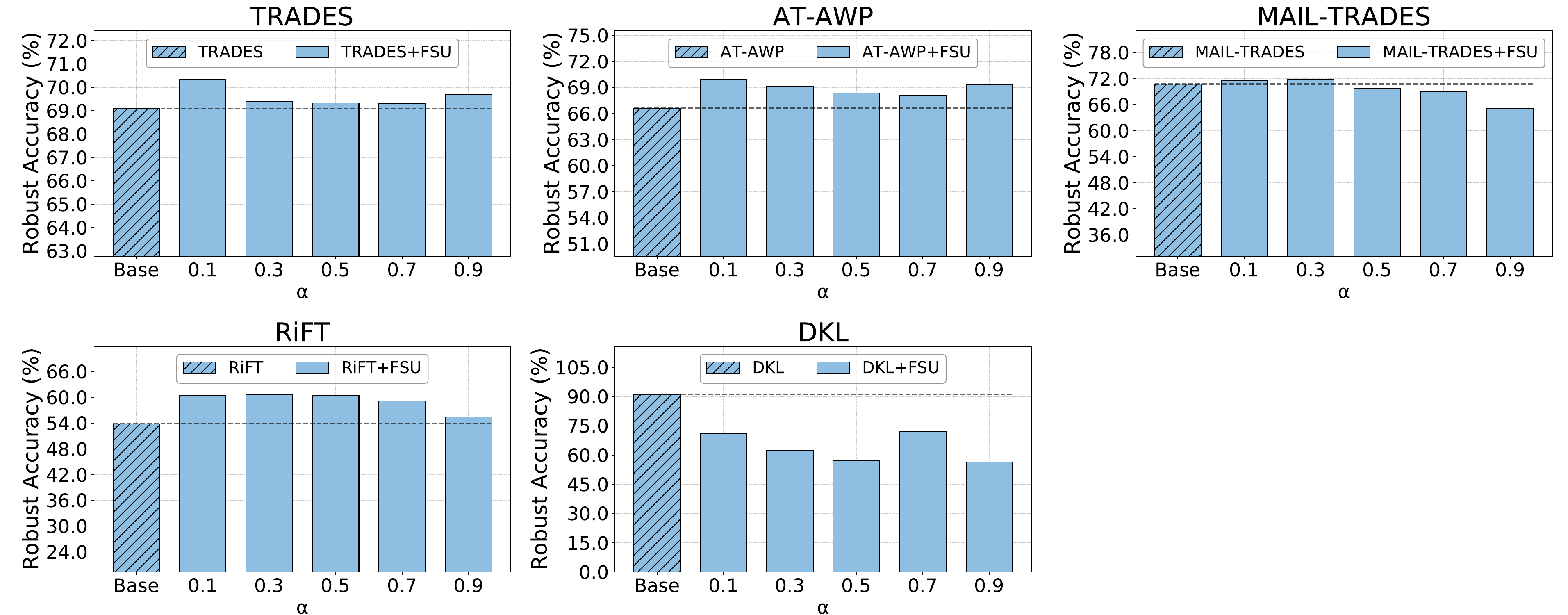}}\quad
\subfloat[\scriptsize SVHN - PGD$_\infty$-100 Attack ($\epsilon=8/255$, step size $2/255$)]{\includegraphics[width=0.49\linewidth]{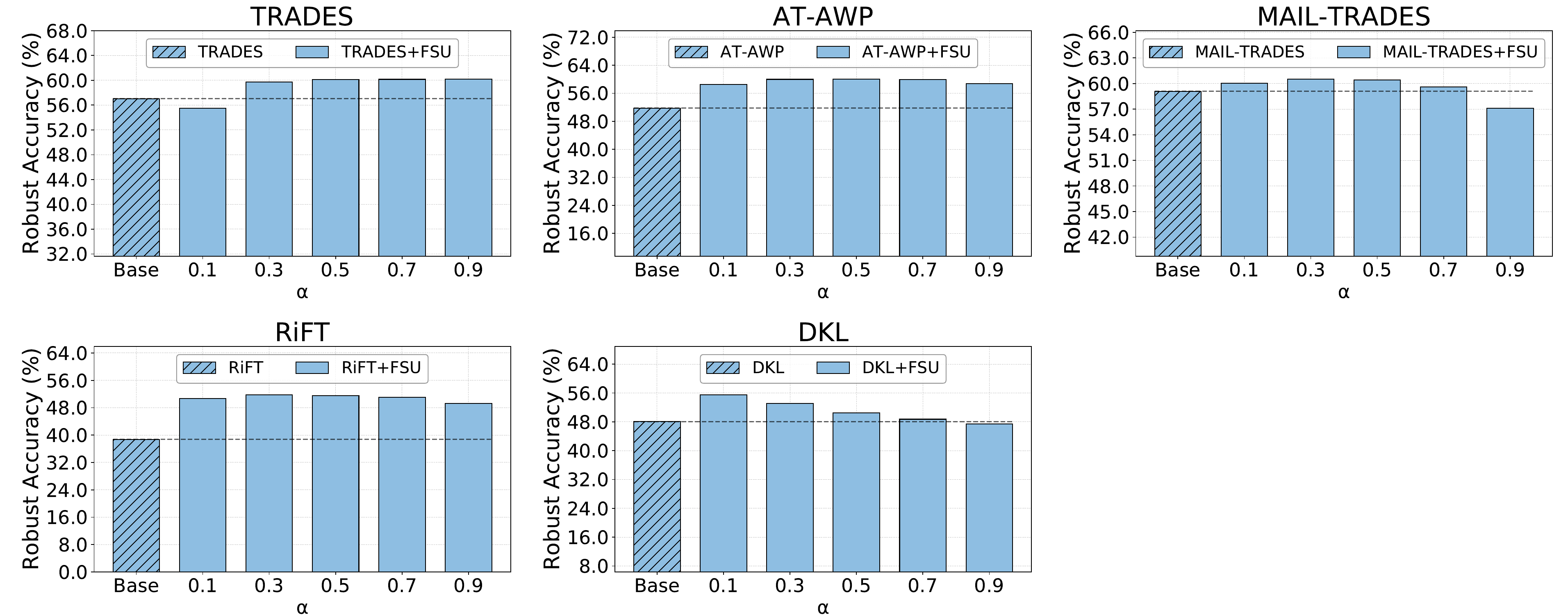}}\\
\subfloat[\scriptsize SVHN - AA$_\infty$ Attack ($\epsilon=8/255$)]{\includegraphics[width=0.49\linewidth]{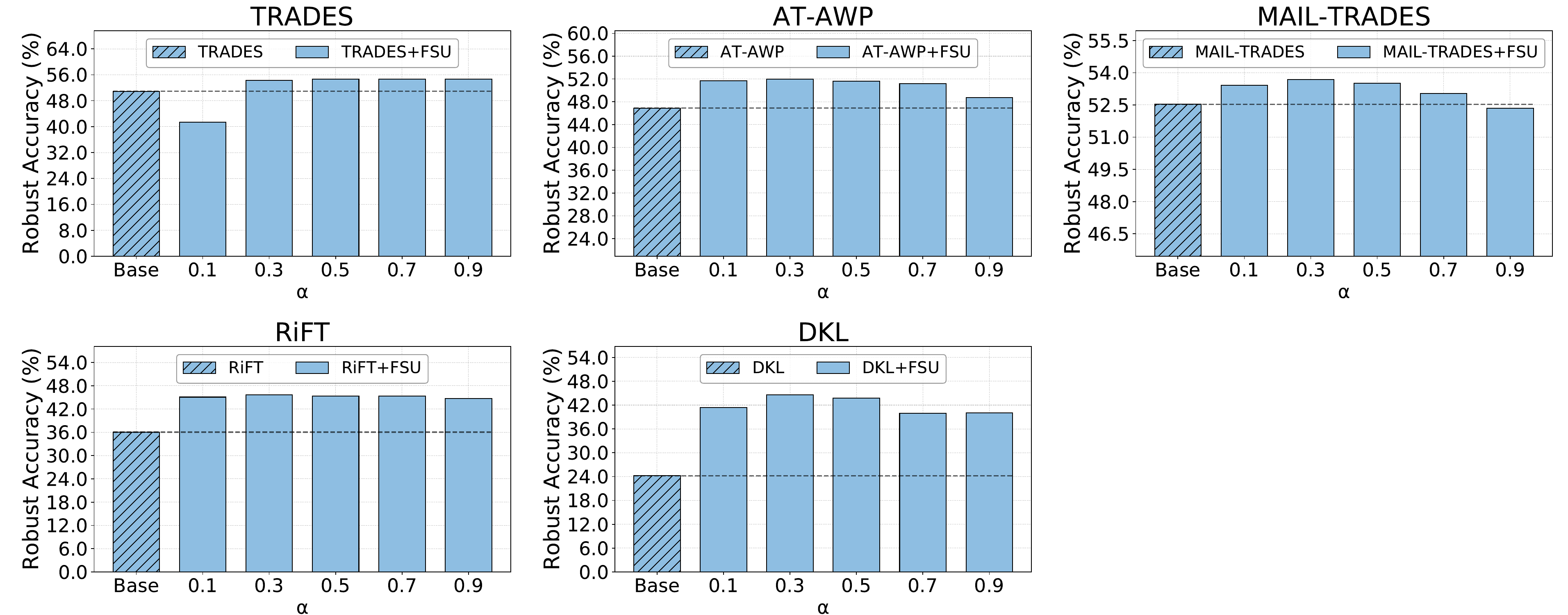}}\quad
\subfloat[\scriptsize SVHN - CW$_2$ Attack ($\epsilon=8/255$, iterations $50$, learning rate $0.01$)]{\includegraphics[width=0.49\linewidth]{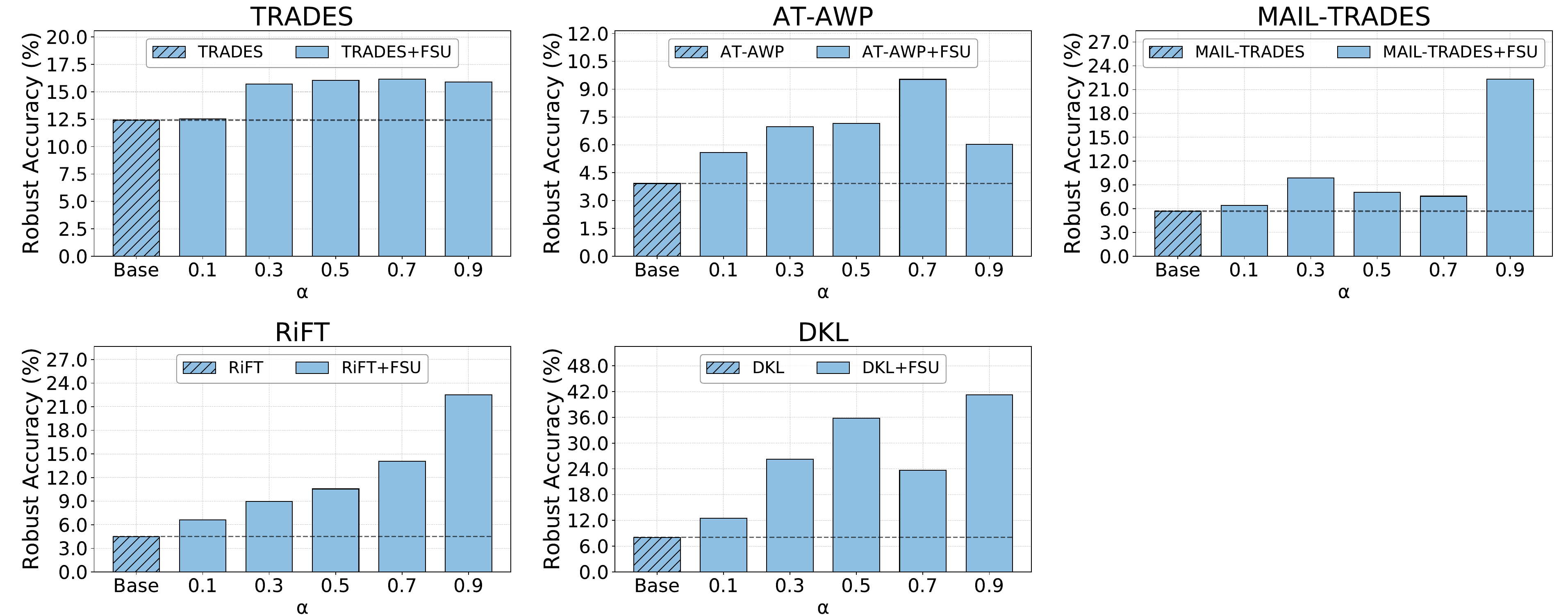}}
\caption{Hyper-parameter sensitivity analysis on dataset SVHN.}
\label{fig.MoreSensitivity_svhn}
\end{figure*}

\begin{figure*}[h]
\centering
\subfloat[\scriptsize CIFAR100 - FGSM$_\infty$ Attack ($\epsilon=8/255$)]{\includegraphics[width=0.49\linewidth]{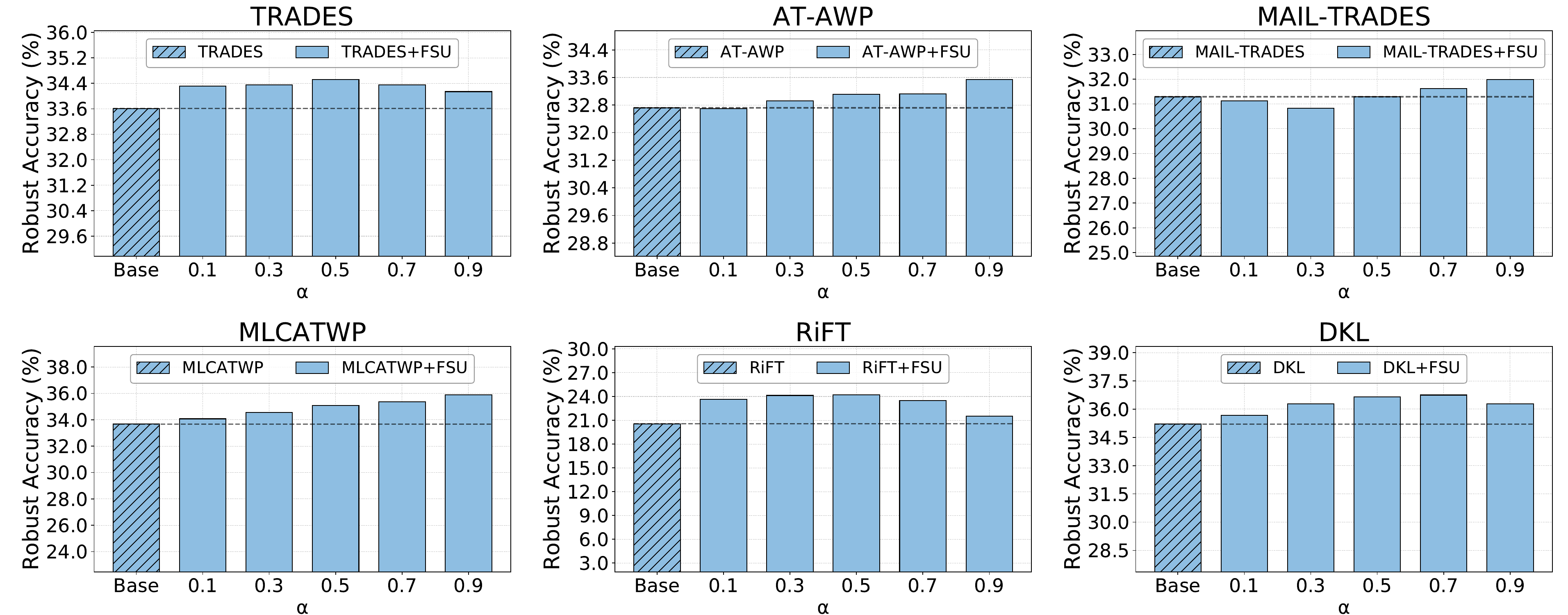}}\quad
\subfloat[\scriptsize CIFAR100 - PGD$_\infty$-100 Attack ($\epsilon=8/255$, step size $2/255$)]{\includegraphics[width=0.49\linewidth]{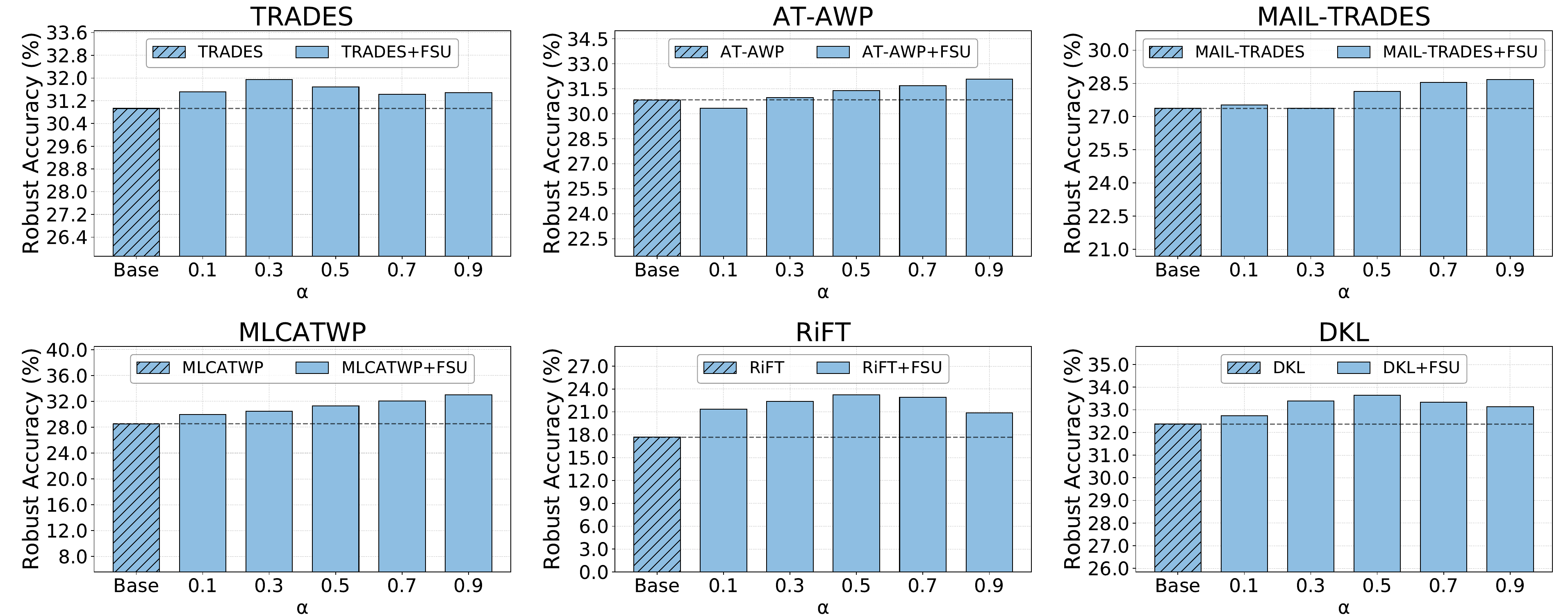}}\\
\subfloat[\scriptsize CIFAR100 - AA$_\infty$ Attack ($\epsilon=8/255$)]{\includegraphics[width=0.49\linewidth]{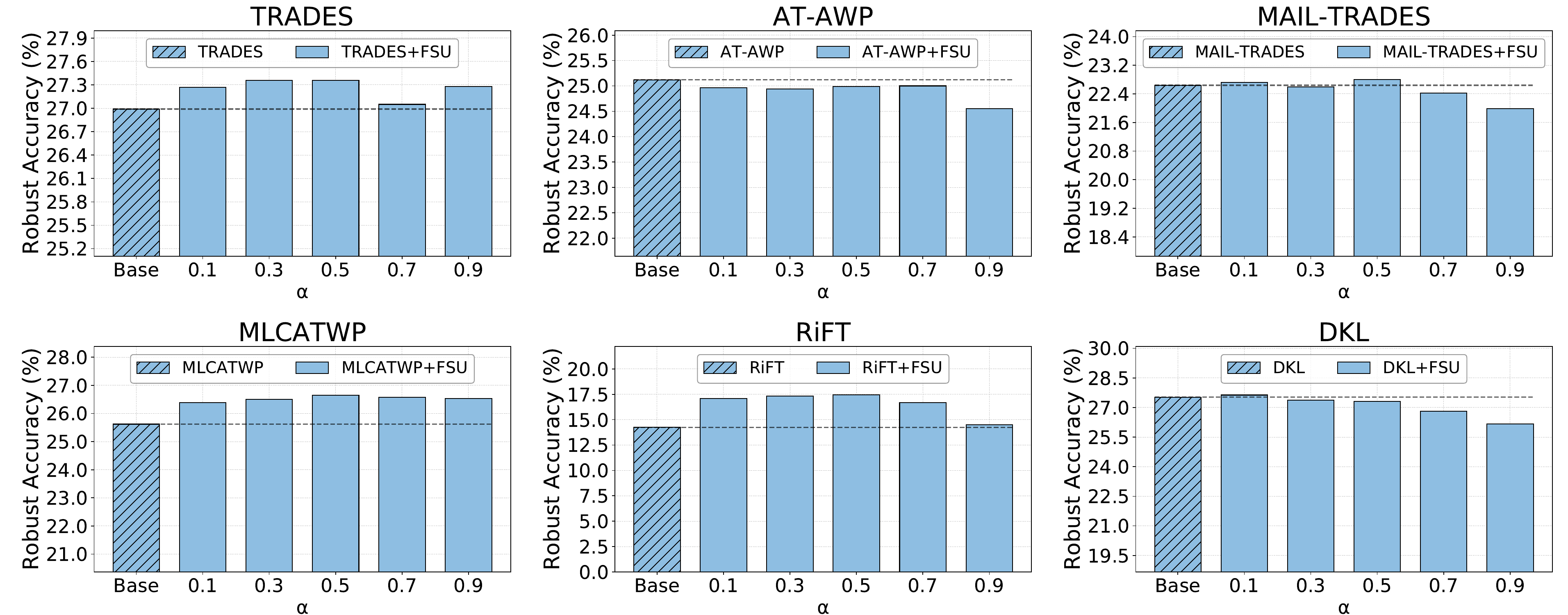}}\quad
\subfloat[\scriptsize CIFAR100 - CW$_2$ Attack ($\epsilon=8/255$, iterations $50$, learning rate $0.01$)]{\includegraphics[width=0.49\linewidth]{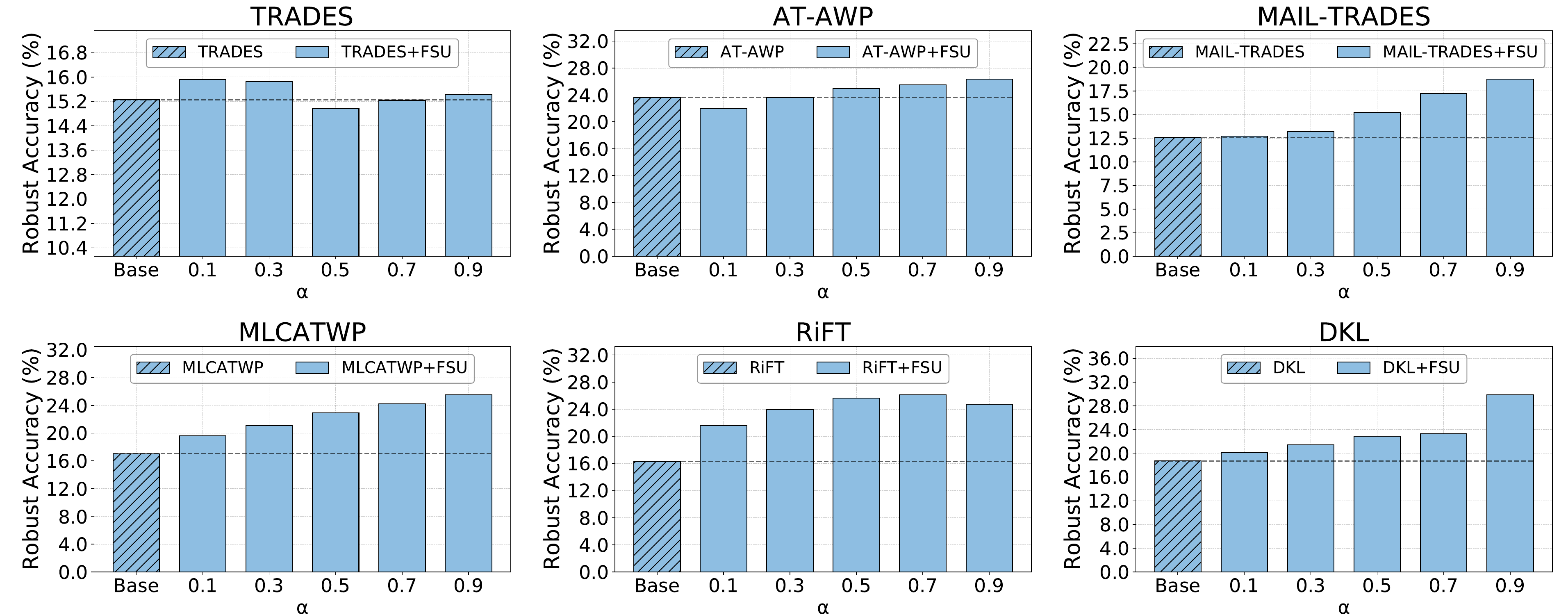}}
\caption{Hyper-parameter sensitivity analysis on dataset CIFAR100.}
\label{fig.MoreSensitivity_cifar100}
\end{figure*}

\newpage
\subsection{More Calibration Figures for Distribution Shift}\label{append.calibration}

\begin{figure*}[h]
\centering
\subfloat[\scriptsize Mean Shift (Baseline-Train)]{\includegraphics[width=0.245\linewidth]{non-mnist-pgd-train-mean.pdf}}\
\subfloat[\scriptsize Mean Shift (Baseline-Test)]{\includegraphics[width=0.245\linewidth]{non-mnist-pgd-test-mean.pdf}}\
\subfloat[\scriptsize Std Shift (Baseline-Train)]{\includegraphics[width=0.245\linewidth]{non-mnist-pgd-train-std.pdf}}\
\subfloat[\scriptsize Std Shift (Baseline-Test)]{\includegraphics[width=0.245\linewidth]{non-mnist-pgd-test-std.pdf}}\\
\subfloat[\scriptsize Mean Shift (FSU-Train)]{\includegraphics[width=0.245\linewidth]{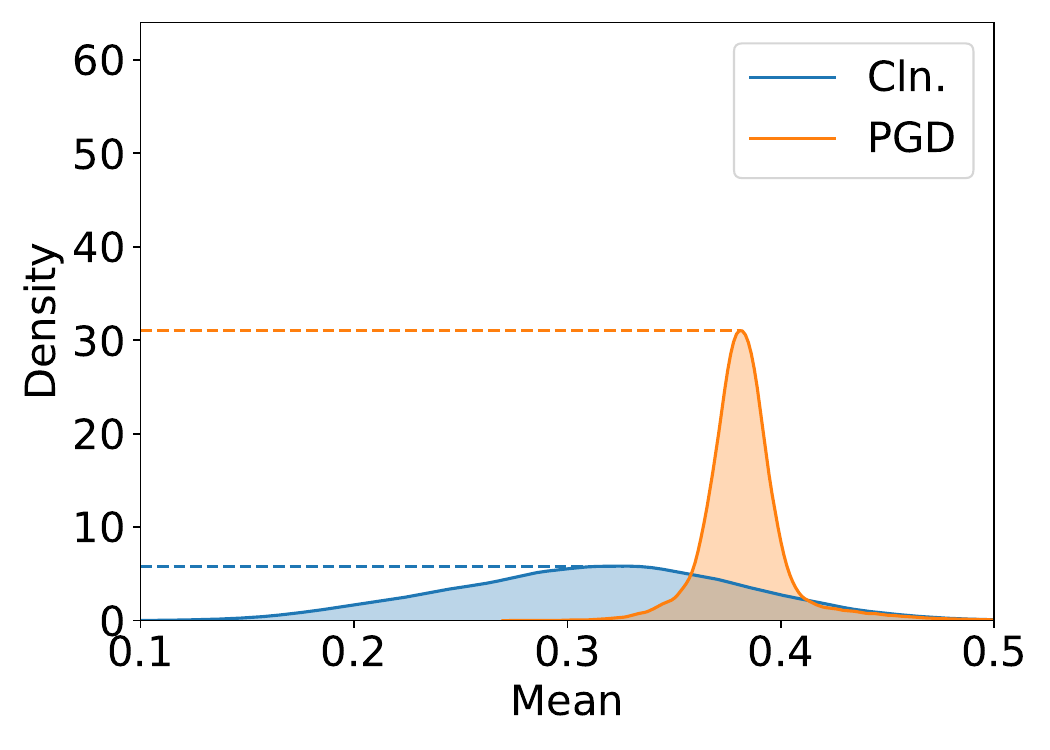}}\
\subfloat[\scriptsize Mean Shift (FSU-Test)]{\includegraphics[width=0.245\linewidth]{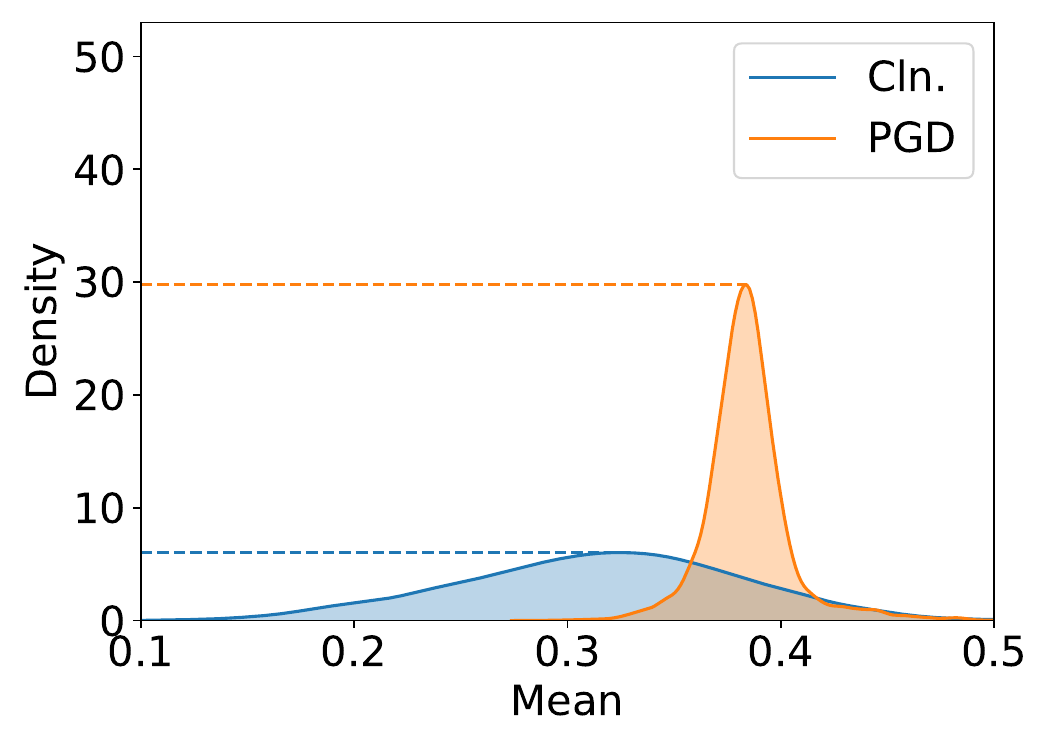}}\
\subfloat[\scriptsize Std Shift (FSU-Train)]{\includegraphics[width=0.245\linewidth]{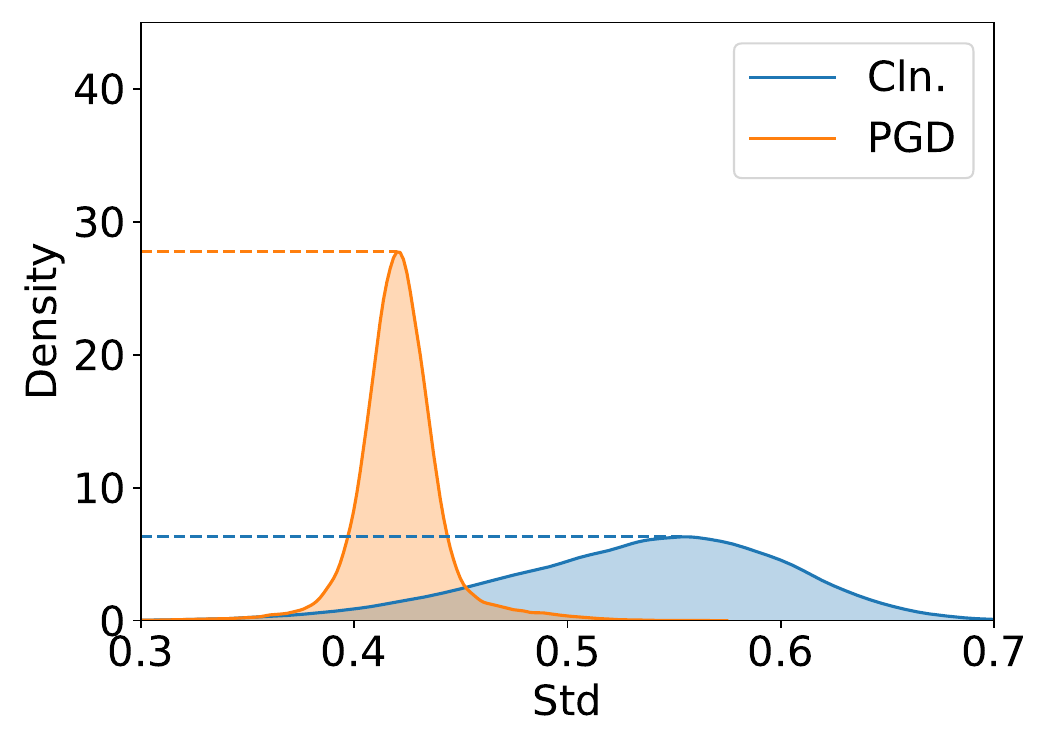}}\
\subfloat[\scriptsize Std Shift (FSU-Test)]{\includegraphics[width=0.245\linewidth]{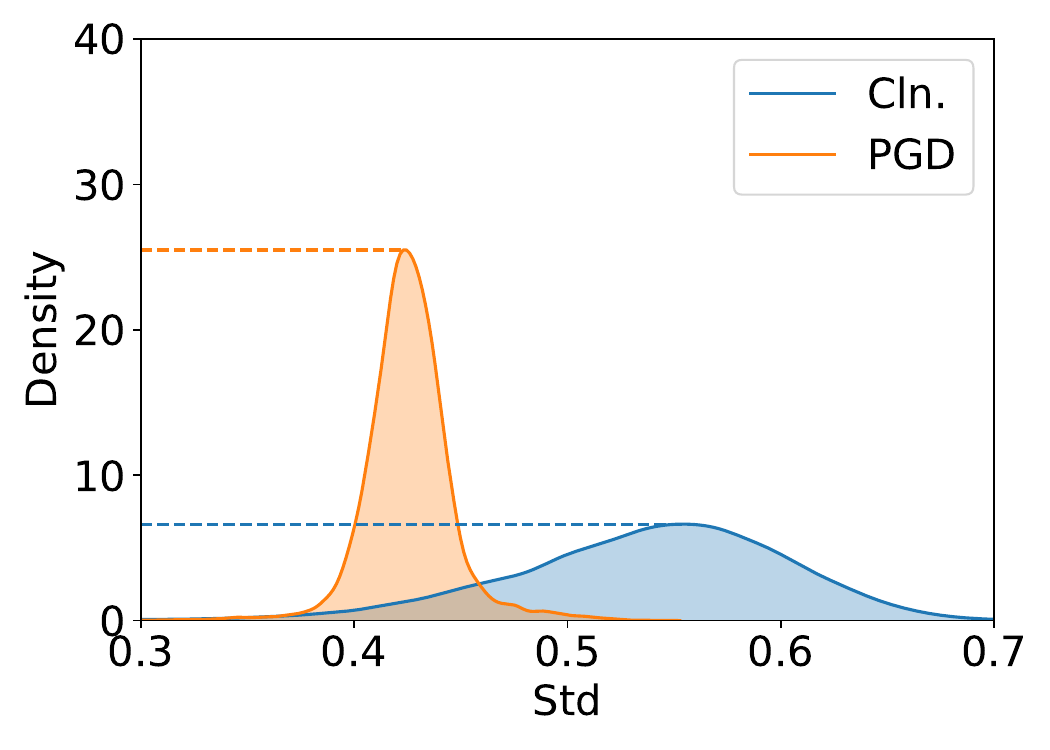}}
\caption{Calibration for distribution shifts of feature statistics in dataset MNIST (average of 84 channels in latent space, LeNet-5 model). ``Cln." and ``PGD" represent clean examples and examples perturbed by PGD-40 attack, respectively. ``Baseline" and ``FSU" mean the naturally trained models without and with the FSU module, respectively. ``Train" and ``Test" represent training data and testing data, respectively.}
\label{fig.featureDisMnist}
\end{figure*}

\begin{figure*}[h]
\centering
\subfloat[\scriptsize Mean Shift (Baseline-Train)]{\includegraphics[width=0.245\linewidth]{non-svhn-pgd-train-mean.pdf}}\
\subfloat[\scriptsize Mean Shift (Baseline-Test)]{\includegraphics[width=0.245\linewidth]{non-svhn-pgd-test-mean.pdf}}\
\subfloat[\scriptsize Std Shift (Baseline-Train)]{\includegraphics[width=0.245\linewidth]{non-svhn-pgd-train-std.pdf}}\
\subfloat[\scriptsize Std Shift (Baseline-Test)]{\includegraphics[width=0.245\linewidth]{non-svhn-pgd-test-std.pdf}}\\
\subfloat[\scriptsize Mean Shift (FSU-Train)]{\includegraphics[width=0.245\linewidth]{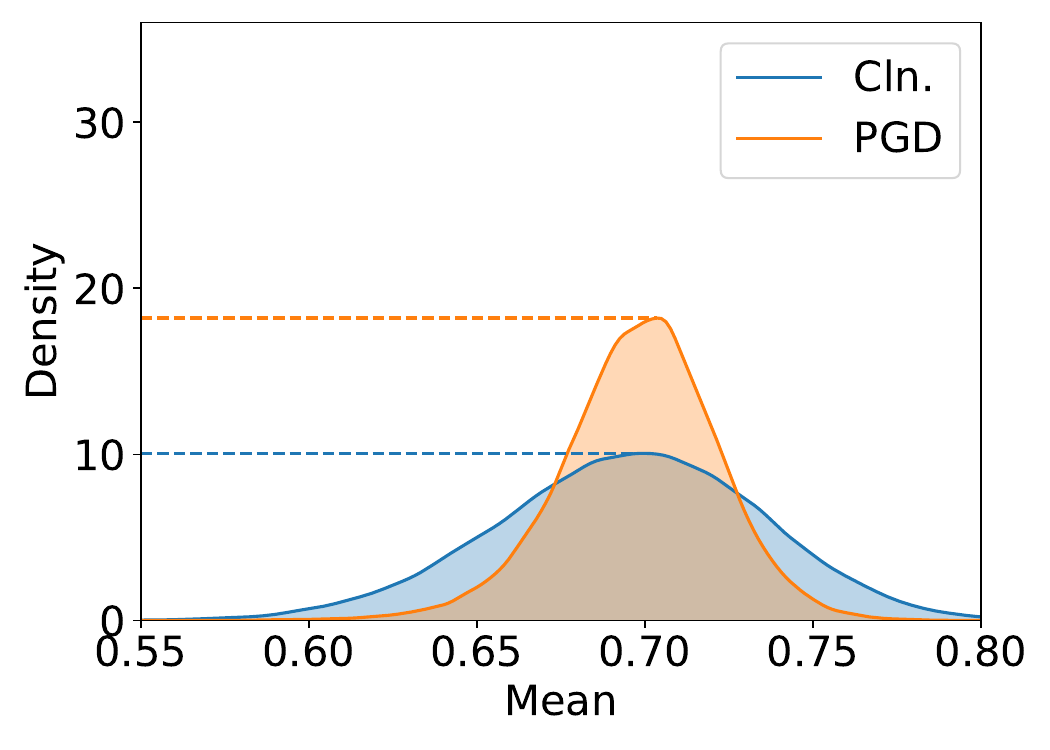}}\
\subfloat[\scriptsize Mean Shift (FSU-Test)]{\includegraphics[width=0.245\linewidth]{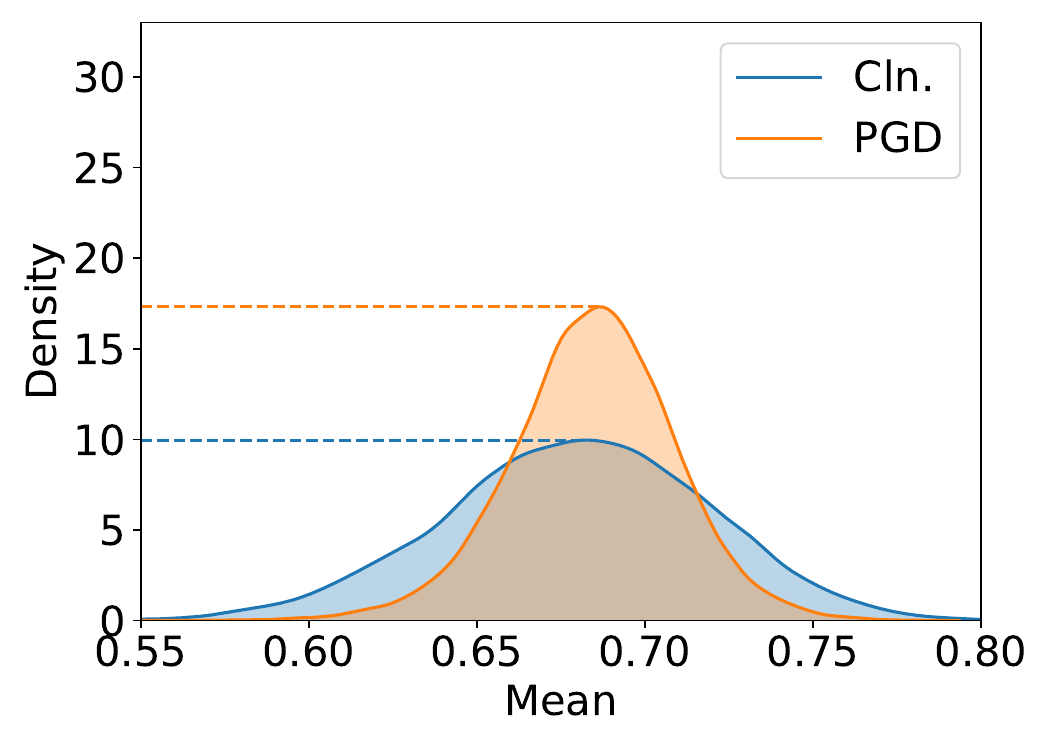}}\
\subfloat[\scriptsize Std Shift (FSU-Train)]{\includegraphics[width=0.245\linewidth]{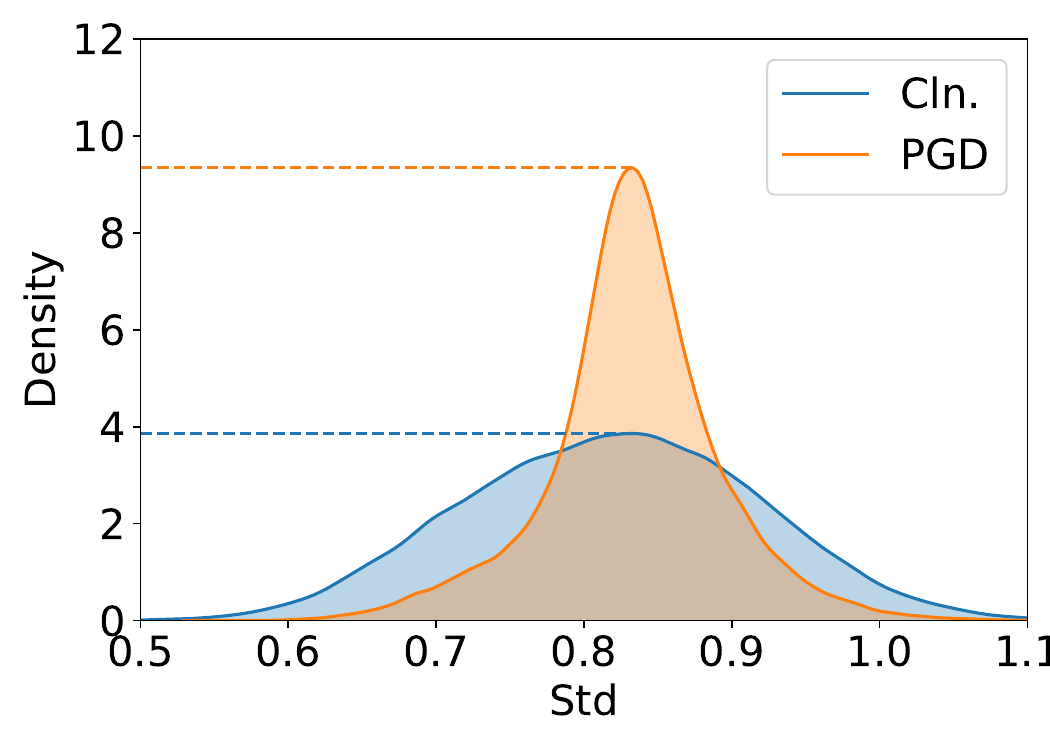}}\
\subfloat[\scriptsize Std Shift (FSU-Test)]{\includegraphics[width=0.245\linewidth]{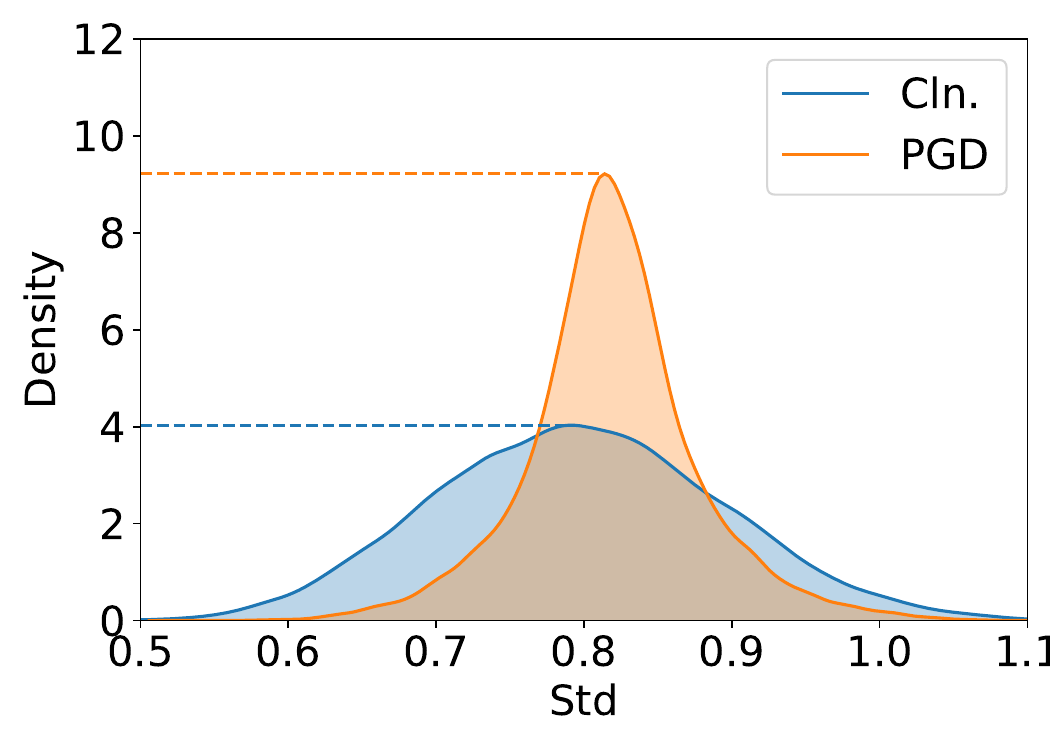}}
\caption{Calibration for distribution shifts of feature statistics in dataset SVHN (average of 512 channels in latent space, ResNet-18 model). ``Cln." and ``PGD" represent clean examples and examples perturbed by PGD-10 attack, respectively. ``Baseline" and ``FSU" mean the naturally trained models without and with the FSU module, respectively. ``Train" and ``Test" represent training data and testing data, respectively.}
\label{fig.featureDisSvhn}
\end{figure*}

\newpage
\subsection{More Feature Map Visualizations}\label{append.visual}

Figs.~\ref{fig.featureMapMNIST1PGD}$\sim$\ref{fig.featureMapCIFAR10FGSM} provide more details of the feature maps of randomly selected testing examples perturbed by PGD and  FGSM. Note that the feature maps are extracted from the layers corresponding to the reconstruction positions. It can be seen that the FSU module does help the perturbed examples recover some feature points that posses important semantic information. Especially, from different attacking iterations of PGD in Figs.~\ref{fig.featureMapMNIST1PGD}$\sim$\ref{fig.featureMapMNIST2PGD}, we can see that the adversarial perturbations become stronger and stronger without the FSU module, while with the FSU module, the perturbations have been mitigated to a great extent. Furthermore, from Figs.~\ref{fig.featureMapMNISTFGSM}$\sim$\ref{fig.featureMapCIFAR10FGSM}, we can see that both the baseline model and the FSU model can provide high-quality feature maps for the clean examples, while under FGSM attack, the feature quality provided by the FSU model is much higher than that of the baseline model.

\begin{figure*}[h]
\centering
\subfloat[Cln.]{\includegraphics[width=0.1\linewidth]{sample0.png}}
\subfloat[Ep 5]{\includegraphics[width=0.1\linewidth]{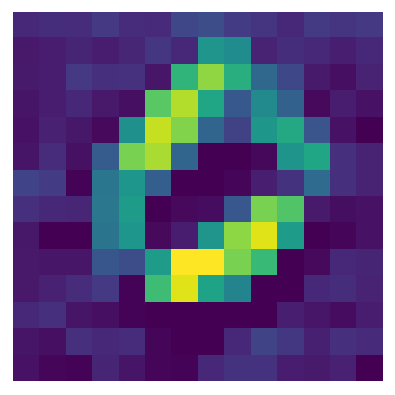}}
\subfloat[Ep 10]{\includegraphics[width=0.1\linewidth]{0-base-iter10.png}}
\subfloat[Ep 15]{\includegraphics[width=0.1\linewidth]{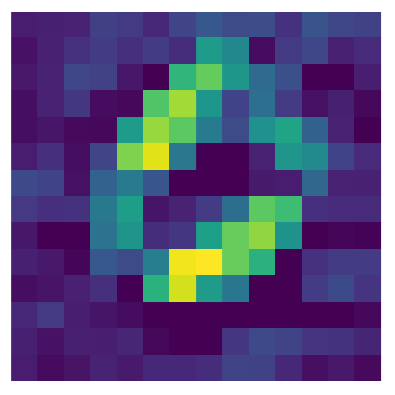}}
\subfloat[Ep 20]{\includegraphics[width=0.1\linewidth]{0-base-iter20.png}}
\subfloat[Ep 25]{\includegraphics[width=0.1\linewidth]{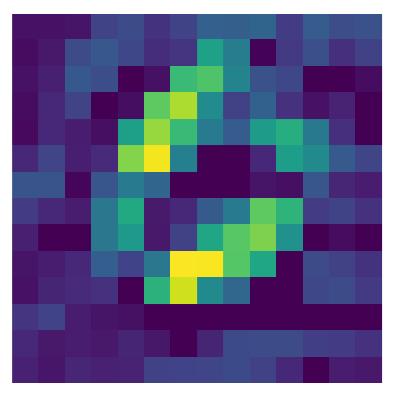}}
\subfloat[Ep 30]{\includegraphics[width=0.1\linewidth]{0-base-iter30.png}}
\subfloat[Ep 35]{\includegraphics[width=0.1\linewidth]{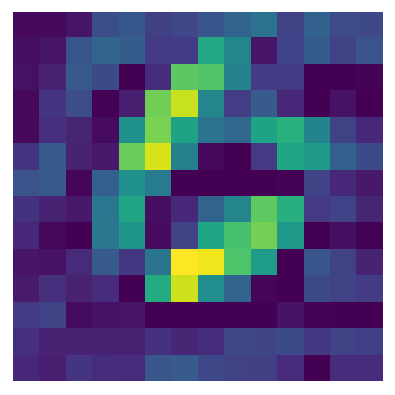}}
\subfloat[Ep 40]{\includegraphics[width=0.1\linewidth]{0-base-iter40.png}}\\
\subfloat[Cln.]{\includegraphics[width=0.1\linewidth]{sample0.png}}
\subfloat[Ep  5]{\includegraphics[width=0.1\linewidth]{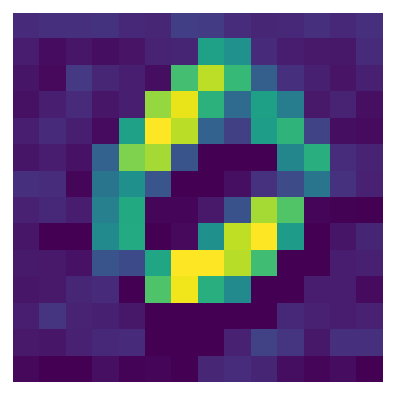}}
\subfloat[Ep 10]{\includegraphics[width=0.1\linewidth]{0-ours-iter10.png}}
\subfloat[Ep 15]{\includegraphics[width=0.1\linewidth]{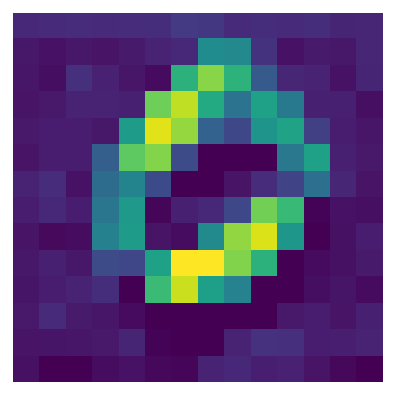}}
\subfloat[Ep 20]{\includegraphics[width=0.1\linewidth]{0-ours-iter20.png}}
\subfloat[Ep 25]{\includegraphics[width=0.1\linewidth]{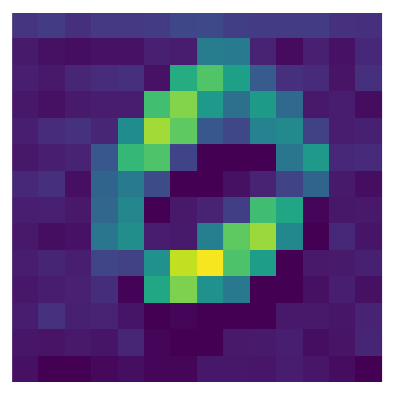}}
\subfloat[Ep 30]{\includegraphics[width=0.1\linewidth]{0-ours-iter30.png}}
\subfloat[Ep 35]{\includegraphics[width=0.1\linewidth]{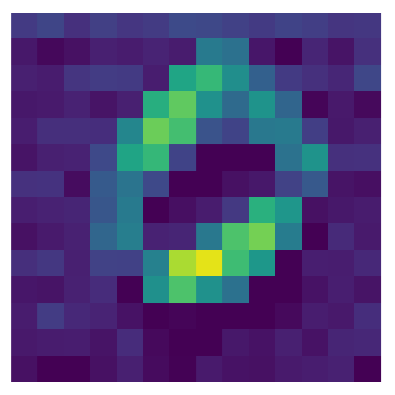}}
\subfloat[Ep 40]{\includegraphics[width=0.1\linewidth]{0-ours-iter40.png}}
\caption{Feature maps of a randomly selected channel for a randomly selected testing example in MNIST during PGD-40 attack (with LeNet-5 backbone). (a)$\sim$(i) Baseline-NT. (j)$\sim$(r) FSU-NT. (Ep  means Epoch.)}
\label{fig.featureMapMNIST1PGD}
\end{figure*}

\begin{figure*}[h]
\centering
\subfloat[Cln.]{\includegraphics[width=0.1\linewidth]{sample2.png}}
\subfloat[Ep 5]{\includegraphics[width=0.1\linewidth]{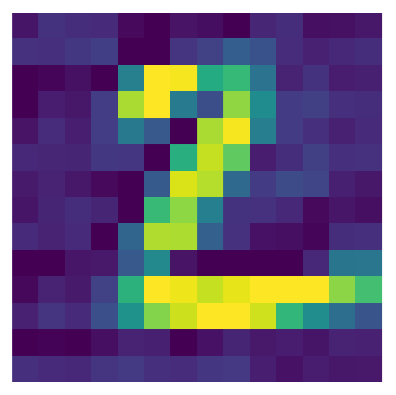}}
\subfloat[Ep 10]{\includegraphics[width=0.1\linewidth]{2-base-iter10.png}}
\subfloat[Ep 15]{\includegraphics[width=0.1\linewidth]{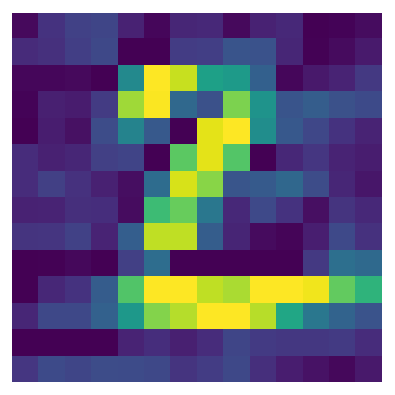}}
\subfloat[Ep 20]{\includegraphics[width=0.1\linewidth]{2-base-iter20.png}}
\subfloat[Ep 25]{\includegraphics[width=0.1\linewidth]{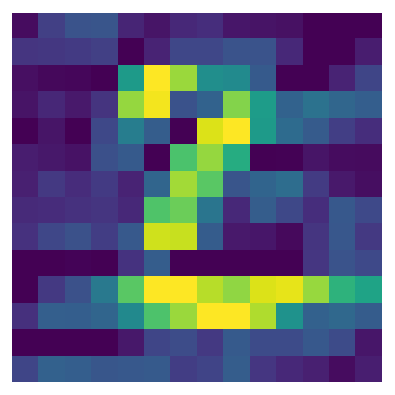}}
\subfloat[Ep 30]{\includegraphics[width=0.1\linewidth]{2-base-iter30.png}}
\subfloat[Ep 35]{\includegraphics[width=0.1\linewidth]{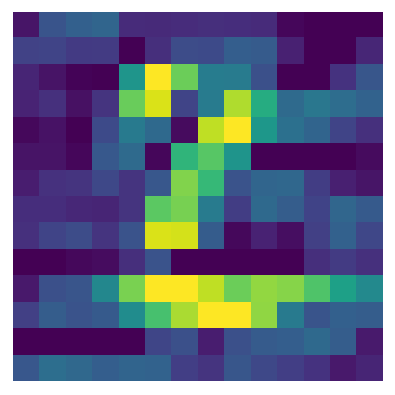}}
\subfloat[Ep 40]{\includegraphics[width=0.1\linewidth]{2-base-iter40.png}}\\
\subfloat[Cln.]{\includegraphics[width=0.1\linewidth]{sample2.png}}
\subfloat[Ep 5]{\includegraphics[width=0.1\linewidth]{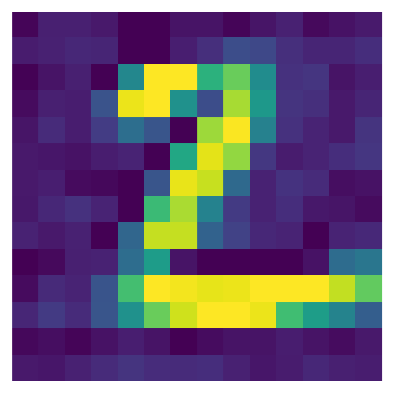}}
\subfloat[Ep 10]{\includegraphics[width=0.1\linewidth]{2-ours-iter10.png}}
\subfloat[Ep 15]{\includegraphics[width=0.1\linewidth]{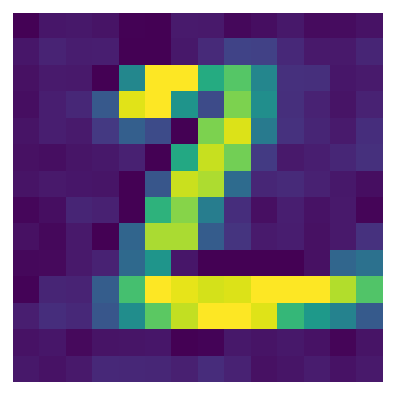}}
\subfloat[Ep 20]{\includegraphics[width=0.1\linewidth]{2-ours-iter20.png}}
\subfloat[Ep 25]{\includegraphics[width=0.1\linewidth]{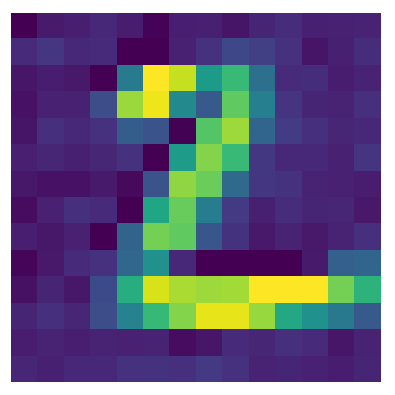}}
\subfloat[Ep 30]{\includegraphics[width=0.1\linewidth]{2-ours-iter30.png}}
\subfloat[Ep 35]{\includegraphics[width=0.1\linewidth]{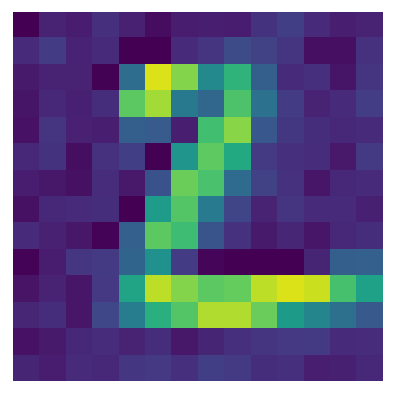}}
\subfloat[Ep 40]{\includegraphics[width=0.1\linewidth]{2-ours-iter40.png}}
\caption{Feature maps of a randomly selected channel for a randomly selected testing example in MNIST during PGD-40 attack (with LeNet-5 backbone). (a)$\sim$(i) Baseline-NT. (j)$\sim$(r) FSU-NT. (Ep  means Epoch.)}
\label{fig.featureMapMNIST2PGD}
\end{figure*}

\begin{figure*}[h]
\centering
\subfloat[Cln: Baseline-NT]{\includegraphics[width=0.22\linewidth]{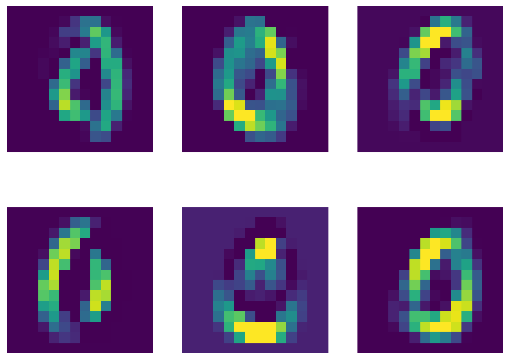}}\quad
\subfloat[Cln: FSU-NT]{\includegraphics[width=0.22\linewidth]{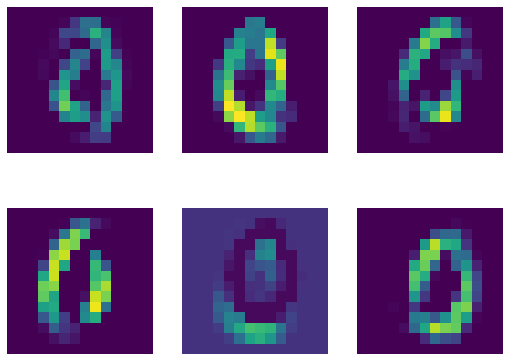}}\quad\quad
\subfloat[FGSM: Baseline-NT]{\includegraphics[width=0.22\linewidth]{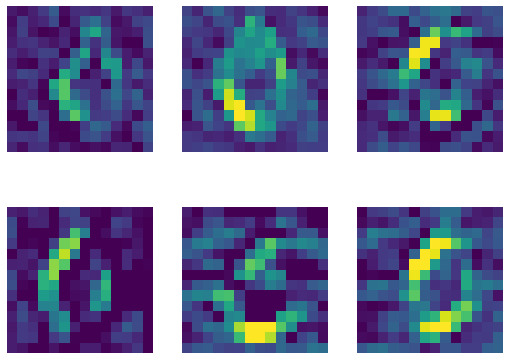}}\quad
\subfloat[FGSM: FSU-NT]{\includegraphics[width=0.22\linewidth]{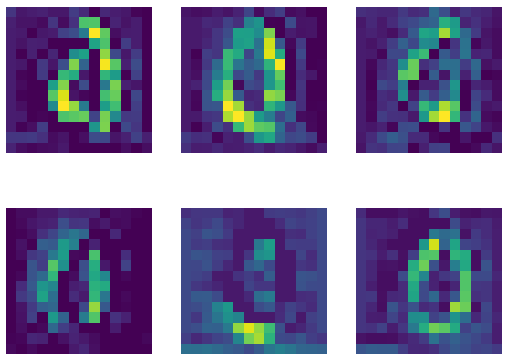}}\\
\subfloat[Cln: Baseline-NT]{\includegraphics[width=0.22\linewidth]{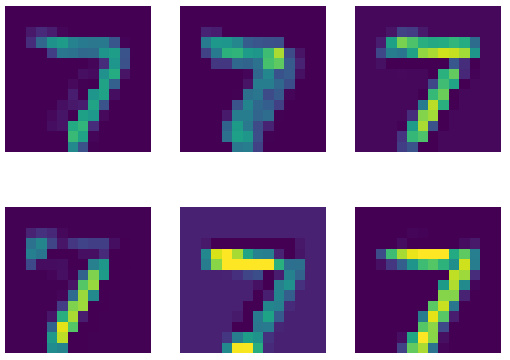}}\quad
\subfloat[Cln: FSU-NT]{\includegraphics[width=0.22\linewidth]{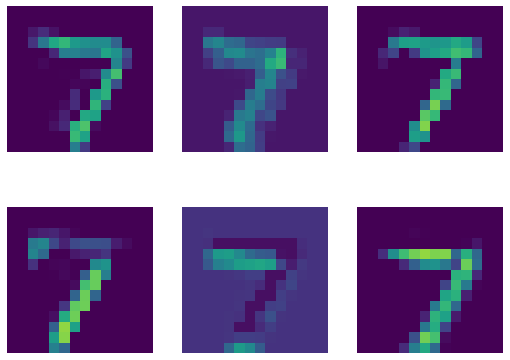}}\quad\quad
\subfloat[FGSM: Baseline-NT]{\includegraphics[width=0.22\linewidth]{adv_origin_7_3.png}}\quad
\subfloat[FGSM: FSU-NT]{\includegraphics[width=0.22\linewidth]{adv_dsu_7.png}}\\
\subfloat[Cln: Baseline-NT]{\includegraphics[width=0.22\linewidth]{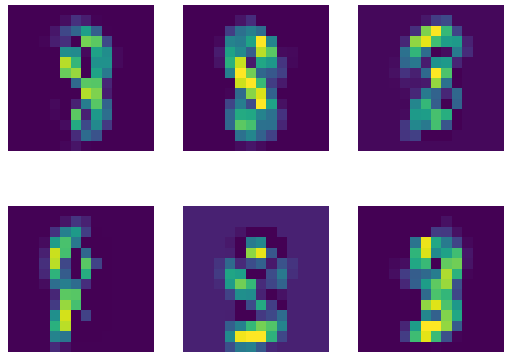}}\quad
\subfloat[Cln: FSU-NT]{\includegraphics[width=0.22\linewidth]{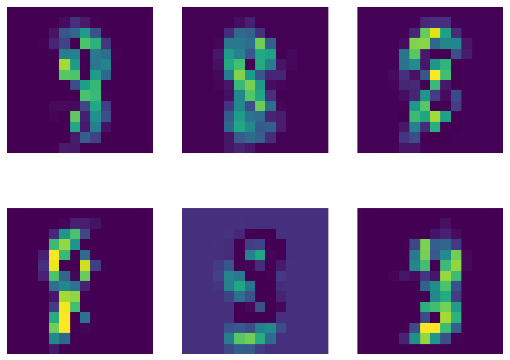}}\quad\quad
\subfloat[FGSM: Baseline-NT]{\includegraphics[width=0.22\linewidth]{adv_origin_8_3.png}}\quad
\subfloat[FGSM: FSU-NT]{\includegraphics[width=0.22\linewidth]{adv_dsu_8.png}}
\caption{Feature maps of six channels for randomly selected testing examples in MNIST (with LeNet-5 backbone).}
\label{fig.featureMapMNISTFGSM}
\end{figure*}

\begin{figure*}[h]
\centering
\subfloat[FGSM: Baseline-NT]{\includegraphics[width=0.3\linewidth]{adv_origin_car1.png}}\quad\quad
\subfloat[FGSM: Baseline-NT]{\includegraphics[width=0.3\linewidth]{adv_origin_car2.png}}\quad\quad
\subfloat[FGSM: Baseline-NT]{\includegraphics[width=0.3\linewidth]{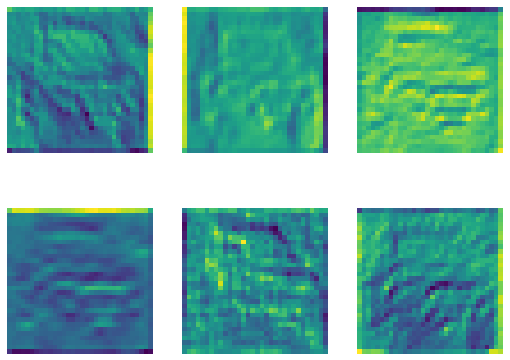}}\quad\quad
\subfloat[FGSM: FSU-NT]{\includegraphics[width=0.3\linewidth]{adv_dsu_car1.png}}\quad\quad
\subfloat[FGSM: FSU-NT]{\includegraphics[width=0.3\linewidth]{adv_dsu_car2.png}}\quad\quad
\subfloat[FGSM: FSU-NT]{\includegraphics[width=0.3\linewidth]{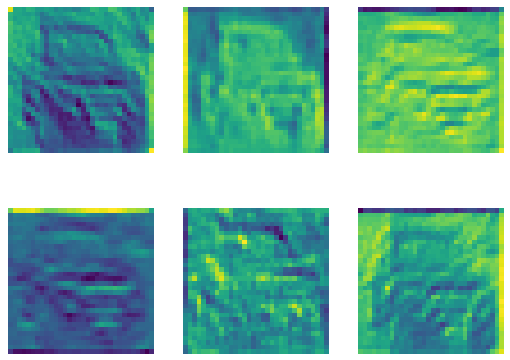}}
\caption{Feature maps of six channels for randomly selected testing examples in CIFAR10 (with ResNet-18 backbone).}
\label{fig.featureMapCIFAR10FGSM}
\end{figure*}

\end{document}